\documentclass{article}




\usepackage[final]{neurips_2023}


\usepackage[utf8]{inputenc} 
\usepackage[T1]{fontenc}    
\usepackage{hyperref}       
\usepackage{url}            
\usepackage{booktabs}       
\usepackage{amsfonts}       
\usepackage{nicefrac}       
\usepackage{microtype}      
\usepackage{makecell}
\usepackage[table]{xcolor}
\usepackage{mathnotation}
\usepackage{subfigure}
\usepackage{adjustbox}
\usepackage[capitalize,noabbrev]{cleveref}
\usepackage{algorithm}
\usepackage{algpseudocode}
\usepackage{enumitem}

\newcommand{\alg}{{\texttt{FedSupLinUCB}} }
\newcommand{\algg}{{\texttt{FedSupLinUCB}}}
\newcommand{\ssf}[1]{\textrm{$\sf{#1}$}{}}
\newcommand{\mypara}[1]{\vspace{3pt} \noindent \textbf{#1} \hspace{0.05in}}
\DeclareMathOperator{\enc}{Encoder}
\DeclareMathOperator{\sync}{Sync}

\ifodd 1

\else

\fi

\ifodd 1
\newcommand{\congc}[1]{{\color{red}(Cong: #1)}}
\else
\newcommand{\congc}[1]{}
\fi

\title{Federated Linear Bandits\\ with Finite Adversarial Actions}

%

\author{
  Li Fan\\
  University of Virginia\\
  \texttt{ lf2by@virginia.edu} \\
  \And
  Ruida Zhou\\
  Texas A\&M University \\
  \texttt{ruida@tamu.edu} \\
  \AND
  Chao Tian \\
  Texas A\&M University \\
  \texttt{chao.tian@tamu.edu} \\
  \And
  Cong Shen \\
  University of Virginia \\
  \texttt{cong@virginia.edu}
}

\begin{document}

\maketitle

\begin{abstract}
We study a federated linear bandits model, where $M$ clients communicate with a central server to solve a linear contextual bandits problem with finite adversarial action sets that may be different across clients. To address the unique challenges of \emph{adversarial finite} action sets, we propose the \alg algorithm, which extends the principles of SupLinUCB and OFUL algorithms in linear contextual bandits. We prove that \alg achieves a total regret of $\tilde{O}(\sqrt{d T})$, where $T$ is the total number of arm pulls from all clients, and $d$ is the ambient dimension of the linear model. This matches the minimax lower bound and thus is order-optimal (up to polylog terms). We study both asynchronous and synchronous cases and show that the communication cost can be controlled as $O(d M^2 \log(d)\log(T))$ and $O(\sqrt{d^3 M^3} \log(d))$, respectively. The \alg design is further extended to two scenarios: (1) variance-adaptive, where a total regret of $\tilde{O} (\sqrt{d \sum \nolimits_{t=1}^{T} \sigma_t^2})$ can be achieved with $\sigma_t^2$ being the noise variance of round $t$; and (2) adversarial corruption, where a total regret of $\tilde{O}(\sqrt{dT} + d C_p)$ can be achieved with $C_p$ being the total corruption budget. Experiment results corroborate the theoretical analysis and demonstrate the effectiveness of \alg on both synthetic and real-world datasets.
\end{abstract}

\section{Introduction}
In the canonical formulation of contextual bandits, a single player would repeatedly make arm-pulling decisions based on contextual information with the goal of maximizing the long-term reward. With the emerging \emph{federated learning} paradigm \citep{mcmahan2017fl} where multiple clients and a server jointly learn a global model with each client locally updating the model with its own data and server only aggregating the local models periodically, researchers have started exploring contextual bandits algorithms in such federated learning setting \citep{dubey2020differentially,huang2021federated,li2022asynchronous,li2022communication}. This federated contextual bandits framework broadens the applicability of contextual bandits to practical scenarios such as recommender systems, clinical trials, and cognitive radio. In these applications,  although the goal is still to maximize the cumulative reward for the overall system, decision-making and observations are naturally distributed at the participating clients.

Several intrinsic challenges arise with the federated contextual bandit formulation. One important issue is that besides regret, we should also take into account the communication cost, which is usually the system bottleneck. To reduce the communication cost while maintaining the same regret guarantee, the clients should transmit the necessary information to the server only when the local information has accumulated to the extent that it would affect the decision-making. Compared with the centralized contextual bandits, which have a linearly growing communication cost, algorithms for federated contextual bandits attempt to achieve a comparable regret with sub-linear communication cost. 

Second, most existing studies on federated contextual bandits focus on the synchronous communication scenario \citep{huang2021federated,li2022communication}, in which all participating clients first upload local information and then download updated global information from the server in each communication round. This stringent communication requirement is often not met in practice. A recent work of \citet{li2022asynchronous} studies the asynchronous federated linear bandit problem. However, communications for different clients are not independent in their approach because the upload from one client may trigger the server to perform downloads for all clients. To address this issue, \citet{he2022simple} proposes FedLinUCB, which enables independent synchronizations between clients and the server.

Third, the majority of prior studies on federated linear bandits focused on the infinite-arm setting \citep{li2022communication,li2022asynchronous,he2022simple} (see \cref{appd:related} for a detailed literature review). From a methodology point of view, these papers largely build on the OFUL principle \citep{abbasi2011improved}. One notable exception is \citet{huang2021federated}, which studies synchronous communication with fixed contexts and proposes the Fed-PE algorithm based on the phased elimination G-optimal design \citep{lattimore2020bandit}. To the best of our knowledge, no prior result exists for federated linear bandits with finite arms and time-evolving adversarial contexts, which is the focus of our work. 

\begin{table*}[tbh]
\vspace{-0.15in}
\caption{Comparison of this paper with related works}
\label{tab:regret_compare}
\begin{center}
\adjustbox{max width= \linewidth}{
\begin{tabular}{ c|c|c|c|c }
\hline
\textbf{System} & \textbf{Action} & \textbf{Algorithm} & \textbf{Regret} & \textbf{Communication}  \\ \hline
Single-player & infinite arm &   {OFUL \citep{abbasi2011improved}} & $d\sqrt{T \log T}$ & N/A  \\ \hline
Single-player & finite fixed arm& {\makecell{PE + G-optimal \\ \citep{lattimore2020bandit} }} & $O(\sqrt{d T \log T})$ & N/A \\ \hline
Single-player & finite adversarial arm& {SupLinUCB \citep{chu2011contextual} }& $O(\sqrt{d T \log^3 T})$ & N/A \\ \hline
{Federated  (Async) }& infinite arm& {FedLinUCB \citep{he2022simple} }& $O(d\sqrt{T}\log T)$ & $O(dM^2\log T)$ \\ \hline
{Federated  (Async) }& infinite arm& {Async-LinUCB \citep{li2022asynchronous} }& $O(d\sqrt{T}\log T)$ & $O(dM^2\log T)$ \\ \hline
{Federated  (Sync) }& infinite arm& {DisLinUCB  \citep{wang2019distributed} } & $O(d\sqrt{T}\log^2 T)$ & $O(dM^{3/2})$ \\ \hline
{Federated  (Sync) } & finite fixed arm& {Fed-PE  \citep{huang2021federated} }& $O(\sqrt{d T \log T})$ & $O(d^2 M K\log T)$ \\ \hline
\rowcolor[gray]{0.9} {Federated  (Async)} & finite adversarial arm & {FedSupLinUCB (This work) }& $O(\sqrt{d T \log^3 T})$ & $O(d M^2 \log d \log T )$ \\ \hline
\rowcolor[gray]{0.9} {Federated  (Sync)} & finite adversarial arm& {FedSupLinUCB   (This work) }& $O(\sqrt{d T \log^3 T})$ & $O(d^{3/2} M^{3/2} \log(d)) $ \\ \hline
\end{tabular}
}
\end{center}
\begin{center}
{\scriptsize $d$: the dimension of the unknown parameter, $M$: the number of clients, $K$: the number of finite actions, $T$: the total arm pulls from all clients.}
\end{center}
\vspace{-0.15in} 
\end{table*}

\paragraph{Main contributions.} Our main contributions are summarized as follows.
\begin{itemize}[leftmargin=*]\itemsep=0pt
\item We develop a general federated bandits framework, termed \algg, for solving the problem of federated linear contextual bandits with finite adversarial actions. \alg extends SupLinUCB \citep{chu2011contextual,ruan2021linear} and OFUL \citep{abbasi2011improved}, two important principles in (single-player, centralized) linear bandits, to the federated bandits setting with a carefully designed layered successive screening. 
\item We instantiate \alg with both asynchronous and synchronous client activities. For the former setting, we propose Async-\alg where communication is triggered only when the cumulative local information impacts the exploration uncertainty to a certain extent. We prove that Async-\alg achieves $\tilde{O}(\sqrt{dT})$ regret with $O( d M^2\log d \log T)$ communication cost, which not only reduces the regret by $\sqrt{d}$ compared with previous results on asynchronous federated linear bandits with infinite arms, but also matches the minimax lower bound up to polylog terms, indicating that Async-\alg achieves order-optimal regret. 
\item For synchronous communications, we propose Sync-\algg, which has a refined communication design where only certain layers are communicated, as opposed to the complete information. Sync-\alg achieves order-optimal regret of $\tilde{O}(\sqrt{dT})$ with horizon-independent communication cost $O(\sqrt{d^3M^3}\log d)$. Compared with the best previous result \citep{huang2021federated} which achieves the same order-optimal regret but only for \emph{fixed} actions, we show that it is the \emph{finite} actions that fundamentally determines the regret behavior in the federated linear bandits setting. 
\item We further develop two extensions of \algg: (1) Variance-adaptive \algg, for which a total regret of $\Tilde{O}(\sqrt{d \sum \nolimits_{t=1}^{T} \sigma_t^2})$ is achieved, where $\sigma_t^2$ is the noise variance at round $t$. (2) Adversarial corruption \algg, for which a total regret of $\tilde{O}(\sqrt{dT} + d C_p)$ is achieved, where $C_p$ is the total corruption budget.
\end{itemize}

\section{Related Works}
\label{appd:related}
The linear bandit model, as a generalization of finite armed bandits with linear contextual information, has been extensively studied. The setting of infinite arm sets solved by LinUCB was analyzed in  \citep{dani2008stochastic,abbasi2011improved}, which achieves regret $\tilde{O}(d \sqrt{T})$ with appropriate confidence width \citep{abbasi2011improved} and matches the lower bound \citep{dani2008stochastic} up to logarithmic factors. In contrast, algorithms like SupLinRel \citep{auer2002using} and SupLinUCB \citep{chu2011contextual} achieve $\tilde{O}(\sqrt{dT})$ in the setting of finite time-varying adversarial arm sets under $K \ll 2^d$, with a lower bound $\Omega(\sqrt{dT})$ \citep{chu2011contextual}. The SupLinUCB algorithm was later optimized and matches the lower bound up to iterated logarithmic factors in \citet{li2019nearly}. As a special case of the finite arm setting, if the arm set is time-invariant, an elimination-based algorithm \citep{lattimore2020bandit} via G-optimal design can be applied to achieve similar optimal performance. 

The federated linear bandits problems were studied under the settings of infinite arm set \citep{dubey2020differentially, li2020federated, li2022asynchronous} and time-invariant finite arm set \citep{huang2021federated}, while the time-varying finite arm set setting has not been well explored. A finite time-varying arm set has many meaningful practical applications such as recommendation system \citep{li2010contextual,chu2011contextual}, and the distributed (federated) nature of the applications naturally falls in the federated linear bandits problem with finite time-varying arms. The paper fills this gap by generalizing the SupLinUCB algorithm to the federated setting. 

We study both the asynchronous setting \citep{li2022asynchronous} \citep{he2022simple}, where clients are active on their own and full participation is not required, and the synchronous setting \citep{shi2021federated, dubey2020differentially}, where all the clients make decisions at each round and the communication round requires all the clients to upload new information to the server and download the updated information. We design algorithms so as to reduce the communication cost while maintaining optimal regret. Technically, the communication cost is associated with the algorithmic adaptivity, since less adaptivity requires fewer updates and thus fewer communication rounds. The algorithmic adaptivity of linear bandits algorithms was studied in the single-player setting \citep{han2020sequential} \citep{ruan2021linear}.  It was also considered in the federated setting \citep{wang2019distributed,huang2021federated,salgia2023distributed}.

\section{System Model and Preliminaries}
\label{sec:model}
\subsection{Problem Formulation}
We consider a federated linear contextual bandits model with $K$ finite but possibly time-varying arms. The model consists of $M$ clients and one server in a star-shaped communication network. Clients jointly solve a linear bandit problem by collecting local information and communicating with the central server through the star-shaped network in a federated manner, with no direct communications among clients. The only function of the server is to aggregate received client information and to send back updated information to clients. It cannot directly play the bandits game.

Specifically, some clients $I_t \subseteq [M]$ are active at round $t$. Client $i \in I_t$ receives $K$ arms (actions to take) associated with contexts $\{x_{t, a}^{i} \}_{a \in [K]} \subset \mathbb{R}^d$ with $\|x_{t,a}^{i} \|_2 \leq 1$. Here we adopt the oblivious adversarial setting, where all contexts are chosen beforehand, and not dependent on previous game observation. Client $i$ then pulls an arm $a_t^i \in [K]$ based on the information collected locally as well as previously communicated from the server. A reward $r_{t,a_t^i}^{i} = \theta^{\top} x_{t,a_t^i}^{i} + \epsilon_{t}$ is revealed privately to client $i$, where $\theta \in \mathbb{R}^d$ is an unknown weight vector with $\|\theta\|_2 \leq 1$ and $\epsilon_t$ is an independent $1$-sub-Gaussian noise. At the end of round $t$, depending on the communication protocol, client $i$ may exchange the collected local information with the server so that it can update the global information.

We aim to design algorithms to guide the clients' decision-making and overall communication behaviors. We analyze two patterns of client activity. 1) \textbf{Synchronous}: all $M$ clients are active at each round. 2) \textbf{Asynchronous}: one client is active at each round. For the latter case, we further assume that client activity is independent of data and history. Denote by $T_i$ the number of times client $i$ is active. In the former case, $T_i = T_j, \forall i, j \in [M]$, while in the latter case, $T_i$ may be different among clients. We define $T = \sum_{i=1}^M T_i$ as the total number of arm pulls from all clients. 

The performance is measured under two metrics -- \emph{total regret} and \emph{communication cost}, which concern the decision-making effectiveness and the communication efficiency respectively. Denote by $P_{T}^{i}= \{t \in [T]\ |\ i \in I_t \}$ the set of time indices at which client $i$ is active, with $|P_{T}^{i}| = T_i$. The total regret is defined as 
\begin{align}
 R_T  = \sum_{i = 1}^M R_T^{i}  = \sum_{i = 1}^M \mathbb{E} \left[ \sum \nolimits_{t \in P_{T}^{i}}  r^i_{t, a_t^{i, *}}- r^i_{t, a_t^i} \right],
\end{align}
where $a_t^{i,*} = \arg \max_{a \in [K]} \theta^\top x_{t,a}^i$.  
We define the communication cost as the total number of communication rounds between clients and the server.

\subsection{Preliminaries}
\paragraph{Information encoding.} In the linear bandits setting (federated or not), the information a client acquires is usually encoded by the gram matrix and the action-reward vector. Specifically, when the client has observed $n$ action-reward pairs $\{(x_t, r_t)\}_{t = 1}^n$, the information is encoded by matrix $A_n = \sum_{t=1}^n x_t x_t^\top $ and vector $b_n = \sum_{t=1}^n r_t x_t$. Denote by $\enc(\cdot)$ this encoding function, i.e.,
$A_n, b_n \leftarrow \enc( \{x_t, r_t \}_{t = 1}^n )$. 

\paragraph{Communication criterion.} Communication in our proposed framework is data-dependent, in the same spirit as the ``doubling trick'' introduced in \citet{abbasi2011improved} to reduce the computation complexity in single-player linear bandits. The key idea is that communication is triggered only when the cumulative local information, represented by the determinant of the gram matrix $A_n$, affects the exploration uncertainty to a great extent and hence the client needs to communicate with the server. Detailed communication protocols will be presented in each algorithm design.


\paragraph{Synchronization procedure.} Denote by $\sync()$ a routine that $n$ clients (client 1, $\ldots$, client $n$) first communicate their local gram matrices and action-reward vectors to the server, and the server then aggregates the matrices (and vectors) into one gram matrix (and action-reward vector) and transmits them back to the $n$ clients. Specifically, each client $i$ holds newly observed local information $(\Delta A^i, \Delta b^i)$, which is the difference between the client's current information $(A^i, b^i)$ and the information after the last synchronization. In other words, $(\Delta A^i, \Delta b^i)$ is the information that has not been communicated to the server. The server, after receiving the local information $\{(\Delta A^i, \Delta b^i) \}_{i=1}^n$, updates the server-side information $(A^{ser}, b^{ser})$ by $ A^{ser} \leftarrow A^{ser} + \sum \nolimits_{i=1}^n \Delta A^i, b^{ser} \leftarrow b^{ser} + \sum \nolimits_{i=1}^n \Delta b^i$ and sends them back to each of the $n$ clients. Each client $i$ will then update the local information by $A^i \leftarrow A^{ser}, b^i \leftarrow b^{ser}$. The procedure is formally presented in \cref{alg:protocal-sync}.

\begin{algorithm}[!htb]
   \caption{$\sync$($s$, server, client $1$, $\ldots$ client $n$)}
   \label{alg:protocal-sync}
\begin{algorithmic}[1]
    \For{$i = 1, 2, \ldots, n$} \Comment{Client-side local information upload}
    \State Client $i$ sends the local new layer $s$ information $(\Delta  A^{i}_s, \Delta b^i_s)$ to the server
    \EndFor 
    \State Update server's layer $s$ information:  \Comment{Server-side information aggregation and distribution}
    $$ A_s^{ser} \leftarrow A_s^{ser} + \sum   \nolimits_{i=1}^n \Delta A_s^i, \quad b_s^{ser} \leftarrow b_s^{ser} + \sum \nolimits_{i=1}^n \Delta b_s^i $$   
    \State Send server information $A_s^{ser}, b_s^{ser}$ back to all clients
    \For{$i = 1, 2, \ldots, n$} 
    \State $A_s^i \leftarrow A_s^{ser}, b_s^i \leftarrow b_s^{ser},
         \Delta A_s^i \leftarrow 0, \Delta b_s^i \leftarrow 0$  \Comment{Client $i$ updates the local information}
    \EndFor
\end{algorithmic}
\end{algorithm}

\section{The \alg Framework}
In this section, we present a general framework of federated bandits for linear bandits with finite oblivious adversarial actions. Two instances (asynchronous and synchronous) of this general framework will be discussed in subsequent sections.

\paragraph{Building block: SupLinUCB.} As the name suggests, the proposed \alg framework is built upon the principle of SupLinUCB \citep{chu2011contextual,ruan2021linear}. 
The information $(A, b)$ is useful in the sense that the reward corresponding to an action $x$ can be estimated within confidence interval $x^\top \hat{\theta} \pm \alpha \|x\|_{A^{-1}}$, where $\hat{\theta} = A^{-1} b$. It is shown in \citet{abbasi2011improved} that in linear bandits (even with an infinite number of actions) with $\alpha = \tilde{O}(\sqrt{d})$, the true reward is within the confidence interval with high probability. Moreover, if the rewards in the action-reward vector $b$ are mutually independent, $\alpha$ can be reduced to $O(1)$. The former choice of $\alpha$ naturally guarantees $\tilde{O}(d\sqrt{T})$ regret. However, to achieve regret $\tilde{O}(\sqrt{dT})$, it is critical to keep $\alpha = O(1)$. This is fulfilled by the SupLinUCB algorithm \citep{chu2011contextual} and then recently improved by \citet{ruan2021linear}. The key intuition is to successively refine an action set that contains the optimal action, where the estimation precision of sets is geometrically strengthened. Specifically, the algorithm maintains $(S+1)$ layers of information pairs $\{(A_s, b_s)\}_{s = 0}^S$, and the rewards in the action-reward vectors are mutually independent, except for layer $0$. The confidence radius for each layer $s$ is $w_s = 2^{-s} d^{1.5}/\sqrt{T}$. 

\begin{algorithm}[!htb]
    \caption{\texttt{S-LUCB}}
    \label{alg:S-LUCB}
\begin{algorithmic}[1]
    \State \textbf{Initialization}: $S = \lceil \log d \rceil $, 
    $\overline{w}_{0} = d^{1.5}/\sqrt{T}$, $\overline{w}_{s} \leftarrow 2^{-s}\overline{w}_{0}, \forall s \in [1:S]$. \\
    $\alpha_{0} = 1 + \sqrt{d \ln(2M^2 T/\delta)}, \alpha_{s} \leftarrow 1 + \sqrt{2 \ln(2 K M T \ln d/\delta)}, \forall s \in [1:S]$
    \State \textbf{Input}: Client $i$ (with local information $A^i, b^i$, $\Delta A^i, \Delta b^i$), contexts set $\{x_{t,1}^i, \ldots, x_{t,K}^i\}$ 
    \State  $A_{t, s}^i \leftarrow A_s^i + \Delta A_{s}^i, b_{t, s}^i \leftarrow b_s^i + \Delta b_{s}^i \;\;$ 
    or $\;\; A_{t, s}^i \leftarrow A_s^i, b_{t, s}^i \leftarrow b_s^i$ for \texttt{lazy update} 
    \State $\hat{\theta}_{s} \leftarrow (A^i_{t, s})^{-1} b^{i}_{t, s}$, $\hat{r}_{t,s, a}^{i} = \hat{\theta}_s^\top x_{t, a}^i$, $ w_{t, s, a}^i \leftarrow \alpha_s \|x^i_{t,a}\|_{(A^i_{t, s})^{-1}}$, $\forall s \in [0 : S], \forall a \in [K]$
    \State $s \leftarrow 0$; $\mathcal{A}_{0} \leftarrow \{ a\in [K] \mid \hat{r}_{t,0, a}^{i} + w_{t,0, a}^{i} \geq \max_{a \in [K]} (\hat{r}_{t,0, a}^{i} - w_{t,0, a}^{i}) \}$   \Comment{Initial screening}
    \Repeat \Comment{Layered successive screening}
    \If{$s = S$}
    \State 	Choose action $a_t^i$ arbitrarily from $\mathcal{A}_{S}$
    \ElsIf{$w_{t,s, a}^{i} \leq \overline{w}_{s}$ for all $a \in \mathcal{A}_{s}$}
    \State
	$\mathcal{A}_{s+1} \leftarrow \{a \in \mathcal{A}_{s} \mid \hat{r}_{t,s,a}^{i} \geq    
	 \max_{a' \in \mathcal{A}_{s}} (\hat{r}_{t,s, a'}^{i} )-2 \overline{w}_{s} \}$; $s\leftarrow s+1$
    \Else
    \State $a_t^i \leftarrow \arg \max _{ \{a \in \mathcal{A}_{s},w_{t,s,a}^{i}>\overline{w}_s \}} w_{t,s,a}^{i}$
    \EndIf
    \Until{action $a_t^i$ is found}
    \State Take action $a_t^i$ and and receive reward $r_{t,a_t^i}^{i}$
    \State $\Delta A_{s}^{i} \leftarrow \Delta A_{s}^{i} + x_{t,a_t^i}^{i}x_{t,a_t^i}^{i \top}$, $\Delta b_{s}^{i} \leftarrow \Delta b_{s}^{i} + r_{t,a_t^i}^{i} x_{t,a_t^i}^{i}$ \Comment{Update local information}
    \State \textbf{Return} layer index $s$
\end{algorithmic}
\end{algorithm}

\paragraph{FedSupLinUCB.} 
\texttt{S-LUCB}, presented in \cref{alg:S-LUCB}, combines the principles of SupLinUCB  and OFUL \citep{abbasi2011improved} and is the core subroutine for \algg. We maintain $S = \lceil \log d \rceil$ information layers, and the estimation accuracy starts from $d^{1.5}/ \sqrt{T}$ of layer $0$ and halves as the layer index increases. Finally, it takes $\Theta(\log d)$ layers to reach the sufficient accuracy of $\sqrt{d/T}$ and achieves the minimax-optimal regret.

When client $i$ is active, the input parameters $(A^i, b^i)$ contain information received from the server at the last communication round, and $(\Delta A^i, \Delta b^i)$ is the new local information collected between two consecutive communication rounds. $\{x_{t,1}^i, \ldots, x_{t,K}^i\}$ is the set of contexts observed in this round. Client $i$ can estimate the unknown parameter $\theta$ either with all available information or just making a lazy update. This choice depends on the communication protocol and will be elaborated later. During the decision-making process, client $i$ first makes arm elimination at layer $0$ to help bootstrap the accuracy parameters. Then, it goes into the layered successive screening in the same manner as the SupLinUCB algorithm, where we sequentially eliminate suboptimal arms depending on their empirical means and confidence widths. After taking action $a_t^i$ and receiving the corresponding reward $r_{t,a_t^i}^i$, client $i$ updates its local information set $(\Delta A_s^i, \Delta b_s^i)$ by aggregating the context into layer $s$ in which we take the action, before returning layer $s$.

\section{Asynchronous \alg}
In the asynchronous setting, only one client is active in each round. 
Note that global synchronization and coordination are not required, and all inactive clients are idle. 

\subsection{Algorithm}

We first initialize the information for all clients and the server (gram matrix and action-reward vector) in each layer $s \in [0:S]$. We assume only one client $i_t$ is active at round $t$. It is without loss of generality since if multiple clients are active, we can queue them up and activate them in turn. More discussion of this equivalence can be found in \citet{he2022simple,li2022asynchronous}. The active client chooses the action, receives a reward, updates local information matrices of layer $s$ with a lazy update according to \texttt{S-LUCB}, and decides whether communication with the server is needed by the criterion in Line 7 of \cref{alg:AsynFedSupLinALgo}. If communication is triggered, we synchronize client $i_t$ with the server by \cref{alg:protocal-sync}.

\begin{algorithm}[!htb]
    \caption{Async-\alg}
    \label{alg:AsynFedSupLinALgo}
\begin{algorithmic}[1]
    \State \textbf{Initialization}: $T$, $C$, $S = \lceil \log d \rceil$ 
    \State $\{ A_{s}^{ser} \leftarrow I_{d}, b_{s}^{ser} \leftarrow 0 \mid s \in [0:S] \}$ \Comment{Server initialization}
    \State $\{ A_{s}^{i} \leftarrow I_{d},\Delta  A_{s}^{i} , b_{s}^{i}, \Delta b_{s}^{i} \leftarrow 0 \mid s \in [0:S], i\in [M]  \}$ \Comment{Clients initialization}
    \For{$t=1,2,\cdots,T $}
    \State Client $i_t = i$ is active, and observes $K$ contexts $\{x_{t, 1}^{i}, x_{t, 2}^{i}, \cdots, x_{t, K}^{i}\}$
    \State $s \leftarrow$ \texttt{S-LUCB} $\left( \text{client }i, \{x_{t, 1}^{i}, x_{t, 2}^{i}, \cdots, x_{t, K}^{i}\} \right)$  with lazy update
    \If{{$ \frac{\det(A_{s}^{i}+\Delta A_{s}^{i})}{ \det(A_{s}^{i}) } > (1+C) $}}
    \State $\sync$($s$, server, clients $i$) for each $s \in [0:S]$
    \EndIf 
    \EndFor
\end{algorithmic}
\end{algorithm}


\subsection{Performance Analysis}
\begin{theorem}
\label{thm:async}
For any $0<\delta<1$, if we run \cref{alg:AsynFedSupLinALgo} with $C = 1/M^2$, then with probability at least $1 - \delta$, the regret of the algorithm is bounded as  $R_T \leq \tilde{O}\left(\sqrt{d \sum_{i = 1}^MT_i}\right) = \tilde{O}\left(\sqrt{d T} \right)$. Moreover, the corresponding communication cost is bounded by $O( d M^2\log d \log T)$.
\end{theorem}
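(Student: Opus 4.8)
The plan is to adapt the layered analysis of SupLinUCB to the federated lazy‑update setting, treating the regret bound and the communication bound separately.

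\textbf{Step 1 (a high‑probability confidence event).} I would first define the ``good event'' $\mathcal{E}$ on which, for every active round $t$, client $i$, layer $s$, and arm $a$, the reward estimate is accurate to within its stated width: $|\hat r^i_{t,s,a}-\theta^\top x^i_{t,a}|\le w^i_{t,s,a}$. For layer $0$ this is the self‑normalized (OFUL) martingale bound of \citet{abbasi2011improved}, which holds for the gram matrix $A^i_{t,0}$ that client $i$ actually holds, irrespective of how the contributing rounds were selected; a union bound over the distinct client/server states -- whose number is governed by the total number of synchronizations -- produces the $\tilde O(\sqrt d)$ width $\alpha_0=1+\sqrt{d\ln(2M^2T/\delta)}$. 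For layers $s\ge1$ the key point is the defining feature of SupLinUCB's successive screening: whether a round is assigned to layer $s$ and which arm is then pulled depends only on the layer‑$s$ gram matrix (through the widths $\|x\|_{(A_s)^{-1}}$) and on estimates from strictly lower layers, never on the layer‑$s$ rewards themselves. This structure survives the lazy update because a client's decision rule reads off only gram matrices (not reward vectors) for the layer/arm choice; hence, conditioning on the layer‑$s$ context trajectory, the layer‑$s$ rewards stay conditionally independent with the correct means, and an Azuma--Hoeffding bound gives the $O(1)$‑scale width $\alpha_s=1+\sqrt{2\ln(2KMT\ln d/\delta)}$. A union bound over $s,t,a,i$ gives $\Pr[\mathcal{E}]\ge1-\delta$.

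\textbf{Step 2 (regret on $\mathcal{E}$).} On $\mathcal{E}$, an induction over layers shows that the optimal arm $a^{i,*}_t$ is never discarded -- not by the initial screening (its upper confidence bound exceeds the best lower confidence bound) nor by any refinement $\mathcal{A}_s\to\mathcal{A}_{s+1}$ (its estimate stays within $2\overline{w}_s$ of the best estimate). Consequently, if a round lands in layer $s$: for $s=0$ the instantaneous regret is at most $2w^i_{t,0,a^i_t}$; for $1\le s\le S$ the pulled arm survived the layer‑$(s-1)$ refinement, so by the confidence bounds its true reward is within $O(\overline{w}_{s-1})=O(\overline{w}_s)$ of the optimum. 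Letting $\Psi_s$ be the (global) set of rounds assigned to layer $s$, the regret splits as $O\!\big(\alpha_0\sum_{t\in\Psi_0}\|x^i_{t,a^i_t}\|_{(A^i_{t,0})^{-1}}\big)+\sum_{s=1}^{S-1}O(\overline{w}_s|\Psi_s|)+O(\overline{w}_S\,T)$; since $\overline{w}_S=2^{-S}d^{1.5}/\sqrt T\le\sqrt{d/T}$ (using $2^S\ge d$), the last term is $O(\sqrt{dT})$.

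\textbf{Step 3 (counting $\Psi_s$; the main obstacle).} For $s<S$, whenever a round is assigned to layer $s$ the pulled arm has $\|x^i_{t,a^i_t}\|_{(A^i_{t,s})^{-1}}>\overline{w}_s/\alpha_s$, so I want an elliptical‑potential bound $\sum_{t\in\Psi_s}\|x^i_{t,a^i_t}\|^2_{(A^i_{t,s})^{-1}}=\tilde O(d)$. The difficulty -- and the crux of the federated analysis -- is that the matrices $A^i_{t,s}$ are stale and not monotone in $t$ (distinct clients, delayed downloads), so the standard potential lemma does not apply directly. I would control the gap between $A^i_{t,s}$ and the fully aggregated layer‑$s$ matrix using the communication trigger $\det(A^i_s+\Delta A^i_s)/\det(A^i_s)\le1+C$ with $C=1/M^2$: this keeps every matrix actually used within a constant multiplicative spectral factor of the aggregated one, inflating the potential by only $O(1)$ (this is where the choice of $C$ is calibrated, as in \citep{wang2019distributed,he2022simple}). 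One then gets $|\Psi_s|=\tilde O(d\alpha_s^2/\overline{w}_s^2)$, hence $\overline{w}_s|\Psi_s|=\tilde O(2^s\alpha_s^2\sqrt{T/d})$ for $1\le s<S$ -- a geometric series dominated by $s\to S=\lceil\log d\rceil$, i.e.\ $\tilde O(\sqrt{dT})$ -- while Cauchy--Schwarz on the layer‑$0$ term gives $\tilde O(\alpha_0^2\sqrt{T/d})=\tilde O(\sqrt{dT})$. Summing the three parts yields $R_T=\tilde O(\sqrt{d\sum_i T_i})=\tilde O(\sqrt{dT})$.

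\textbf{Step 4 (communication cost).} For a fixed layer $s$, each synchronization triggered by that layer requires the local layer‑$s$ determinant to exceed $(1+C)$ times its value at the triggering client's previous synchronization, which is at most the server's layer‑$s$ determinant at that time. Since the aggregate layer‑$s$ determinant only grows and its logarithm is $O(d\log T)$, an epoch/doubling argument that charges synchronizations to multiplicative increases of this aggregate determinant -- separating synchronizations producing a large increase from those that do not, as in \citep{wang2019distributed,he2022simple} -- caps the number of layer‑$s$ synchronizations at $O(d\log T/\log(1+C))=O(dM^2\log T)$; summing over the $\lceil\log d\rceil$ layers gives the claimed $O(dM^2\log d\log T)$. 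I expect Step 3 -- keeping the $O(1)$‑scale widths valid and the elliptical potential under control despite each client acting on stale, partial information -- to be the main obstacle, since it is precisely what makes the regret scale as $\sqrt d$ rather than $d$; Steps 1, 2 and 4 are then adaptations of now‑standard arguments.
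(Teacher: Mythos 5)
Your proposal is correct and follows essentially the same route as the paper: a layered good event (OFUL-type bound for layer $0$, conditional independence of layer-$s$ rewards for $s\ge 1$), a spectral comparison $A^i_{t,s}\succeq \frac{1}{1+MC}A^{all}_{t,s}$ enforced by the determinant trigger so the elliptical potential applies, a separate count of $|\Psi_{T,0}|$ to absorb the larger $\alpha_0$, and an epoch/doubling argument charging at most $O(M+1/C)$ synchronizations per doubling of $\det(A^{ser}_s)$. The only substantive device you omit is the paper's reordering of client activations (so that between consecutive communications a single client acts), which is the bookkeeping step that makes the local-versus-global gram-matrix comparison and the per-epoch communication count clean, but this does not change the argument.
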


\noindent \textit{Remark 1.}
The minimax lower bound of the expected regret for linear contextual bandits with $K$ adversarial actions is $\Omega(\sqrt{dT})$, given in \citet{chu2011contextual}. Theorem \ref{thm:async} indicates that Async-\alg achieves order-optimal regret (up to polylog term) with $O(dM^2 \log d \log T)$ communication cost. To the best of our knowledge, this is the first algorithm that achieves the (near) optimal regret in federated linear bandits with finite adversarial actions.

\noindent \textit{Remark 2.} 
Without any communication, each client would execute SupLinUCB \citep{chu2011contextual} for $T_i$ rounds locally, and each client can achieve regret of order $\tilde{O}(\sqrt{dT_i})$. Therefore, the total regret of $M$ clients is upper bound by 
$ R_T \leq \sum_{i=1}^{M} \sqrt{d T_i} \text{ polylog}(T) \leq  \sqrt{dM \sum_{i=1}^{M} T_i} \text{ polylog}(T),$
where the last inequality becomes equality when $T_i = T_j, \forall i, j \in [M]$. Compared with conducting $M$ independent SupLinUCB algorithms locally, Async-\alg yields an average \textit{per-client gain of $1/\sqrt{M}$}, demonstrating that communications in the federated system can speed up local linear bandits decision-making at clients.

\noindent \textit{Remark 3.}
Most previous federated linear bandits consider the infinite action setting, based on the LinUCB principle \citep{abbasi2011improved}. Async-\alg considers a finite adversarial action setting and has a $\sqrt{d}$ reduction on the regret bound. \texttt{Fed-PE} proposed in \citet{huang2021federated} also considers the finite action setting. However, their action sets are fixed. We generalize their formulation and take into account a more challenging scenario, where the finite action set can be chosen adversarially. The regret order is the same as Fed-PE (ignoring the ploylog term), indicating that it is the \emph{finite} actions as opposed to \emph{fixed} actions that fundamentally leads to the $\sqrt{d}$ regret improvement in the federated linear bandits setting.

\paragraph{Communication cost analysis of \algg.} We sketch the proof for the communication cost bound in \cref{thm:async} in the following, while deferring the detailed proofs for the regret and the communication cost to \cref{appd:async}. 

We first study the communication cost triggered by some layer $s$. Denote by $A_{t,s}^{ser}$ the gram matrix in the server aggregated by the gram matrices uploaded by all clients up to round $t$. Define $T_{n,s} = \min \{t \in [T]| \det(A_{t,s}^{ser})\geq 2^{n} \}$, for each $n \geq 0$. We then divide rounds into epochs $\{T_{n,s},T_{n,s}+1, \cdots,\min(T_{n+1,s}-1, T)\}$ for each $n \geq 0$. The number of communications triggered by layer $s$ within any epoch can be upper bounded by $2(M + 1/C)$ (see \cref{lem:comm_layer}), and the number of non-empty epochs is at most $d \log(1+T/d)$ by \cref{lem:epl}. Since there are $S=\lceil \log d \rceil$ layers and synchronization among all layers is performed once communication is triggered by any layer (Line 8 in Algorithm \ref{alg:AsynFedSupLinALgo}), the total communication cost is thus upper-bounded by $O(d (M+ 1/C) \log d \log T)$. Plugging $C = 1/M^2$ proves the result. 

We note that although choosing a larger $C$ would trigger fewer communications, the final choice of $C = 1/M^2$ takes into consideration both the regret and the communication cost, i.e., to achieve a small communication cost while maintaining an order-optimal regret.

\if 0
\paragraph{Client privacy.} 
In federated learning, we attempt to safeguard local data privacy by avoiding directly sharing the raw data. Here in federated linear bandit, the context $x_{t,a_t}^{i}$ denotes information between client $i$ and arm $a_t$. In \citet{huang2021federated} they send the directions of client contexts vector to the server and could only preserve the client privacy to a certain extent.

In our asynchronous Fed-SupLinUCB, the communication request is aroused by the currently active client. When communication happens, the client will upload the local information set which is the aggregation of contexts observed since the last communication, not leaking any specific context. Moreover, the communication is data dependent but not on a specific time index, and the activation of clients is random; therefore, our federated model protects all the client privacy.

\fi

\section{Synchronous \alg}
In the synchronous setting, all clients are active and make decisions at each round. Though it can be viewed as a special case of the asynchronous scenario (clients are active and pulling arms in a round-robin manner), the information update is broadcast to all clients. In other words, the key difference from the asynchronous scenario besides that all clients are active at each round is that when a client meets the communication criterion, \emph{all}  clients will upload local information to the server and download the updated matrices. This leads to a higher communication cost per communication round, but in this synchronous scenario, knowing all clients are participating allows the communicated information to be well utilized by other clients. This is in sharp contrast to the asynchronous setting, where if many other clients are active in the current round, uploading local information to the clients seems unworthy. To mitigate the total communication cost, we use a more refined communication criterion to enable time-independent communication cost.

\subsection{The Algorithm}
The Sync-\alg algorithm allows each client to make decisions by the \texttt{S-LUCB} subroutine. Note that the decision-making is based on all available local information instead of the lazy update in the Async-\alg algorithm. The communication criterion involves the count of rounds since the last communication, which forces the communication to prevent the local data from being obsolete. Some layers may trigger the communication criterion either because the local client has gathered enough new data or due to having no communication with the server for too long. We categorize these layers in the CommLayers and synchronize all the clients with the server. 

\begin{algorithm}[!htb]
\caption{Sync-\alg}
\label{alg:SynFedSupLinALgo}
\begin{algorithmic}[1]
    \State \textbf{Initialization}: $T_c$, $D$, $S = \lceil \log d \rceil$, $t_{last}^s \leftarrow 0, \forall s \in [0:S]$, CommLayers $\leftarrow \emptyset$.
    \State $\{ A_{s}^{ser} \leftarrow I_{d}, b_{s}^{ser} \leftarrow 0 \mid s \in [0:S] \}$ \Comment{Server initialization}
    \State $\{ A_{s}^{i} \leftarrow I_{d}, \Delta A_{s}^{i}, b_{s}^{i}, \Delta b_{s}^{i} \leftarrow 0 \mid s \in [0: S], i\in [M]  \}$\Comment{Clients initialization}
    \For{$t=1,2, \cdots, T_c$}
    \For{$i = 1,2, \cdots,M$}
    \State Client $i_t = i$ is active, and observes $K$ contexts $\{x_{t, 1}^{i}, x_{t, 2}^{i}, \cdots, x_{t, K}^{i}\}$
    \State $s \leftarrow $\texttt{S-LUCB} $\left( \text{client } i, \{x_{t, 1}^{i}, x_{t, 2}^{i}, \cdots, x_{t, K}^{i}\} \right)$
    \If{{$(t-t_{last}^s)\log \frac{\det(A_{s}^{i}+\Delta A_{s}^{i})}{ \det(A_{s}^{i}) } > D$}}
    \State Add $s$ to CommLayers
    \EndIf
    \EndFor
    \EndFor
    \For{$s \in$  CommLayers}
    \State $\sync$($s$, server, clients $[M]$); $t_{last}^s \leftarrow t$, CommLayers $\leftarrow  \emptyset$
    \EndFor
\end{algorithmic}
\end{algorithm}


\subsection{Performance Analysis}
\begin{theorem}
\label{thm:sync}
For any $0 < \delta < 1$, if we run \cref{alg:SynFedSupLinALgo} with $D = \frac{T_c \log T_c}{d^2 M}$, with probability at least $1 - \delta$, the regret of the algorithm is bounded as 
$R_T \leq \tilde{O}(\sqrt{d M T_c})$ where $T_c$ is the total per-client arm pulls. Moreover, the corresponding communication cost is bounded by $O(\sqrt{d^3 M^{3}} \log d)$.
\end{theorem}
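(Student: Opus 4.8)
The plan is to follow the two-part structure of the \cref{thm:async} analysis---first a high-probability regret bound that is valid for \emph{any} communication schedule meeting a mild staleness guarantee, then a deterministic bound on the number of synchronizations---with the trigger $(t-t^s_{last})\log(\det(A^i_s+\Delta A^i_s)/\det(A^i_s))>D$ of \cref{alg:SynFedSupLinALgo} replacing the determinant-doubling rule. Throughout I would condition on the good event $\mathcal{E}$ that the confidence bounds used inside \texttt{S-LUCB} hold simultaneously: at layer $0$ the self-normalized bound of \citet{abbasi2011improved} with $\alpha_0=1+\sqrt{d\ln(2M^2T/\delta)}$, and at each layer $s\in[1:S]$ the elementary Hoeffding-type bound with $\alpha_s=O(1)$, which applies because the rewards aggregated into layer $s$ across all clients and rounds are mutually independent given that layer's history (the SupLinUCB ``master'' argument of \citet{chu2011contextual}, unaffected by the synchronous broadcast since the noises $\epsilon_t$ are independent across (client, round) pairs). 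A union bound over $M$ clients, $T_c$ rounds, $K$ arms and $S=\lceil\log d\rceil$ layers gives $\Pr[\mathcal{E}]\ge 1-\delta$.

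\textbf{Regret.} On $\mathcal{E}$ I would partition the $T=MT_c$ total arm pulls by the layer $s$ at which the action is committed in \texttt{S-LUCB}. Layer $S$ contributes $O(\sqrt{dT})$ since $\overline w_S\le d^{0.5}/\sqrt{T}$ and the optimal arm survives every screening on $\mathcal{E}$; layer $0$ contributes $\tilde O(\sqrt{dT})$ by the elliptical-potential argument with radius $\alpha_0=\tilde O(\sqrt d)$ on the aggregated layer-$0$ gram matrix. For $1\le s\le S-1$, a round lands in layer $s$ only when some surviving arm has width exceeding $\overline w_s$, so the chosen arm's width dominates $\overline w_s$; combining the telescoping screening tolerances $2(\overline w_0+\cdots+\overline w_{s-1})$ with this width bound gives instantaneous regret $\tilde O(\overline w_s)$, and the elliptical-potential lemma on layer $s$ bounds the sum of squared widths and hence $|\Psi_s|$. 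The one genuinely new ingredient is that client $i$ at round $t$ screens with the \emph{stale and partial} matrix $\bar A^i_{t,s}=A^i_s+\Delta A^i_s$ rather than the fully aggregated $A^{\mathrm{glob}}_{t,s}$; since $\bar A^i_{t,s}$ is dominated in the positive semidefinite order by $A^{\mathrm{glob}}_{t,s}$, I would pass between the two norms at the cost of the determinant ratio $\det(A^{\mathrm{glob}}_{t,s})/\det(\bar A^i_{t,s})$ (the lazy-update lemma of \citet{abbasi2011improved}) and control the sum of these overheads using the trigger: while no sync of layer $s$ has fired, $(t-t^s_{last})\log(\det(\bar A^i_{t,s})/\det(A^{\mathrm{glob}}_{t^s_{last},s}))\le D$ for every client, and subadditivity of $B\mapsto\log\det(A^{\mathrm{glob}}_{t^s_{last},s}+B)$ over the $M$ client increments turns this into a bound of order $MD/(t-t^s_{last})$ on the global log-determinant growth within the phase. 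Summing over phases and layers, the choice $D=T_c\log T_c/(d^2M)$ keeps the total staleness overhead $\tilde O(1)$, leaving $R_T\le\tilde O(\sqrt{dT})=\tilde O(\sqrt{dMT_c})$.

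\textbf{Communication cost.} Fix a layer $s$ and let $0=\tau_0<\tau_1<\cdots<\tau_{R_s}\le T_c$ be the rounds at which layer $s$ is synchronized. For consecutive syncs, the trigger at $\tau_r$ gives some client $i$ with $(\tau_r-\tau_{r-1})\log(\det(\bar A^i_{\tau_r,s})/\det(A^{\mathrm{glob}}_{\tau_{r-1},s}))>D$, and since $\bar A^i_{\tau_r,s}$ is dominated by $A^{\mathrm{glob}}_{\tau_r,s}$ this yields $\Delta t_r\,\Delta L_r>D$ with $\Delta t_r=\tau_r-\tau_{r-1}$ and $\Delta L_r=\log\det(A^{\mathrm{glob}}_{\tau_r,s})-\log\det(A^{\mathrm{glob}}_{\tau_{r-1},s})$. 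Using $\sum_r\Delta t_r\le T_c$ and $\sum_r\Delta L_r\le d\log(1+T/d)$, Cauchy--Schwarz gives $R_s\sqrt{D}<\sum_r\sqrt{\Delta t_r\Delta L_r}\le\sqrt{T_c\,d\log(1+T/d)}$, hence $R_s=\tilde O(\sqrt{d^3M})$ after substituting $D$. Each synchronization of a layer involves all $M$ clients and there are $S=\lceil\log d\rceil$ layers, so the total communication cost is $M\cdot S\cdot\max_s R_s=\tilde O(M\log d\,\sqrt{d^3M})=O(\sqrt{d^3M^3}\log d)$.

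\textbf{Main obstacle.} The delicate part is the regret step, not the counting: I must verify that $D=T_c\log T_c/(d^2M)$ simultaneously keeps $R_s$ small and prevents the staleness overhead from inflating $\sqrt{dT}$ by more than polylog factors. The crude bound $MD/(t-t^s_{last})$ blows up right after a sync, precisely where $\bar A^i$ has barely drifted from $A^{\mathrm{glob}}$, so a separate treatment of the determinant-ratio factor and the elliptical-potential budget is wasteful; instead one needs a joint per-phase, per-layer accounting that trades one against the other, together with the $\log\det$ subadditivity used to lift the single-client trigger guarantee to the aggregated matrix. Checking that the SupLinUCB independence structure---and hence the $\alpha_s=O(1)$ confidence bounds---survives the synchronous broadcast of identical matrices to all clients is routine but must be argued carefully.
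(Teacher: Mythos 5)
Your communication-cost argument is correct and is essentially the paper's argument in disguise: the paper counts epochs by pigeonhole against a threshold $\beta$ and optimizes $\beta=\sqrt{DT_c/R_s}$, while you apply Cauchy--Schwarz to $\sum_r\sqrt{\Delta t_r\,\Delta L_r}$ directly; both yield $R_s=O(\sqrt{T_c\,d\log T/D})=O(\sqrt{d^3M})$ synchronizations per layer and hence $O(\sqrt{d^3M^3}\log d)$ total. Your observation that the single-client trigger lifts to the aggregated matrix because $\bar A^i_{t,s}\preceq A^{\mathrm{glob}}_{t,s}$ is exactly the step the paper uses via \cref{lem:gramnorm}.

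The regret part, however, has a genuine gap, and you have in effect flagged it yourself in your ``main obstacle'' paragraph without closing it. Your plan is to pay for staleness round-by-round through the determinant ratio $\det(A^{\mathrm{glob}}_{t,s})/\det(\bar A^i_{t,s})$ and to control the accumulated overhead by the claimed bound of order $MD/(t-t^s_{last})$ on the within-phase log-determinant growth; you then assert that with $D=T_c\log T_c/(d^2M)$ the ``total staleness overhead'' is $\tilde O(1)$, but no accounting is given, and as you note the per-round bound is vacuous immediately after a synchronization. The missing idea is the paper's dichotomy of epochs (\cref{lem:syn_good}, \cref{lem:syn_bad}): call the $p$-th inter-synchronization epoch \emph{good} if $\det(A^{all}_{p,s})/\det(A^{all}_{p-1,s})\le 2$ and \emph{bad} otherwise. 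In a good epoch the determinant-ratio overhead is a universal constant, so every local width is within a factor $2$ of the width under the synchronized matrix $A^{all}_{p-1,s}$ and the standard elliptical-potential argument over $\Psi_{T_c,s}$ goes through, giving $\tilde O(\alpha_s\sqrt{d|\Psi_{T_c,s}|})$; no per-round determinant bookkeeping is needed. Bad epochs are handled not by bounding their staleness at all but by counting them: each more than doubles $\det(A^{all}_{\cdot,s})$, so there are at most $R_s=O(d\log T_c)$ of them, and within a single bad epoch of length $n$ the un-fired trigger gives $\log(\det(\bar A^i_{t,s})/\det(A^i_s))\le D/n$ for each client, whence Cauchy--Schwarz on the per-client elliptical potential bounds that client's width-sum by $O(\sqrt{n\cdot D/n})=O(\sqrt D)$, for a total of $O(\alpha_s M\sqrt D\,R_s)=\tilde O(\sqrt{MT_c})$ after substituting $D$. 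Without this (or an equivalent) decomposition, your regret bound does not follow; the rest of your outline (independence across layers, layer-$0$ and layer-$S$ contributions, and the need for $|\Psi_{T_c,0}|=\tilde O(T/d)$ to absorb $\alpha_0=\tilde O(\sqrt d)$) is consistent with the paper.
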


\noindent \textit{Remark 4.}
Theorem \ref{thm:sync} demonstrates Sync-\alg also achieves the minimax regret lower bound while the communication cost is independent of $T_c$. It is particularly beneficial for large $T_c$. Especially, the number of total rounds in the synchronous scenario is $T = M T_c$, while in the asynchronous setting, we have $T = \sum_{i=1}^M T_i$ rounds.

\paragraph{Communication cost analysis of \texttt{Sync-}\algg.} We sketch the proof for the communication cost bound in \cref{thm:sync} below, while deferring the detailed proofs for the regret and the communication cost to \cref{appd:sync}. 

We call the chunk of consecutive rounds without communicating information in layer $s$ (except the last round) an \emph{epoch}. Information in layer $s$ is collected locally by each client and synchronized at the end of the epoch, following which the next epoch starts. Denoted by $A_{p, s}^{all}$ the synchronized gram matrix at the end of the $p$-th epoch. 
For any value $\beta > 0$, there are at most $\lceil \frac{T_c}{\beta}\rceil$ epochs that contain more than $\beta$ rounds by pigeonhole principle. If the $p$-th epoch contains less than $\beta$ rounds, then $\log(\frac{\det(A_{p,s}^{all})}{ \det(A_{p-1,s}^{all})}) > \frac{D}{\beta}$ based on the communication criterion and the fact that $\sum_{p = 1}^P \log \frac{\det(A_{p, s}^{all})}{\det(A_{p-1, s}^{all})} \leq R_s = O(d \log(T_c))$ (see \cref{equ:sum_good_chunk}). The number of epochs containing rounds fewer than $\beta$ is at most $O(\lceil \frac{R_s}{D/\beta} \rceil )$. 
Noting that $D = \frac{T_c \log(T_c)}{d^2 M}$, the total number of epochs for layer $s$ is at most $\lceil \frac{T_c}{\beta} \rceil + \lceil \frac{R_s \beta}{D} \rceil = O(\sqrt{\frac{T_cR_{s}}{D}}) = O(\sqrt{d^3 M})$ by taking $\beta = \sqrt{\frac{DT_c}{R_s}}$. The total communication cost is thus upper bounded by $O( S M \sqrt{d^3 M} ) = O( \log(d) \sqrt{d^3 M^3} )$.

\section{Extensions of \alg}
In this section, we extend the \alg algorithm to address two distinct settings in federated systems: scenarios characterized by heterogeneous variances, and those affected by adversarial corruptions.

\subsection{Federated Heteroscedastic Linear Bandits}
\label{sec:varian}
We have so far focused on the federated linear bandits with 1-sub-Gaussian reward noises. In this section, we adapt Async-\alg to the case where the reward noises have \emph{heterogeneous} variances, which extends the \emph{heteroscedastic linear bandits} as studied in \citet{zhou2021nearly,zhou2022computationally} to the asynchronous federated setting, where one client is active at a time. Specifically, the reward noises $\{\epsilon_t\}_{t \in [T]}$ are independent with $|\epsilon_{t}| \leq R, \mathbb{E}[\epsilon_{t}] = 0$ and $\mathbb{E}[\epsilon_t^2] \leq \sigma_t^{2}$, where $\sigma_t$ is known to the active client. 

We propose a variance-adaptive Asyc-\alg and analyze its regret and the communication cost in the theorem below, with the algorithm and the proof details in \cref{appd:variance} due to space constraint. The regret is significantly less than that of the Async-\alg when the variances $\{\sigma_t^2\}$ are small. 


\begin{theorem}
\label{thm:async-variance}
For any $0 < \delta < 1$, if we run the variance-adaptive Async-\alg algorithm in \cref{appd:variance} with $C = 1/M^2$, with probability at least $1 - \delta$, the regret is bounded as 
$R_T \leq \tilde{O}(\sqrt{d \sum \nolimits_{t = 1}^T \sigma_t^{2} })$, and the communication cost is bounded by $O(dM^2 \log^2 T)$.
\end{theorem}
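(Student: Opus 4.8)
The plan is to run the proof of \cref{thm:async} with a variance-weighted version of \texttt{S-LUCB}. The variance-adaptive Async-\alg modifies \cref{alg:AsynFedSupLinALgo} by weighting observations: when an observation $(x_\tau,r_\tau)$ is assigned to a layer $s\ge 1$ of \texttt{S-LUCB} it enters that layer's gram matrix and reward vector with weight $\bar\sigma_\tau^{-2}$, where $\bar\sigma_\tau^{2}=\max\{\sigma_\tau^{2},\,\text{floor}_\tau\}$ is the Zhou--Gu variance proxy, the floor chosen small enough to preserve variance-adaptivity but large enough that each layer's determinant stays $\le\mathrm{poly}(T)^{d}$; layer $0$ is untouched and keeps $\alpha_0=\tilde O(\sqrt d)$ for bootstrapping. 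The confidence radii $\alpha_s$ for $s\ge1$ stay $\tilde O(1)$ but are now certified by the Bernstein-type self-normalized bound of \citet{zhou2021nearly,zhou2022computationally} rather than a Hoeffding-type one, so the width $w^{i}_{t,s,a}=\alpha_s\|x^{i}_{t,a}\|_{(A^{i}_{t,s})^{-1}}$ genuinely shrinks with the variances. Because the target resolution is $\tilde O(\sqrt{\sum_t\sigma_t^2}/T)$ rather than $\sqrt{d/T}$, the number of layers is raised from $\lceil\log d\rceil$ to $S=\Theta(\log T)$ (this is the source of the extra logarithmic factor in the communication bound); the encoding, the per-layer update, the synchronization routine (\cref{alg:protocal-sync}), and the trigger $\det(A^{i}_s+\Delta A^{i}_s)/\det(A^{i}_s)>1+C$ with $C=1/M^2$ are otherwise unchanged.

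\textbf{Good event and regret.} I would first fix the event $\mathcal E$ on which $|\hat r^{i}_{t,s,a}-\theta^{\top}x^{i}_{t,a}|\le w^{i}_{t,s,a}$ for all clients $i$, rounds $t$, layers $s$, arms $a$. Layer $0$ is the OFUL self-normalized bound; for $s\ge1$ the SupLinUCB layering makes the rewards entering layer $s$ mutually independent given the oblivious contexts, so the relevant martingale has per-step conditional variance $\le\sigma_\tau^2\le\bar\sigma_\tau^2$ matching the weights, and the Bernstein bound gives $\alpha_s=\tilde O(1)$ with $|\epsilon_\tau|\le R$ entering only the logarithms; a union bound over $M$ clients, $S=\Theta(\log T)$ layers, $T$ rounds and $K$ arms gives $\Pr[\mathcal E]\ge 1-\delta$. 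Exactly as in \cref{thm:async}, widths are computed on stale local matrices, but the trigger forces $A^{i}_{t,s}\succeq\frac{1}{1+C}\widetilde A_{t,s}$ against the all-information matrix, so inflating $\alpha_s$ by $\sqrt{1+C}$ restores validity and is absorbed since $C=1/M^2$. On $\mathcal E$ the layered successive screening is intact: $a^{i,*}_t$ survives every screening, and an action returned at layer $s$ incurs instantaneous regret $O(\overline w_{s-1}+w^{i}_{t,s,a^{i}_t})$, dominated by the last threshold. Grouping rounds by the returned layer, layer $0$ yields a $T$-independent $\tilde O(d^{1.5})$ term as in the unweighted proof; for $s\ge1$, Cauchy--Schwarz together with the weighted elliptical-potential bound $\sum_\tau\bar\sigma_\tau^{-2}\|x_\tau\|^2_{(A_{\tau,s})^{-1}}=\tilde O(d)$ gives $\sum_t w^{i}_{t,s,a^{i}_t}=\tilde O(\sqrt{d\sum_t\sigma_t^2})$ after handling the floor contributions (by the Zhou--Gu construction these either fold back into the same potential sum or act on only $\tilde O(d)$ rounds per layer, hence are lower order); and rounds passing all $S$ layers contribute at most $T\overline w_S$, of lower order. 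Summing over the $S=\Theta(\log T)$ layers gives $R_T\le\tilde O(\sqrt{d\sum_{t=1}^T\sigma_t^2})$.

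\textbf{Communication cost.} This follows the proof of \cref{thm:async} with the weighted gram matrices substituted. Fix a layer $s$ and partition rounds into epochs over which $\det(A^{ser}_{t,s})$ doubles; within an epoch the number of triggered communications is $\le 2(M+1/C)=O(M^2)$ by \cref{lem:comm_layer} (whose argument is oblivious to the weights), and the number of non-empty epochs is $O(\log\det(A^{ser}_{T,s}))=O(d\log T)$ because the floored weights keep $\det(A^{ser}_{T,s})\le\mathrm{poly}(T)^{d}$. Multiplying by the $S=\Theta(\log T)$ layers and plugging $C=1/M^2$ gives the communication cost $O(dM^2\log^2 T)$.

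\textbf{Main obstacle.} The crux is the joint choice of the variance proxy $\bar\sigma_\tau^2$ and its floor: it must be large enough for the Bernstein self-normalized bound to certify $\alpha_s=\tilde O(1)$ on every layer $s\ge1$ and for the weighted determinants (hence communication) to stay controlled, yet small enough that the weighted potential closes to $\tilde O(\sqrt{d\sum_t\sigma_t^2})$ rather than $\tilde O(\sqrt{d\sum_t\bar\sigma_t^2})$ with an uncontrolled gap; and it must be computable by the active client from the stale matrices it actually holds, so that the $\det$-ratio trigger with $C=1/M^2$ still validates the stale weighted confidence intervals. Folding the floor terms back into the same elliptical-potential sum without inflating the regret is the calculation I expect to be the most delicate.
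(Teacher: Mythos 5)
Your proposal is correct and follows essentially the same route as the paper's proof in \cref{appd:variance}: Zhou--Gu variance-weighted gram matrices with a floored proxy $\bar\sigma_t=\max\{\sigma_t,\rho,\gamma\|x_t\|_{(A_{t,s}^i)^{-1}}^{1/2}\}$, Bernstein-type self-normalized confidence bounds, the layer count raised to $S=\Theta(\log R+\log T)$ (which produces the extra $\log T$ in the communication bound exactly as you identify), the weighted elliptical-potential lemma for the per-layer widths, and the unchanged epoch/doubling argument for communication. The only (immaterial) deviation is that the paper also variance-weights layer $0$ and certifies $\alpha_0=\tilde O(\sqrt d)$ via the same Bernstein-type inequality with $\overline w_0=dR^2$, yielding a $\tilde O(d)$ layer-$0$ contribution rather than your unweighted-OFUL treatment.
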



\subsection{Federated Linear Bandits with Corruption}
We further explore asynchronous federated linear bandits with adversarial corruptions, where an adversary inserts a corruption $c_t$ to the reward $r_t$ of the active client at round $t$. The total corruption is bounded by $\sum_{t=1}^T |c_t| \leq C_p$. We incorporate the idea of linear bandits with adversarial corruption studied in \cite{he2022nearly} to the proposed \alg framework and propose the Robust Async-\alg algorithm, with details in Appendix \ref{appd:corruption}. Robust Async-\alg can achieve the optimal minimax regret (matching the lower bound in \cite{he2022nearly}) while incurring a low communication cost. 

\begin{theorem}
\label{thm:async-corruption}
For any $0 < \delta < 1$, if we run the Robust Async-\alg algorithm in \cref{appd:corruption} with $C = 1/M^2$, with probability at least $1 - \delta$, the regret is bounded as 
$R_T \leq \tilde{O}(\sqrt{d T } + d C_p )$, and the communication cost is bounded by $O(dM^2 \log d \log T)$.
\end{theorem}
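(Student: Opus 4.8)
The plan is to keep the outer federated machinery of Async-\alg (\cref{alg:AsynFedSupLinALgo}) intact and make the \texttt{S-LUCB} subroutine corruption-robust by replacing, within each layer $s \geq 1$, the plain gram-matrix/action-reward updates with the \emph{uncertainty-weighted} ridge updates of \citet{he2022nearly}: when the active client takes context $x_\tau$ into layer $s$, it uses the clipped weight $w_\tau = \min\{1,\, \alpha/\|x_\tau\|_{(A_{\tau,s})^{-1}}\}$ and contributes $w_\tau^2 x_\tau x_\tau^\top$ and $w_\tau^2 r_\tau x_\tau$, while layer $0$ keeps the OFUL-style unweighted update with $\alpha_0 = \tilde{O}(\sqrt{d})$. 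The server still aggregates these weighted quantities, the lazy update is unchanged, and the doubling test of Line~7 of \cref{alg:AsynFedSupLinALgo} is applied to the weighted matrices; the clipping parameter $\alpha$ is tuned against $C_p$ at the end. The proof then splits into a robust per-layer confidence bound, the layered regret analysis, and the communication bound, the last of which is essentially verbatim from \cref{thm:async}.

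\emph{Step 1: robust confidence bound per layer.} Fix a layer $s \geq 1$, let $\Psi_{t,s}$ be the (global, across-clients) set of rounds whose action was taken in layer $s$ before round $t$, and $A_{t,s} = I + \sum_{\tau \in \Psi_{t,s}} w_\tau^2 x_\tau x_\tau^\top$. Writing $r_\tau = \theta^\top x_\tau + \epsilon_\tau + c_\tau$, the estimate $\hat{\theta}_{t,s} = A_{t,s}^{-1}\sum_\tau w_\tau^2 r_\tau x_\tau$ has error on any context $x$ that decomposes into a regularization term $O(\|x\|_{A_{t,s}^{-1}})$, a weighted self-normalized noise term that is $O(1)\cdot\|x\|_{A_{t,s}^{-1}}$ (the layered construction keeps the rewards in layer $s$ mutually independent and $1$-sub-Gaussian, as in SupLinUCB), and a corruption term bounded, using $A_{t,s} \succeq A_{\tau,s}$, $w_\tau \leq 1$, the clipping identity $w_\tau \|x_\tau\|_{(A_{\tau,s})^{-1}} \leq \alpha$, and $\sum_\tau |c_\tau| \leq C_p$, by
\begin{align}
\Big| \sum\nolimits_{\tau \in \Psi_{t,s}} w_\tau^2 c_\tau\, x^\top A_{t,s}^{-1} x_\tau \Big| \;\leq\; \|x\|_{A_{t,s}^{-1}} \sum\nolimits_\tau w_\tau |c_\tau|\, \big( w_\tau \|x_\tau\|_{(A_{\tau,s})^{-1}} \big) \;\leq\; \alpha\, C_p\, \|x\|_{A_{t,s}^{-1}}.
\end{align}
Hence $|\hat{r}^i_{t,s,a} - \theta^\top x^i_{t,a}| \leq (\hat{\beta}_s + \alpha C_p)\|x^i_{t,a}\|_{(A^i_{t,s})^{-1}}$ with $\hat{\beta}_s = O(1)$, on an event of probability at least $1-\delta$. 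Two points need care: the concentration must hold for the \emph{aggregated} server matrices, which is fine because the oblivious adversary and the data-independent activation make $\{\epsilon_\tau\}$ conditionally $1$-sub-Gaussian along the activation order; and the active client screens with a \emph{stale} $A^i_s$, but the doubling test with constant $C$ gives $A^i_s \preceq A^i_s + \Delta A^i_s \preceq (1+C)A^i_s$, so all widths stay within a constant factor of their fresh values, provided the \texttt{S-LUCB} screening thresholds are inflated by the $\alpha C_p$ term so that $a^{i,*}_t$ still survives every layer.

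\emph{Step 2: regret and communication.} Conditioned on the above event, I would run the standard layered successive-screening argument: $a^{i,*}_t$ stays in $\mathcal{A}_s$ at all layers, and when $a^i_t$ is selected in layer $s$ its instantaneous regret is $O(\overline{w}_s)$, so the total regret is $\tilde{O}\big(\sum_s \overline{w}_s N_s\big)$ plus standard layer-$0$ terms, where $N_s$ counts rounds resolved at layer $s$. Splitting those rounds into $w_\tau = 1$ and down-weighted $w_\tau < 1$, the weighted elliptical-potential lemma gives $\sum_\tau w_\tau^2 \|x_\tau\|_{(A_{\tau,s})^{-1}}^2 = \tilde{O}(d)$, so the $w_\tau=1$ rounds reassemble (via Cauchy--Schwarz and the geometric decay of $\overline{w}_s$) into the clean $\tilde{O}(\sqrt{d\sum_i T_i}) = \tilde{O}(\sqrt{dT})$ regret, the down-weighted rounds number only $\tilde{O}(d/\alpha^2)$ and cost $O(1)$ each, and propagating the $\alpha C_p$ term through the $S = \lceil\log d\rceil$ layers contributes $\tilde{O}(\alpha C_p \sqrt{dT})$; tuning $\alpha$ balances these into $\tilde{O}(\sqrt{dT} + dC_p)$. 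For the communication cost, nothing changes from \cref{thm:async}: the weights lie in $[0,1]$, so they only slow the growth of $\det(A^{ser}_s)$, each layer still triggers at most $2(M+1/C)$ communications per determinant-doubling epoch, there are at most $d\log(1+T/d)$ epochs, and over $S=\lceil\log d\rceil$ layers this is $O(d(M+1/C)\log d\log T) = O(dM^2\log d\log T)$ at $C=1/M^2$.

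\emph{Main obstacle.} The delicate part is the corruption accounting: making the extra regret \emph{additive} $\tilde{O}(dC_p)$ rather than multiplicative $\tilde{O}(C_p\sqrt{dT})$. This hinges on the clipped weights keeping the corruption bias equal to $\alpha C_p\|x\|$ with $\alpha$ decoupled from $C_p$, on the count of down-weighted rounds being $C_p$-free so that the $O(1)$-per-round cost totals only $\tilde{O}(dC_p)$ after optimizing $\alpha$, and on choosing $\alpha$ correctly (depending on $C_p$ when it is known, or via a doubling wrapper otherwise). Moreover, all of this must coexist with the fixed layered thresholds $\overline{w}_s$ \emph{and} the asynchronous, lazily-updated, server-aggregated matrices at the same time; verifying that the stale local matrices, the weighted server matrices, the clipping thresholds, and the $C=1/M^2$ doubling test remain mutually consistent is the heaviest piece of bookkeeping, and is where I expect the proof to require the most work.
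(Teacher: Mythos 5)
Your overall route is the same as the paper's: graft the uncertainty-weighted ridge regression of \citet{he2022nearly} onto the layered subroutine (the paper's \texttt{CS-LUCB} uses the weight $\eta_t=\min\{1,\gamma/\|x\|_{(A^i_{t,s})^{-1}}\}$ with $\gamma=\sqrt{d}/C_p$, applied at first power rather than your squared convention --- an immaterial difference), inflate every $\alpha_s$ by the corruption bias $\gamma C_p$, split each layer's regret into full-weight rounds (bounded by the usual federated elliptical-potential argument, giving $\tilde{O}(\sqrt{dT})$) and clipped rounds (bounded via $\|x\|_{(A)^{-1}}=\gamma^{-1}\eta_t\|x\|^2_{(A)^{-1}}$ and the potential lemma, giving $\tilde{O}(dC_p)$ at $\gamma=\sqrt{d}/C_p$), and carry the communication analysis over from \cref{thm:async} unchanged since the weights lie in $[0,1]$.

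The one concrete defect is your treatment of layer $0$. You propose to keep layer $0$ \emph{unweighted} with $\alpha_0=\tilde{O}(\sqrt{d})$ and only weight layers $s\geq 1$. Without the clipped weight, the corruption bias at layer $0$ can only be bounded by $\sum_\tau |c_\tau|\,\|x_\tau\|_{(A_{\tau,0})^{-1}}\,\|x\|_{(A_{t,0})^{-1}}\leq C_p\|x\|_{(A_{t,0})^{-1}}$, since $\|x_\tau\|_{(A_{\tau,0})^{-1}}$ can be as large as $1$; so the layer-$0$ width must be inflated by $C_p$ itself (and if it is not inflated at all, as you state, the good event fails and the optimal arm can be killed in the initial screening). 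Summing these widths over the $|\Psi_{T,0}|=\tilde{O}(T/d)$ layer-$0$ rounds then yields an extra $\tilde{O}(C_p\sqrt{T})$, which is multiplicative in $\sqrt{T}$ and exceeds the claimed additive $dC_p$ whenever $T\gg d^2$. The fix is exactly what the paper does: apply the same weight $\eta_t$ to the layer-$0$ updates as well and set $\alpha_0=1+\sqrt{d\ln(2M^2T/\delta)}+\gamma C_p$, so the layer-$0$ corruption bias is also $\gamma C_p\|x\|=\sqrt{d}\,\|x\|$ and is absorbed into the existing $\tilde{O}(\sqrt{d})$ width. The rest of your bookkeeping (balancing the clipped-round count against the bias inflation, and the stale-matrix factor $(1+C)$ from the lazy update) matches the paper's argument.
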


\section{Experiments}
We have experimentally evaluated \alg in the asynchronous and synchronous settings on both synthetic and real-world datasets. We report the results in this section. 
\subsection{Experiment Results Using Synthetic Dataset}
We simulate the federated linear bandits environment specified in  \cref{sec:model}. With $T = 40000$, $M = 20$, $d = 25$, $\Ac = 20$, contexts are uniformly randomly sampled from an $l_2$ unit sphere, and reward $r_{t,a} = \theta^{\top} x_{t,a}+ \epsilon_t$,  where $\epsilon_t$ is Gaussian distributed with zero mean and variance $\sigma =0.01$.  It should be noted that while $M$ clients participate in each round in the synchronous scenario, only one client is active in the asynchronous case. In the plots, the $x$-axis coordinate denotes the number of arm pulls, which flattens the actions in the synchronous setting. 

\begin{figure*}[tbh]
	\centering
	\subfigure[\scriptsize Regret: arrival patterns.]{ \includegraphics[width=0.235\linewidth]{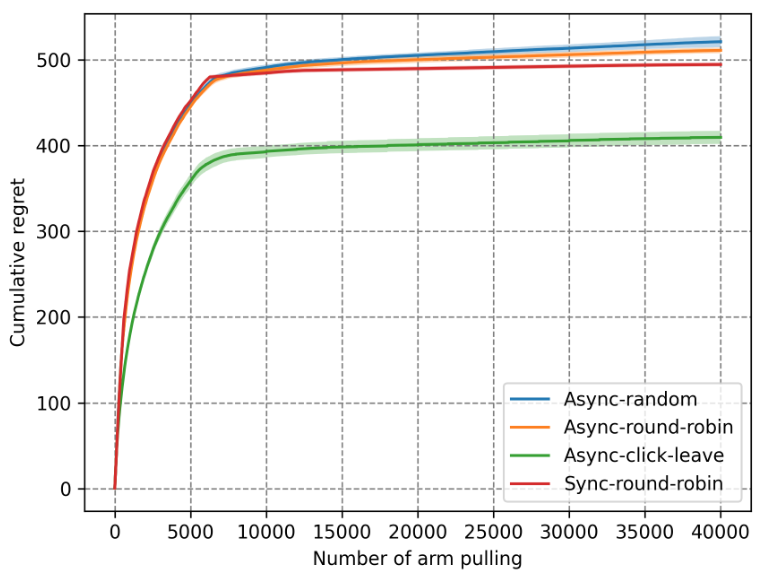}\label{exp:regret_pattern}}
	\subfigure[\scriptsize Communication: arrival patterns.]{ \includegraphics[width=0.235\linewidth]{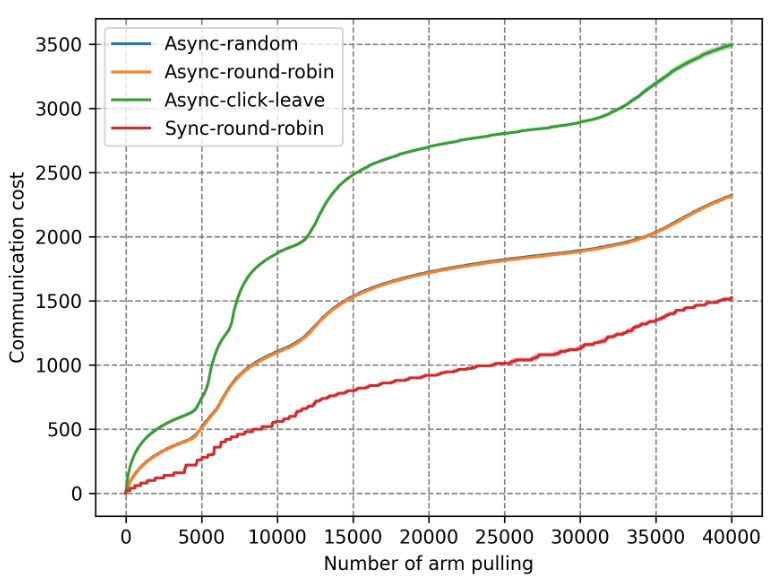}\label{exp:commcost_pattern}}
	\subfigure[\scriptsize Regret: client numbers.]{ \includegraphics[width=0.235\linewidth]{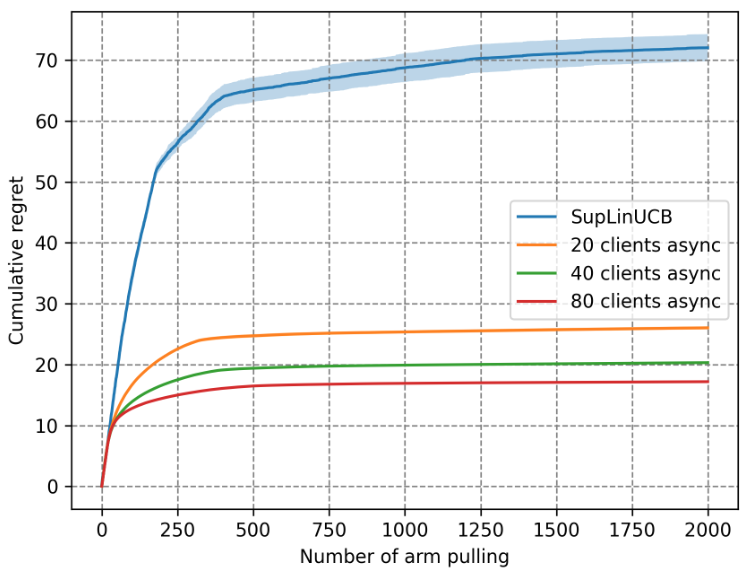}\label{exp:compare_clients}}
	\subfigure[\scriptsize Regret vs communications.]{ \includegraphics[width=0.235\linewidth]{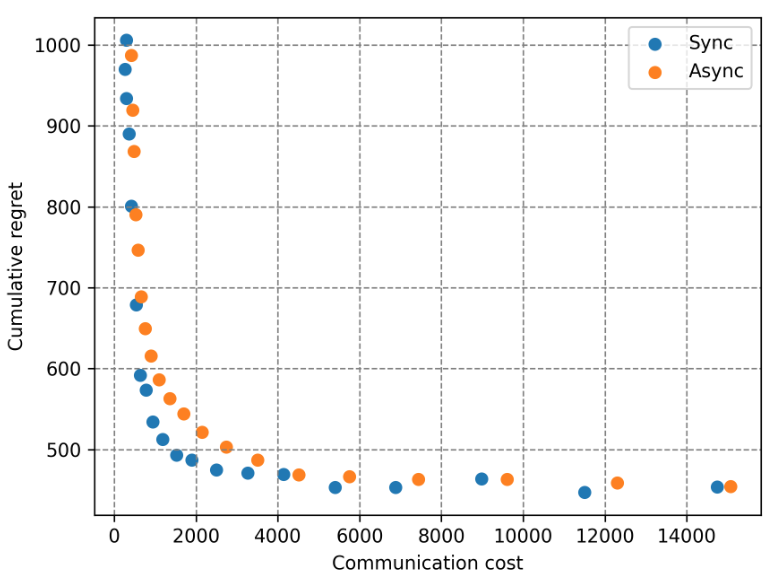}\label{exp:trade-off-synthetic}}
	\caption{\small Experimental results with the synthetic dataset.}
	\label{fig:performance}
\end{figure*}

\mypara{Arrival pattern.}
We first investigate the impact of different arrival patterns (the sequence of activating clients): (1) \textbf{Random}, which randomly allocates $T/M$ arm pulls in $[T]$ for each client. (2) \textbf{Round-robin}, i.e. $[1,2,3,\cdots, M, 1,2,3,\cdots M, \cdots]$. (3) \textbf{Click-leave}, i.e. $[1,1,\cdots, 2,2,\cdots, \cdots, M, M, \cdots]$. 
The regret and the communication cost of these three arrival patterns in the synthetic experiment are reported in \cref{exp:regret_pattern} and \cref{exp:commcost_pattern}, respectively. We note that although the upper bound analysis in our proof is for the worst-case instance, the numerical results suggest that different arrival patterns result in diverse regret performances. Round-robin and random patterns are more challenging since both local bandit learning and each client's policy updates happen relatively slowly. The click-leave pattern, which is the closest to the centralized setting, achieves the best regret. In addition, compared with Async-\alg, Sync-\alg achieves better cumulative regrets with a higher communication cost.


\mypara{Amount of clients.}
The per-client cumulative regret as a function of $T_c = T/M$ with different amounts of clients is plotted in \cref{exp:compare_clients}. In comparison to the baseline SupLinUCB, \alg algorithms achieve better regret via communication between clients and the server. We can see from the experiment that  \alg significantly reduces the per-client regret compared with SupLinUCB, and achieves a better regret as  $M$ increases in both asynchronous and synchronous settings.

\mypara{Trade-off between regrets and communications.} We evaluate the tradeoff between communication and regret by running \alg with different communication threshold values $C$ and $D$ in asynchronous and synchronous settings respectively. The results are reported in \cref{exp:trade-off-synthetic}, where each scattered dot represents the communication cost and the cumulative regret that \alg has achieved with a given threshold value at round $T = 40000$. We see a clear tradeoff between the regret and the communication. More importantly, Sync-\alg achieves a better tradeoff than Async-\algg.

\subsection{Experiment Results Using Real-world Dataset}
\label{appd:realworld}
We further investigate how efficiently the federated linear bandits algorithm performs in a more realistic and difficult environment. We have carried out experiments utilizing the real-world recommendation dataset MovieLens 20M \citep{harper2015movielens}. Following the steps in \citet{li2022communication}, we first filter the data by maintaining users with above $2500$ movie ratings and treating rating points greater than $3$ as positive, ending up with $N = 37$ users and 121934 total movie rating interactions. Then, we follow the process described in \citet{cesa2013gang} to generate the contexts set, using the TF-IDF feature $d = 25$ and the arm set $K = 20$. We plot the per-client normalized rewards of the FedSupLinUCB algorithm with different client numbers  $M$ in synchronous and asynchronous cases respectively. Note that the per-client cumulative rewards here are normalized by a random strategy. From  \cref{exp:movielnes_async} and \cref{exp:movielens_sync}, we can see that in both synchronous and asynchronous experiments, \alg has better rewards than SupLinUCB, and the advantage becomes more significant as the number of users increases.

\begin{figure*}[tbh]
	\centering
	\subfigure[\scriptsize Async-\algg.]{ \includegraphics[width=0.4\linewidth]{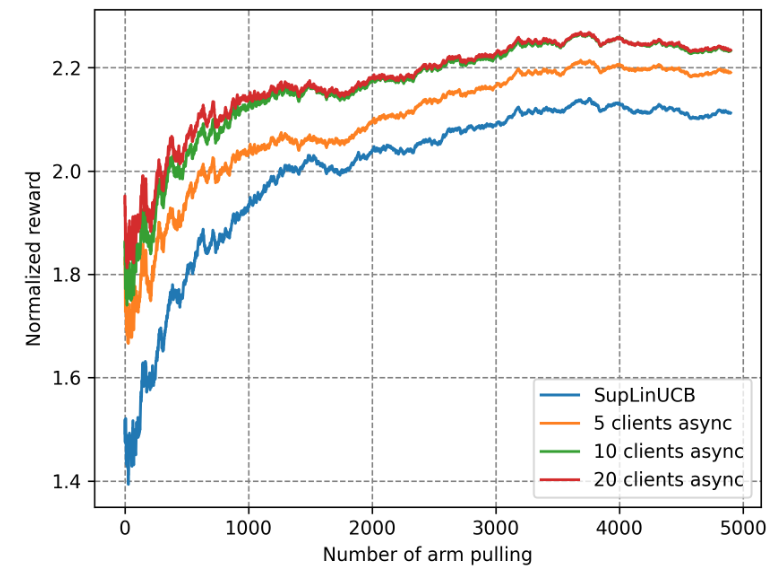}\label{exp:movielnes_async}}
	\subfigure[\scriptsize Sync-\algg.]{ \includegraphics[width=0.4\linewidth]{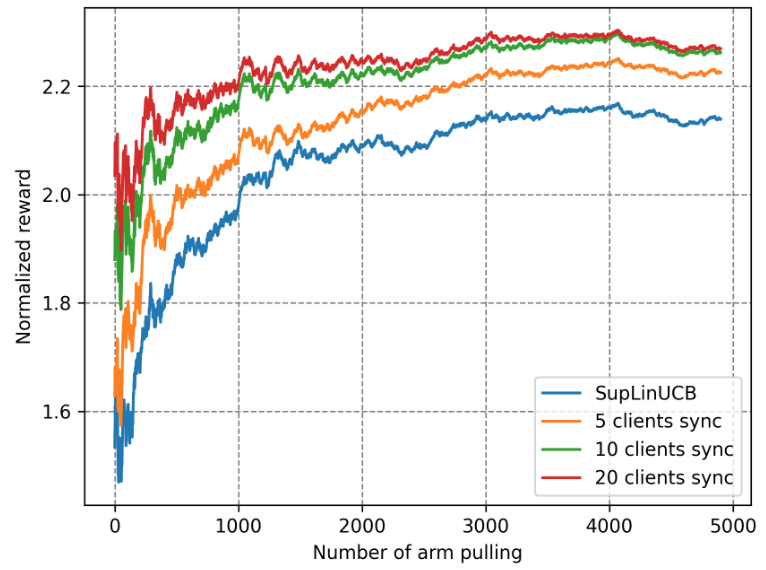}\label{exp:movielens_sync}
	}
	\caption{\small Experimental results with the real-world MovieLens-20M dataset.}
	\label{fig:performance2}
\end{figure*}



\section{Conclusion}
We studied federated linear bandits with finite adversarial actions, a model that has not been investigated before. We proposed \alg that extends the SupLinUCB and OFUL principles to the federated setting in both asynchronous and synchronous scenarios, and analyzed their regret and communication cost, respectively. The theoretical results proved that  \alg is capable of approaching the minimal regret lower bound (up to polylog terms) while only incurring sublinear communication costs, suggesting that it is the \emph{finite} actions that fundamentally determines the regret behavior in the federated linear bandits setting. Furthermore, we examined the extensions of the algorithm design to the variance-adaptive and adversarial corruption scenarios. 


\begin{ack}
The work of LF and CS was supported in part by the U.S. National Science Foundation (NSF) under grants 2143559, 2029978, and 2132700. 

\end{ack}

\newpage
\bibliography{NIPS_ref}
\bibliographystyle{apalike}




\newpage

\appendix

\section{Supporting Lemmas}
\label{appd:lemmas}

\begin{lemma}(Lemma 11 in \citet{abbasi2011improved}) 
\label{lem:epl}
Let $\left\{X_t\right\}_{t=1}^{\infty}$ be a sequence in $\mathbb{R}^d, V$ be a $d \times d$ positive definite matrix, and define $\bar{V}_t=V+\sum_{s=1}^t X_s X_s^{\top}$. Then, we have 
$$
\log \left(\frac{\operatorname{det}\left(\bar{V}_n\right)}{\operatorname{det}(V)}\right) \leq \sum_{t=1}^n\left\|X_t\right\|_{\bar{V}_{t-1}^{-1}}^2 .
$$
Further, if $\left\|X_t\right\|_2 \leq L$ for all $t$, then
$$
\sum_{t=1}^n \min \left\{1,\left\|X_t\right\|_{\bar{V}_{t-1}^{-1}}^2\right\} \leq 2\left(\log \operatorname{det}\left(\bar{V}_n\right)-\log \operatorname{det} V\right) \leq 2\left(d \log \left(\left(\operatorname{trace}(V)+n L^2\right) / d\right)-\log \operatorname{det} V\right).
$$
Finally, if $\lambda_{min}(V) \geq \max(1, L^2)$, then
\begin{align*}
    \sum_{t= 1}^n \|X_t\|^2_{\bar{V}_{t-1}^{-1}} \leq 2 \log \frac{\det(\bar{V}_n)}{\det(V)}.
\end{align*}
\end{lemma}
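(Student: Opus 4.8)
The plan is to reduce all three displayed inequalities to a single rank-one determinant identity together with a couple of elementary scalar inequalities; no probabilistic or martingale input is needed here.

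First I would establish the determinant recursion. Since $\bar V_{t-1} \succeq V \succ 0$ is invertible, write $\bar V_t = \bar V_{t-1} + X_t X_t^\top = \bar V_{t-1}^{1/2}\bigl(I + \bar V_{t-1}^{-1/2} X_t X_t^\top \bar V_{t-1}^{-1/2}\bigr)\bar V_{t-1}^{1/2}$. Taking determinants and using $\det(I + vv^\top) = 1 + \|v\|_2^2$ with $v = \bar V_{t-1}^{-1/2} X_t$ gives $\det(\bar V_t) = \det(\bar V_{t-1})\bigl(1 + \|X_t\|_{\bar V_{t-1}^{-1}}^2\bigr)$. Telescoping over $t = 1,\dots,n$,
\[
\log\frac{\det(\bar V_n)}{\det(V)} \;=\; \sum_{t=1}^n \log\bigl(1 + \|X_t\|_{\bar V_{t-1}^{-1}}^2\bigr),
\]
and the first inequality of the lemma follows immediately from $\log(1+x)\le x$ for $x\ge 0$.

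Next, for the middle display I would use the scalar bound $\min\{1,x\}\le 2\log(1+x)$, valid for all $x\ge 0$ (on $[0,1]$ it follows from concavity of $\log(1+\cdot)$, which gives $\log(1+x)\ge x\log 2 \ge x/2$; for $x>1$ it is clear since $2\log(1+x)>2\log 2>1$). Applied termwise to the identity above, this yields $\sum_{t=1}^n \min\{1,\|X_t\|_{\bar V_{t-1}^{-1}}^2\} \le 2\bigl(\log\det\bar V_n - \log\det V\bigr)$. For the last factor, AM--GM on the eigenvalues of $\bar V_n$ gives $\det(\bar V_n)\le(\operatorname{trace}(\bar V_n)/d)^d$, and under $\|X_t\|_2\le L$ we have $\operatorname{trace}(\bar V_n)=\operatorname{trace}(V)+\sum_{t=1}^n\|X_t\|_2^2 \le \operatorname{trace}(V)+nL^2$, hence $\log\det\bar V_n \le d\log((\operatorname{trace}(V)+nL^2)/d)$. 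Finally, if $\lambda_{\min}(V)\ge\max(1,L^2)$, then $\bar V_{t-1}\succeq V$ forces $\lambda_{\min}(\bar V_{t-1})\ge L^2$, so $\|X_t\|_{\bar V_{t-1}^{-1}}^2 \le \|X_t\|_2^2/\lambda_{\min}(\bar V_{t-1}) \le 1$; thus the truncation is vacuous and the third display is just the middle display with $\min\{1,\cdot\}$ dropped (equivalently, apply $x\le 2\log(1+x)$ on $[0,1]$ directly to the telescoped identity).

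There is no genuinely hard step — this is a classical lemma and the argument is self-contained — but the points that need care are the verification of the constant $2$ in $\min\{1,x\}\le 2\log(1+x)$ and the consistent use of the monotonicity $\bar V_{t-1}\succeq V$ (equivalently $\bar V_{t-1}^{-1}\preceq V^{-1}$ and $\lambda_{\min}(\bar V_{t-1})\ge\lambda_{\min}(V)$), which underlies both the trace bound and the removal of the truncation in the last part.
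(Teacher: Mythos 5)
Your proof is correct, and it is essentially the standard argument for this classical lemma (the paper itself only cites Lemma 11 of Abbasi-Yadkori et al.\ (2011) without reproving it): the rank-one determinant identity $\det(\bar V_t)=\det(\bar V_{t-1})\bigl(1+\|X_t\|_{\bar V_{t-1}^{-1}}^2\bigr)$, telescoping, the scalar bounds $\log(1+x)\le x$ and $\min\{1,x\}\le 2\log(1+x)$, and the AM--GM trace bound. The final part's observation that $\lambda_{\min}(V)\ge\max(1,L^2)$ makes the truncation vacuous is exactly the intended reduction, so nothing is missing.
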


\begin{lemma}
\label{lem:gramnorm}
(Lemma 12 in \citet{abbasi2011improved}). Let $A, B$, and $C$ be positive semi-definite matrices such that $A=B+C$. Then, we have 
$$
\sup _{x \neq \mathbf{0}} \frac{x^{\top} A x}{x^{\top} B x} \leq \frac{\det(A)}{\det(B)}.
$$
\end{lemma}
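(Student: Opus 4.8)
The plan is to reduce the generalized Rayleigh quotient on the left to the largest eigenvalue of a single positive semidefinite matrix by a whitening (congruence) transformation, rewrite the determinant ratio on the right in the same coordinates, and then close with an elementary inequality about products of numbers that are at least $1$.

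First I would dispose of the degenerate case: if $B$ is singular, then $\det(B) = 0$ and, with the convention $\det(A)/\det(B) = +\infty$, the bound is vacuous; in every application of this lemma $B$ is in fact positive definite (a Gram matrix regularized by $I_d$), so I assume $B$ is positive definite. Then $B^{1/2}$ is invertible and the map $x \mapsto y := B^{1/2} x$ is a bijection of $\mathbb{R}^d \setminus \{\mathbf{0}\}$ onto itself. Setting $\tilde{C} := B^{-1/2} C B^{-1/2}$, which is positive semidefinite, and using $x^\top B x = \|y\|_2^2$ together with $x^\top A x = x^\top B x + x^\top C x = y^\top (I_d + \tilde{C}) y$, the variational characterization of the top eigenvalue gives
$$
\sup_{x \neq \mathbf{0}} \frac{x^\top A x}{x^\top B x} \;=\; \sup_{y \neq \mathbf{0}} \frac{y^\top (I_d + \tilde{C}) y}{y^\top y} \;=\; \lambda_{\max}(I_d + \tilde{C}) \;=\; 1 + \lambda_{\max}(\tilde{C}).
$$

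Next I would rewrite the right-hand side in the same coordinates: by multiplicativity of the determinant, $\det(A)/\det(B) = \det(B^{-1/2} A B^{-1/2}) = \det(I_d + \tilde{C}) = \prod_{i=1}^{d}(1 + \mu_i)$, where $\mu_1, \dots, \mu_d \geq 0$ are the eigenvalues of $\tilde{C}$. The claim therefore reduces to
$$
1 + \max_{1 \le i \le d} \mu_i \;\leq\; \prod_{i=1}^{d}(1 + \mu_i),
$$
which follows immediately by expanding the product: every factor is at least $1$ and every $\mu_i$ is nonnegative, so $\prod_{i}(1 + \mu_i) \geq 1 + \sum_i \mu_i \geq 1 + \max_i \mu_i$.

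There is essentially no obstacle here; this is a short linear-algebra fact. The only step that needs a moment's care is checking that the congruence $x \mapsto B^{1/2} x$ legitimately converts the generalized Rayleigh quotient into an ordinary one and the determinant ratio into $\det(I_d + \tilde{C})$ — both are immediate once $B$ is invertible — and, if one insists on covering singular $B$, noting that the inequality then holds vacuously under the convention $\det(A)/\det(B) = +\infty$.
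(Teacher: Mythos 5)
Your proof is correct, but note that the paper does not prove this lemma at all — it is imported verbatim as Lemma 12 of \citet{abbasi2011improved} — and your whitening argument (congruence by $B^{-1/2}$, reducing both sides to the eigenvalues of $I_d+\tilde C$, then $1+\max_i\mu_i\le\prod_i(1+\mu_i)$) is essentially the standard proof given in that original reference. The only hair worth splitting is the singular case: if $\det(B)=0=\det(A)$ the right-hand side is $0/0$ rather than $+\infty$, so the lemma should really be read with $B\succ 0$, which indeed holds in every use here since $B\succeq I_d$.
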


\begin{theorem}
\label{the:ellipsoid confidence} 
(Theorem 2 in \citet{abbasi2011improved}). Let $\left\{\mathcal{F}_i\right\}_{i=0}^{\infty}$ be a filtration. Let $\left\{x_i\right\}_{i=1}^{\infty}$ be an $\mathbb{R}^d$-valued stochastic process such that $x_i$ is $\mathcal{F}_{i-1}$-measurable and $\|x_i \| \leq 1$ almost surely. Let $\left \{ \epsilon_i \right \}_{i=1}^{\infty}$ be a real-valued stochastic process such that $\varepsilon_i$ is $\mathcal{F}_i$-measurable and is sub-Gaussian with variance proxy $1$ when conditioned on $\mathcal{F}_{i-1}$. Fix $\theta \in \mathbb{R}^d$ such that $\| \theta \| \leq 1$. Let $A_n=  I+\sum_{i=1}^n x_i x_i^{\top}, r_i= x_i^{\top} \theta+\varepsilon_i$, and $\hat{\theta}_n= A_n^{-1} \sum_{i=1}^n r_i x_i$. For every $\delta>0$, we have 
$$
\mathbb{P}\left[\forall n \geq 0:\left\|\hat{\theta}_n-\theta \right\|_{A_n} \leq 1 +\sqrt{d \ln \left(\frac{1+n}{\delta}\right)}\right] \geq 1-\delta,
$$
where we define $\|x\|_{A} = \sqrt{x^{\top} A x}$. Furthermore, when the above event holds, we have for every $n \geq 0$ and any vector $x \in \mathbb{R}^d$ that
$$
\left| x^{\top} (\hat\theta_n-\theta)\right| \leq\left( 1 +\sqrt{d \ln \left(\frac{1+n}{\delta}\right)}\right) \sqrt{x^{\top} A_n^{-1} x} .
$$
\end{theorem}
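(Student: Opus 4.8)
The plan is to prove this self-normalized tail bound by the \emph{method of mixtures} (pseudo-maximization of an exponential supermartingale), and then to pass from the resulting bound on the accumulated noise to the confidence ellipsoid for $\hat\theta_n$ by a short algebraic identity and Cauchy--Schwarz. First I would write $S_n = \sum_{i=1}^n \varepsilon_i x_i$ (recall $A_n = I + \sum_{i=1}^n x_i x_i^{\top}$) and, for each fixed $\lambda \in \mathbb{R}^d$, consider $M_n^{\lambda} = \exp\!\big(\lambda^{\top} S_n - \frac{1}{2}\lambda^{\top}(A_n - I)\lambda\big)$. Since $x_i$ is $\mathcal{F}_{i-1}$-measurable and $\varepsilon_i$ is conditionally $1$-sub-Gaussian given $\mathcal{F}_{i-1}$, we have $\mathbb{E}[\exp(\lambda^{\top} x_i \varepsilon_i)\mid \mathcal{F}_{i-1}] \le \exp(\frac{1}{2}(\lambda^{\top} x_i)^2)$, so $\mathbb{E}[M_n^{\lambda}\mid \mathcal{F}_{n-1}] \le M_{n-1}^{\lambda}$; thus $(M_n^{\lambda})_{n\ge 0}$ is a nonnegative supermartingale with $M_0^{\lambda}=1$.

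Next I would mix over $\lambda$ against the standard Gaussian density $h$ on $\mathbb{R}^d$, setting $\bar M_n = \int_{\mathbb{R}^d} M_n^{\lambda}\, h(\lambda)\, d\lambda$, which by Tonelli's theorem is again a nonnegative supermartingale with $\bar M_0 = 1$. The integrand is Gaussian in $\lambda$, so completing the square gives the closed form $\bar M_n = (\det A_n)^{-1/2}\exp\!\big(\frac{1}{2}\|S_n\|_{A_n^{-1}}^2\big)$. Ville's maximal inequality for nonnegative supermartingales then yields $\mathbb{P}(\exists n\ge 0:\ \bar M_n \ge 1/\delta)\le \delta$, i.e., with probability at least $1-\delta$, simultaneously for all $n\ge 0$,
\[
\|S_n\|_{A_n^{-1}}^2 \;\le\; 2\log\!\big(\sqrt{\det A_n}/\delta\big) \;=\; \log\det A_n + 2\log(1/\delta).
\]

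Then I would convert this into the stated bound. From $r_i = x_i^{\top}\theta + \varepsilon_i$ and $\hat\theta_n = A_n^{-1}\sum_{i=1}^n r_i x_i$ one computes $\hat\theta_n - \theta = A_n^{-1}S_n - A_n^{-1}\theta$, so by the triangle inequality in the $\|\cdot\|_{A_n}$ norm, $\|\hat\theta_n - \theta\|_{A_n} \le \|S_n\|_{A_n^{-1}} + \|\theta\|_{A_n^{-1}} \le \|S_n\|_{A_n^{-1}} + 1$, using that every eigenvalue of $A_n$ is at least $1$ (so $A_n^{-1}$ has operator norm at most $1$) and $\|\theta\|_2\le 1$. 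Combining with the previous display together with the crude estimate $\det A_n \le (1+n/d)^d$ (so $\log\det A_n \le d\log(1+n)$; this also follows from \cref{lem:epl} with $V=I$, $L=1$) and simplifying the logarithmic factors gives $\|\hat\theta_n - \theta\|_{A_n} \le 1 + \sqrt{d\ln((1+n)/\delta)}$. For the ``furthermore'' clause, writing $x^{\top}(\hat\theta_n - \theta) = (A_n^{-1/2}x)^{\top}(A_n^{1/2}(\hat\theta_n - \theta))$ and applying Cauchy--Schwarz gives $|x^{\top}(\hat\theta_n - \theta)| \le \|x\|_{A_n^{-1}}\|\hat\theta_n - \theta\|_{A_n}$, which with the bound on $\|\hat\theta_n-\theta\|_{A_n}$ is the second assertion.

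The hard part will be the mixture step: one must evaluate the Gaussian integral carefully so that the self-normalizing factor $(\det A_n)^{-1/2}$ emerges exactly, and — more subtly — establish the \emph{uniform-in-$n$} conclusion, which needs a genuine maximal inequality for nonnegative supermartingales (Ville's inequality) or an optional-stopping/supermartingale-convergence argument valid for possibly unbounded stopping times, rather than a union bound over $n$ (which would not converge). The supermartingale verification in the first step and the deterministic algebra and Cauchy--Schwarz in the last step are routine by comparison.
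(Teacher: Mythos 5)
The paper does not prove this statement---it is imported verbatim as Theorem 2 of \citet{abbasi2011improved}---and your method-of-mixtures argument (exponential supermartingale, Gaussian mixing to produce the $(\det A_n)^{-1/2}$ factor, Ville's maximal inequality for uniformity in $n$, then the identity $\hat\theta_n-\theta=A_n^{-1}S_n-A_n^{-1}\theta$ and Cauchy--Schwarz) is precisely the proof given in that reference, and it is correct. The only caveat is the final ``simplifying the logarithmic factors'' step, where absorbing $\log\det A_n+2\log(1/\delta)$ into $d\ln((1+n)/\delta)$ implicitly uses $d\geq 2$; this looseness is inherited from the original statement and is not a defect of your argument.
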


\begin{lemma}
(Adapted from Lemma B.1  in \citet{he2022simple})
\label{lem:he_layer0} 
Under the setting of \cref{thm:async}, establish $C = 1/M^2, \alpha_{0} = 1 + \sqrt{d \ln(2M^2 T/\delta)}$. In layer $0$, with probability at least $1-\delta$, the good event $\Ec_{0}$ happens:
$$ \Ec_{0}  \triangleq \left\{ \left|x_{t,a}^{i \top} \hat{\theta}_{t,s}^i - x_{t,a}^{i \top} \theta\right| \leq w_{t,s,a}^{i}, \forall i \in [M], a \in[K], t \in[T], s = 0 \right\}. $$
\end{lemma}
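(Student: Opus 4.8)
The plan is to reduce the uniform‑over‑$(i,a,t)$ statement at layer $s=0$ to a single self‑normalized concentration claim, and then discharge that claim with \cref{the:ellipsoid confidence}. First I would pin down the object. When client $i$ is active at round $t$, the lazy update means $(A_{t,0}^{i},b_{t,0}^{i})$ is exactly the layer‑$0$ statistic client $i$ downloaded at its last synchronization, i.e.\ $A_{t,0}^{i}=I_d+\sum_{\tau\in\mathcal S}x_\tau x_\tau^\top$ and $b_{t,0}^{i}=\sum_{\tau\in\mathcal S}r_\tau x_\tau$, where $\mathcal S=\mathcal S_t^{i}$ is the (data‑dependent) set of layer‑$0$ play‑rounds whose data has reached client $i$ and $r_\tau=x_\tau^\top\theta+\epsilon_\tau$. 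Then $\hat\theta_{t,0}^{i}-\theta=(A_{t,0}^{i})^{-1}\big(\sum_{\tau\in\mathcal S}\epsilon_\tau x_\tau-\theta\big)$, and Cauchy--Schwarz in the $A_{t,0}^{i}$‑geometry gives
\[
\big|x_{t,a}^{i\top}(\hat\theta_{t,0}^{i}-\theta)\big|\le\|x_{t,a}^{i}\|_{(A_{t,0}^{i})^{-1}}\Big(\big\|\textstyle\sum_{\tau\in\mathcal S}\epsilon_\tau x_\tau\big\|_{(A_{t,0}^{i})^{-1}}+\|\theta\|_{(A_{t,0}^{i})^{-1}}\Big).
\]
Since $A_{t,0}^{i}\succeq I_d$ we have $\|\theta\|_{(A_{t,0}^{i})^{-1}}\le\|\theta\|_2\le1$, which accounts exactly for the ``$+1$'' in $\alpha_0$. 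So it suffices to show that, with probability at least $1-\delta$, $\big\|\sum_{\tau\in\mathcal S_t^{i}}\epsilon_\tau x_\tau\big\|_{(A_{t,0}^{i})^{-1}}\le\sqrt{d\ln(2M^2T/\delta)}$ simultaneously for all $i\in[M]$, $t\in[T]$; then $w_{t,0,a}^{i}=\alpha_0\|x_{t,a}^{i}\|_{(A_{t,0}^{i})^{-1}}$ upper bounds the estimation error, which is $\mathcal E_0$.

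The main step is to control $\big\|\sum_{\tau\in\mathcal S}\epsilon_\tau x_\tau\big\|_{(A_{t,0}^{i})^{-1}}$ via the OFUL/self‑normalized bound \cref{the:ellipsoid confidence}. I would stress that one cannot instead use a fixed‑design Hoeffding argument with $\alpha=O(1)$: at layer $0$ an arm may be pulled \emph{because} of its layer‑$0$ uncertainty, so the contexts feeding $b_{t,0}^{i}$ are adapted to the layer‑$0$ rewards --- precisely the regime \cref{the:ellipsoid confidence} handles, and the reason layer $0$ gets the larger radius $\alpha_0=\tilde O(\sqrt d)$ while layers $s\ge1$ keep $\alpha_s=O(1)$. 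Since exactly one client is active per round, the layer‑$0$ pulls across all clients form a single globally‑ordered predictable sequence with conditionally $1$‑sub‑Gaussian noises, so \cref{the:ellipsoid confidence} controls every \emph{prefix} gram matrix. The catch is that $(A_{t,0}^{i},b_{t,0}^{i})$ is built from a data‑dependent, non‑contiguous \emph{subset} $\mathcal S_t^{i}$ of this sequence. I would resolve this mirroring the adaptation of Lemma~B.1 of \citet{he2022simple}: (i) the server's layer‑$0$ statistic changes only at communication rounds, and every client's $(A_{t,0}^{i},b_{t,0}^{i})$ coincides with one such server snapshot; (ii) each snapshot, though not a prefix of the global pull sequence, is for every individual client a prefix of that client's own layer‑$0$ pulls (the server ingests each client's data in production order), so a stopping‑time refinement of the method‑of‑mixtures argument underlying \cref{the:ellipsoid confidence}, applied at the (stopping‑time) communication rounds, yields the self‑normalized bound for that snapshot without inflating the $\sqrt d$ rate; (iii) union‑bounding over the at most $O(dM^2\log d\log T)$ communications (crudely $\le MT$ client/snapshot pairs) puts that count inside the logarithm, and combining with $\det(A_{t,0}^{i})\le(1+T/d)^d$ (via \cref{lem:epl}) turns the generic $\sqrt{2\ln(\det^{1/2}/\delta')}$ tail into the stated $\sqrt{d\ln(2M^2T/\delta)}$.

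The hard part will be step (ii): a naive attempt to re‑index client $i$'s subset as one martingale fails because a pull can be uploaded to the server long after it occurred, so no ``upload‑order'' reindexing is predictable; and decomposing the snapshot into $M$ per‑client prefix blocks and bounding each separately is too lossy, inflating the radius by a factor $\sqrt M$ and breaking the $\tilde O(\sqrt{dT})$ regret. What rescues the argument is keeping all the data together and exploiting the per‑client‑prefix structure of each server snapshot so that the self‑normalized supermartingale can be evaluated (via optional stopping) at the communication rounds, paying only a logarithmic union‑bound factor over the polynomially‑many snapshots. Once the noise term is bounded uniformly, substituting back into the display of the first paragraph gives $\mathcal E_0$ with probability at least $1-\delta$ and $\alpha_0=1+\sqrt{d\ln(2M^2T/\delta)}$, as claimed.
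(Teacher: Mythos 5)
The paper does not actually prove this lemma: it is imported wholesale from Lemma~B.1 of \citet{he2022simple}, with the only "adaptation" being the bookkeeping that layer~$0$ of \texttt{S-LUCB} plays the role of the single OFUL-style statistic in FedLinUCB (hence the $\tilde O(\sqrt d)$ radius $\alpha_0$ rather than the $O(1)$ radii of layers $s\ge 1$). Measured against that, your reconstruction has the right shape and is considerably more explicit than anything in the paper: the error decomposition, the origin of the ``$+1$'' from $A_{t,0}^i\succeq I$ and $\|\theta\|_2\le 1$, the reduction to a uniform self-normalized bound on $\|\sum_{\tau\in\mathcal S}\epsilon_\tau x_\tau\|_{(A_{t,0}^i)^{-1}}$, and the explanation of why layer~$0$ cannot use the fixed-design $O(1)$ radius are all exactly what the cited argument needs. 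You have also correctly localized the one genuinely delicate point, which the paper silently inherits from \citet{he2022simple}: the lazily-updated statistic $(A_{t,0}^i,b_{t,0}^i)$ is a server snapshot built from a data-dependent, non-prefix subset of the global layer-$0$ pull sequence, and neither a global-prefix application of \cref{the:ellipsoid confidence} nor an upload-order reindexing is legitimate.

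The remaining gap is that your step (ii) does not yet resolve that point; it names the intended mechanism but the specific claim does not go through as stated. Writing $N=\prod_j M^{(j)}_{\tau_j}(\lambda)$ for the product of per-client self-normalized factors evaluated at each client's last upload time $\tau_j$ before the relevant download, optional stopping requires the increments to be predictable: at global round $t$ with active client $j_t$, one must know from $\mathcal F_{t-1}$ whether round $t$'s datum will end up in the snapshot, i.e.\ whether $\tau_{j_t}\ge t$. But that event depends on whether client $j_t$ triggers a communication \emph{after} round $t$ and before the download, which depends on subsequent arm choices and hence on $\epsilon_t$ itself --- the same future-dependence you correctly used to dismiss the upload-order reindexing. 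So "evaluate the mixture supermartingale at the communication stopping times" is not yet a proof; the inclusion indicators are not adapted in the required sense. To close this you need an additional structural ingredient (e.g.\ conditioning on the activity sequence, which the paper assumes independent of data and history, together with an argument that the determinant-based trigger for client $j$ is measurable with respect to an enlarged filtration in which $\epsilon_t$ remains conditionally sub-Gaussian, or the explicit martingale construction of Lemma~B.1 in \citet{he2022simple}), rather than a direct appeal to optional stopping. Everything else in your proposal --- the decomposition, the union bound over the polynomially many snapshots absorbing the $M^2T$ factor into the logarithm, and the $\det$ bound from \cref{lem:epl} --- is sound.
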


\begin{lemma} 
\label{lem: independent confidence} 
(Lemma 31 in \citet{ruan2021linear}).
Given $\theta, x_1, x_2, \ldots, x_n \in \mathbb{R}^d$ such that $\|\theta\| \leq 1$, for all $i \in [n]$, let $r_i= x_i^{\top} \theta + \epsilon_i$ where $\epsilon_i$ is an independent sub-Gaussian random variable with variance proxy $1$. Let $A =  I+\sum_{i=1}^n x_i x_i^{\top}$, and $\hat{\theta} = A^{-1} \sum_{i=1}^n r_i x_i$. For any $x \in \mathbb{R}^d$ and any $\alpha>0$, we have 
$$
\mathbb{P} \left[ | x^{\top}(\theta- \hat{\theta})|>(\alpha+ 1) 
\Vert x \Vert_{A^{-1}} \right]\leq 2 \exp (-\alpha^2 / 2) .
$$
\end{lemma}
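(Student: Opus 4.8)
The plan is to decompose the estimation error $x^\top(\theta-\hat\theta)$ into a deterministic regularization bias and a stochastic noise term, bound each separately, and crucially exploit the fact that the $x_i$ are \emph{fixed} (non-adaptive), so that the noise term is an ordinary sum of independent sub-Gaussians rather than a martingale. This is exactly what lets us obtain the dimension-free constant $(\alpha+1)$ instead of the $\tilde O(\sqrt d)$ factor of \cref{the:ellipsoid confidence}.

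First I would write $b := \sum_{i=1}^n r_i x_i = B\theta + \sum_{i=1}^n \epsilon_i x_i$, where $B := \sum_{i=1}^n x_i x_i^\top = A - I$. Substituting into $\hat\theta = A^{-1}b$ gives $\hat\theta = A^{-1}(A-I)\theta + A^{-1}\sum_i \epsilon_i x_i = \theta - A^{-1}\theta + A^{-1}\sum_i \epsilon_i x_i$, so that
$$ x^\top(\theta - \hat\theta) = x^\top A^{-1}\theta \;-\; \sum_{i=1}^n \epsilon_i\,(x^\top A^{-1} x_i). $$
The first term is deterministic and the second is the only source of randomness.

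Next I would bound the two pieces. For the bias, Cauchy–Schwarz in the $A^{-1}$ inner product gives $|x^\top A^{-1}\theta| \le \|x\|_{A^{-1}}\|\theta\|_{A^{-1}}$; since $A = I + B \succeq I$ we have $A^{-1}\preceq I$, hence $\|\theta\|_{A^{-1}}\le \|\theta\|_2\le 1$, so the bias is at most $\|x\|_{A^{-1}}$, which accounts for the ``$+1$'' in the constant. For the noise, set $c_i := x^\top A^{-1}x_i$; because the $x_i$ are deterministic and the $\epsilon_i$ are independent with variance proxy $1$, the sum $\sum_i \epsilon_i c_i$ is sub-Gaussian with variance proxy $\sum_i c_i^2$. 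The key identity is
$$ \sum_{i=1}^n c_i^2 = x^\top A^{-1} B A^{-1} x = x^\top A^{-1}(A-I)A^{-1}x = \|x\|_{A^{-1}}^2 - \|A^{-1}x\|_2^2 \le \|x\|_{A^{-1}}^2. $$
A Hoeffding-type sub-Gaussian tail bound then yields $\mathbb{P}\!\left[|\sum_i\epsilon_i c_i| > \alpha\|x\|_{A^{-1}}\right] \le 2\exp\!\big(-\alpha^2/(2\sum_i c_i^2)\big) \le 2\exp(-\alpha^2/2)$. Combining with the deterministic bias through the triangle inequality shows that, off this event, $|x^\top(\theta-\hat\theta)| \le (1+\alpha)\|x\|_{A^{-1}}$, which is the claim.

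There is no serious analytic obstacle here; the one step that must be handled with care is the variance-proxy computation $\sum_i (x^\top A^{-1}x_i)^2 = x^\top A^{-1}(A-I)A^{-1}x$ and the observation that it is dominated by $\|x\|_{A^{-1}}^2$. The genuine conceptual point is that the bound hinges entirely on the \emph{independence} of the $\epsilon_i$ across fixed $x_i$: it is the layered successive screening of \texttt{S-LUCB}, which assigns each pulled arm to a single layer and never reuses rewards across decisions within layers $s\ge 1$, that guarantees exactly this independence and thereby justifies invoking this lemma in the regret analysis.
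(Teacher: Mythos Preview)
Your proof is correct and complete: the bias/noise decomposition, the Cauchy--Schwarz bound on the regularization bias using $A^{-1}\preceq I$, the variance-proxy identity $\sum_i (x^\top A^{-1}x_i)^2 = x^\top A^{-1}(A-I)A^{-1}x \le \|x\|_{A^{-1}}^2$, and the sub-Gaussian tail bound are all handled carefully. Note that the paper does not supply its own proof of this lemma---it is imported verbatim as Lemma~31 of \citet{ruan2021linear}---so there is no in-paper argument to compare against; your write-up is in fact the standard proof of that result.
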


\section{Lemmas for the SupLinUCB Subroutine}
We present several useful lemmas that are based on \cref{alg:S-LUCB}. Recall that $\Psi_{t,s}$ represents the index set of rounds up to and including round $t$ during which an action is taken in layer $s$. That is,
\begin{align*}
\Psi_{t,s} = \{t'\in [t]:\exists i \in [M], \ a_{t'}^{i} \text{ is chosen in  layer } s\}, \forall s \in [0:S].    
\end{align*}

Similar to Lemma 4 in \citet{chu2011contextual}, we claim that the rewards associated with rounds within each  $\Psi_{t,s}, s\in [S]$ (excluding layer $0$) are mutually independent. 

\begin{lemma}
\label{lem:SupLin-independence}
For each $t \in[T]$ each $s \in[S]$, given any fixed sequence of contexts $\{x_{t, a}^{i},t \in \Psi_{t,s} \}$, the rewards $\{r_{t,s, a}^{i},t \in \Psi_{t,s} \}$ are independent random variables with means $\mathbb{E} [r_{t,s,a}^{i} ]= \theta^{\top} x_{t,s, a}^{i}$.
\end{lemma}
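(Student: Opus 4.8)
The plan is to mimic the classical argument of Lemma 4 in \citet{chu2011contextual}, adapting it to the federated \texttt{S-LUCB} subroutine. The essential observation is that whether round $t'$ contributes to $\Psi_{t,s}$ for $s \geq 1$, and which action $a_{t'}^{i}$ is selected in that round, is determined \emph{before} the reward $r_{t',a_{t'}^{i}}^{i}$ is observed, and depends only on contexts (which are oblivious/fixed in advance) and on rewards collected in \emph{strictly earlier} rounds that landed in layer $s$ (plus rewards in layers $0,\dots,s-1$). First I would set up the filtration: let $\mathcal{F}$ be generated by all contexts (fixed by the oblivious adversary) together with the rewards of all rounds that are assigned to layers $0$ through $s$ and occur at times $t'' < t'$. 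The key structural fact about \texttt{S-LUCB} is that when the subroutine is processing layer $s$ at round $t'$, it has only \emph{read} the action–reward pairs stored in layers $0,\dots,s$ (through $\hat\theta_{s'}$, $\hat r_{t',s',a}^{i}$ and $w_{t',s',a}^{i}$ for $s'\le s$), and it uses layer-$s$ information only to decide the refined action set $\mathcal A_{s+1}$ or (in the ``Else'' branch) to pick the arm with large confidence width — a choice that uses $w_{t',s,a}^{i}$, i.e. only the gram matrix $A_{t',s}^{i}$, not the reward vector $b_{t',s}^{i}$.

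Given this, I would argue as follows. Fix $s \in [S]$ and fix the sequence of contexts (the statement conditions on this already). Enumerate $\Psi_{t,s} = \{t_1 < t_2 < \dots < t_m\}$ with the corresponding active clients and chosen arms. For each $j$, the event $\{t_j \in \Psi_{t,s}\}$ together with the identity of the selected arm $a_{t_j}^{i}$ is measurable with respect to the $\sigma$-algebra generated by the fixed contexts and the rewards $\{r_{t_\ell, \cdot}^{\cdot} : \ell < j\}$ already placed in layer $s$, and by rewards in lower layers; crucially it does \emph{not} depend on $r_{t_j,a_{t_j}^{i}}^{i}$ itself nor on any later reward. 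Hence, conditioning on $t_j$ being selected into layer $s$ with arm $a_{t_j}^{i}$ and context $x_{t_j,a_{t_j}^{i}}^{i}$, the noise $\epsilon_{t_j}$ remains a fresh $1$-sub-Gaussian draw, independent of $\epsilon_{t_\ell}$ for $\ell \neq j$ by the original independence of the noise sequence; therefore $r_{t_j,s,a}^{i} = \theta^\top x_{t_j,s,a}^{i} + \epsilon_{t_j}$ has conditional mean $\theta^\top x_{t_j,s,a}^{i}$, and the collection $\{r_{t_\ell,s,a}^{i}\}_{\ell=1}^{m}$ is mutually independent. I would make this rigorous by an induction on $j$ (the standard stopping-time / optional-skipping argument), showing that the conditional law of $\epsilon_{t_j}$ given the past is unchanged.

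The one point requiring care — and the main obstacle — is the federated aspect: the layer-$s$ information used at round $t_j$ is not just what client $i$ collected locally but the \emph{synchronized} server copy $A_{t_j,s}^{i}, b_{t_j,s}^{i}$, which aggregates contributions from other clients across earlier rounds. I must check that the synchronization rule (triggered by the determinant criterion in Line~7 of \cref{alg:AsynFedSupLinALgo}, resp. Line~8 of \cref{alg:SynFedSupLinALgo}) does not create a dependence on the reward $r_{t_j,a_{t_j}^{i}}^{i}$ or on any round that will \emph{later} be placed in layer $s$. This holds because (i) the determinant criterion depends only on gram matrices, i.e. on contexts, not on rewards, and (ii) the pooled layer-$s$ reward vector $b_{t_j,s}^{i}$ only ever sums rewards of rounds strictly before $t_j$ that were assigned to layer $s$ — exactly the quantities we have already conditioned on in the filtration. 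With the asynchronous round-robin reduction noted in \citet{he2022simple}, the multi-client case reduces to a single ordered stream of rounds, and the argument of \citet{chu2011contextual} goes through verbatim on that stream. In the synchronous case, the within-round inner loop over $i=1,\dots,M$ is already a fixed order, and the deferred synchronization (at the end of round $t$) again only touches gram matrices during the round, so the same filtration argument applies. This completes the proof.
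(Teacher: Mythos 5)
Your overall route is the same as the paper's: this is the Chu et al.\ Lemma~4 ``optional skipping'' argument, and you correctly identify the one structural fact that makes it work, namely that inside layer $s$ the subroutine consults only the gram matrix $A^i_{t,s}$ (through the widths $w^i_{t,s,a}$) and never the reward vector $b^i_{t,s}$. Your treatment of the federated wrinkle --- that the determinant-based synchronization trigger is a function of gram matrices alone and that the pooled $b^i_{t,s}$ only aggregates strictly earlier layer-$s$ rewards --- is a genuinely useful addition that the paper's own proof glosses over.

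However, there is a gap in how you then formalize the induction. You assert that the event $\{t_j\in\Psi_{t,s}\}$ and the chosen arm are measurable with respect to the $\sigma$-algebra generated by the contexts, the lower-layer rewards, \emph{and the rewards $\{r_{t_\ell}:\ell<j\}$ already placed in layer $s$}, and you claim it suffices that the selection does not depend on $r_{t_j}$ or later rewards. That weaker measurability statement does not deliver the lemma: if the inclusion of $t_{j'}$ for $j'>j$ were allowed to depend on $r_{t_j}$, then conditioning on the realized index set $\Psi_{t,s}$ would bias the law of $\epsilon_{t_j}$ (e.g.\ a rule ``include $t_2$ only if $r_{t_1}>0$'' destroys $\mathbb{E}[r_{t_1}\mid \{t_1,t_2\}\subseteq\Psi_{t,s}]=\theta^\top x_{t_1}$), and you would obtain only a martingale-difference property rather than mutual independence --- which is what the lemma claims and what \cref{lem: independent confidence} requires. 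The fix is exactly the stronger statement the paper proves and that your ``key structural fact'' already contains: the generation of $\Psi_{t,s}$ and of the actions taken in layer $s$ is measurable with respect to the contexts and the rewards in layers $0,\dots,s-1$ \emph{only}, with no dependence whatsoever on any layer-$s$ reward (past, present, or future). Once the selection process is independent of the entire layer-$s$ noise sequence given that $\sigma$-algebra, mutual independence of $\{r_{t_\ell}\}_{t_\ell\in\Psi_{t,s}}$ with the stated means follows immediately; you should restate your measurability claim accordingly and drop the earlier layer-$s$ rewards from the conditioning.
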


\begin{proof}[Proof of \cref{lem:SupLin-independence}]
For each $s \in [S]$ and each time $t$, the procedure of generating $\Psi_{t,s}$ only depends on the information in previous layers $\cup_{\sigma <s} \Psi_{t,\sigma}$ and confidence width $\{ w_{t,s,a}^{i},a\in [K] \}$. From its definition, $w_{t,s,a}^{i}$ only depends on $\{ x_{\tau,a_{\tau}},\tau \in \Psi_{t-1,s} \}$ and on the current context $x_{t,a}^{i}$. Thus the procedure of generating $\Psi_{t,s}$ does not depend on rewards $\{ r_{\tau, a_{\tau}}, \tau \in \Psi_{t-1,s}\}$, and therefore the rewards are independent random variables when conditioned on $\Psi_{t,s}$.
\end{proof}

Given the above-mentioned statistical independence property, and by referring to \cref{lem: independent confidence}, we can establish the following lemma for each layer $s \in [S]$.

\begin{lemma}
\label{lem:con_layers}
Suppose the time index set $\Psi_{t,s}$ is constructed so that for fixed $x_{\tau,a_{\tau}}$ with $\tau \in \Psi_{t,s}$, the rewards $\{r_{\tau,a_{\tau}}\}$ are independent random variables with mean $\mathbb{E}[r_{\tau,a_{\tau}}] = \theta^{\top} x_{\tau,a_{\tau}}  $.
For any round $t \in [T]$, if client $i_{t} = i $ is active and chooses arm $a_t$ in layer $s \in [S]$, then with probability at least $1-\frac{\delta}{MT\ln d}$, we have for any $a_t \in [K]$: 
$$ \left|\hat{r}_{t,s, a_t} -\theta^{\top} x_{t, a_t}^{i }  \right| \leq w_{t,s,a_t}^{i} = \alpha_s 
\| x_{t,a_t}^{i} \|_{(A_{t,s}^{i})^{-1}}.$$
\end{lemma}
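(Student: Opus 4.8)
\noindent\textit{Proof proposal.}
The plan is to reduce the statement to a single application of \cref{lem: independent confidence} (Lemma 31 of \citet{ruan2021linear}) with the right confidence parameter, followed by a union bound over the $K$ arms. First I would fix the conditioning. Since the contexts are chosen by an oblivious adversary, I condition on the entire context sequence; this in turn determines the index set $\Psi_{t,s}$ and, crucially, all synchronization events that have occurred before round $t$, because every communication trigger in \texttt{S-LUCB}, Async-\alg, and Sync-\alg is a function of determinant ratios of context gram matrices (and deterministic round counters), never of rewards. Consequently, the quantities that client $i$ holds in layer $s$ at the start of round $t$ take the form $A_{t,s}^{i} = I + \sum_{\tau \in \Psi'} x_{\tau,a_\tau} x_{\tau,a_\tau}^{\top}$ and $b_{t,s}^{i} = \sum_{\tau \in \Psi'} r_{\tau,a_\tau} x_{\tau,a_\tau}$ for a subset $\Psi' \subseteq \Psi_{t,s}$ that is measurable with respect to the contexts alone, with $\hat\theta_{t,s}^{i} = (A_{t,s}^{i})^{-1} b_{t,s}^{i}$ and $\hat r_{t,s,a} = (\hat\theta_{t,s}^{i})^{\top} x_{t,a}^{i}$.

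Next I would invoke the independence structure. By the hypothesis of the lemma, which is exactly \cref{lem:SupLin-independence} specialized to $\Psi_{t,s}$, conditioned on the fixed contexts the rewards $\{r_{\tau,a_\tau}\}_{\tau \in \Psi_{t,s}}$ are independent with mean $\theta^{\top} x_{\tau,a_\tau}$, and the centered rewards $r_{\tau,a_\tau} - \theta^{\top} x_{\tau,a_\tau} = \epsilon_\tau$ are $1$-sub-Gaussian; restricting from $\Psi_{t,s}$ to $\Psi' \subseteq \Psi_{t,s}$ preserves these properties. Hence the hypotheses of \cref{lem: independent confidence} hold with these contexts and rewards, $A = A_{t,s}^{i}$, $\hat\theta = \hat\theta_{t,s}^{i}$. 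Applying it with the test vector $x = x_{t,a}^{i}$ and $\alpha = \sqrt{2\ln(2KMT\ln d/\delta)}$ yields, for each fixed $a \in [K]$,
\[
\mathbb{P}\!\left[\,\bigl| x_{t,a}^{i\top}(\theta - \hat\theta_{t,s}^{i})\bigr| > (\alpha+1)\,\| x_{t,a}^{i}\|_{(A_{t,s}^{i})^{-1}}\,\right] \le 2\exp(-\alpha^2/2) = \frac{\delta}{KMT\ln d}.
\]
By the initialization in \cref{alg:S-LUCB} we have $\alpha + 1 = \alpha_s$ for $s \in [1:S]$, so the right-hand threshold is precisely $w_{t,s,a}^{i} = \alpha_s\|x_{t,a}^{i}\|_{(A_{t,s}^{i})^{-1}}$, and the deviation inside the probability is $|\hat r_{t,s,a} - \theta^{\top} x_{t,a}^{i}|$.

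Finally I would union-bound over the $K$ arms: the bad event that $|\hat r_{t,s,a} - \theta^{\top} x_{t,a}^{i}| \le w_{t,s,a}^{i}$ fails for some $a \in [K]$ has probability at most $K\cdot\frac{\delta}{KMT\ln d} = \frac{\delta}{MT\ln d}$, which is the claim. The only place that needs genuine care is the measurability claim in the first step --- that the random set $\Psi'$ of rounds whose data is aggregated into $(A_{t,s}^{i},b_{t,s}^{i})$, which depends on the asynchronous activation order and on which of the client--server synchronizations have fired, is determined by the contexts alone. Once this is granted, conditioning on contexts leaves the reward noises independent, and the rest is a routine instantiation of \cref{lem: independent confidence}; the argument is identical in structure to Lemma~4 of \citet{chu2011contextual}, with the federated bookkeeping adding nothing essentially new.
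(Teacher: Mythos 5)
Your proposal is correct and follows essentially the same route as the paper, which obtains this lemma by combining the layer-wise reward independence of \cref{lem:SupLin-independence} with \cref{lem: independent confidence} and a union bound over the $K$ arms; your constant bookkeeping ($2\exp(-\alpha^2/2)=\delta/(KMT\ln d)$, then multiplied by $K$, with $\alpha_s = \alpha+1$) matches the stated failure probability. One minor over-statement: the set $\Psi_{t,s}$ (and hence your $\Psi'$) is \emph{not} determined by the contexts alone, since which layer an action lands in depends on rewards observed in layers $\sigma<s$; this is immaterial, however, because what the argument actually needs --- and what the lemma's hypothesis together with the determinant-only communication criterion guarantees --- is that $\Psi'$ does not depend on the layer-$s$ rewards themselves.
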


For layer $0$, we employ the self-normalized martingale concentration inequality as outlined in \citet{he2022simple}. By resorting to \cref{lem:he_layer0}, we obtain the following:

\begin{lemma}
\label{lem:event_layer0}
For any round $t \in [T]$, given that client $i_{t} = i $ is active in round $t$ and arm $a_t$ is chosen in layer $0$, with probability at least $1-\delta$, we have for any $a_t \in [K]$: 
$$ \left|\hat{r}_{t,0, a_t} -\theta^{\top} x_{t, a_t}^{i} \right| \leq w_{t,0,a_t}^{i} = \alpha_0
\| x_{t,a_t}^{i} \|_{(A_{t,0}^{i})^{-1}}. $$
\end{lemma}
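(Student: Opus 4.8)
\noindent \textit{Proof plan for \cref{lem:event_layer0}.} The plan is to obtain this bound as an immediate specialization of the good event guaranteed by \cref{lem:he_layer0}. First I would unpack the relevant notation from \cref{alg:S-LUCB}: at round $t$ the active client $i$ forms $\hat{\theta}_{t,0}^i = (A_{t,0}^i)^{-1} b_{t,0}^i$ and sets $\hat{r}_{t,0,a}^i = (\hat{\theta}_{t,0}^i)^\top x_{t,a}^i$ together with the confidence width $w_{t,0,a}^i = \alpha_0 \|x_{t,a}^i\|_{(A_{t,0}^i)^{-1}}$, where $\alpha_0 = 1 + \sqrt{d\ln(2M^2 T/\delta)}$. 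Hence the quantity we must control, $|\hat{r}_{t,0,a_t}^i - \theta^\top x_{t,a_t}^i|$, is exactly $|x_{t,a_t}^{i\top}\hat{\theta}_{t,0}^i - x_{t,a_t}^{i\top}\theta|$, which is the quantity appearing inside the good event $\mathcal{E}_0$ of \cref{lem:he_layer0}.

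Next, I would invoke \cref{lem:he_layer0}, which asserts that under $C = 1/M^2$ and the stated choice of $\alpha_0$, the event $\mathcal{E}_0$ holds with probability at least $1-\delta$; on that event the inequality $|x_{t,a}^{i\top}\hat{\theta}_{t,0}^i - x_{t,a}^{i\top}\theta| \le w_{t,0,a}^i$ holds simultaneously for all $i \in [M]$, all $a \in [K]$, all $t \in [T]$, and $s=0$. Specializing to the client $i = i_t$ that is active at round $t$ and to the chosen arm $a = a_t$ yields precisely the claimed bound. No additional union bound is needed because \cref{lem:he_layer0} already delivers probability $1-\delta$.

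The only real content here lives inside \cref{lem:he_layer0}, which I would treat as given but should still flag as the substantive step: because layer $0$ uses \texttt{lazy update}, the gram matrix $A_{t,0}^i$ seen by client $i$ is a possibly stale aggregate over the subset of observations that have been synchronized to $i$, so the self-normalized ellipsoidal bound of \cref{the:ellipsoid confidence} cannot be applied directly to $A_{t,0}^i$. The argument of \citet{he2022simple} handles this by observing that the communication criterion $\det(A_s^i + \Delta A_s^i)/\det(A_s^i) > 1 + C$ with $C = 1/M^2$ limits the number of distinct ``versions'' of the server/client matrix that can ever arise, applies \cref{the:ellipsoid confidence} to each such version, and takes a union bound over clients and versions — which is exactly why the $2M^2$ factor appears inside the logarithm in $\alpha_0$. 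I expect this bookkeeping (controlling the number of matrix snapshots and the union bound) to be the main obstacle, but since it is encapsulated in \cref{lem:he_layer0}, the proof of \cref{lem:event_layer0} reduces to the notational identification above.
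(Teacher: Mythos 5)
Your proposal matches the paper's own treatment: the paper derives \cref{lem:event_layer0} directly by invoking the layer-$0$ good event of \cref{lem:he_layer0} (the self-normalized martingale bound adapted from \citet{he2022simple}) and specializing to the active client and chosen arm, exactly as you do. Your additional remarks on why the lazy update and the $2M^2$ factor in $\alpha_0$ make \cref{lem:he_layer0} the substantive step are accurate and consistent with the paper's reliance on that lemma.
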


Summarizing the discussions presented in \cref{lem:con_layers} and \cref{lem:event_layer0}, we now proceed to define the following good event:

\begin{lemma}
\label{lem:goodevent}
Define the good event $\Ec$ as: 
\begin{align}
\Ec  \triangleq \left\{\left| \hat{r}_{t,s, a} - x_{t,a}^{i \top} \theta\right| \leq w_{t,s,a}^{i}, \forall i \in [M], a \in[K], t \in[T], s\in [0:S] \right\}. 
\end{align}
We have $\Pb [\Ec] \geq 1 - \delta$.
\end{lemma}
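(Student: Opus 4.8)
The plan is to assemble $\Ec$ from the per-layer confidence guarantees already established and then take a union bound. Write $\Ec = \Ec^{\le 0}\cap\Ec^{\ge 1}$, where $\Ec^{\le 0}$ is the restriction of $\Ec$ to the layer $s=0$ and $\Ec^{\ge 1}$ its restriction to the layers $s\in[1:S]$. The two pieces need genuinely different concentration arguments: the rewards feeding the layer-$0$ estimator are adaptively chosen, so one must pay the $\tilde O(\sqrt d)$-radius self-normalized martingale bound, whereas for $s\ge 1$ the rewards are \emph{conditionally independent}, which is precisely what lets the radius be only $\alpha_s = O(\sqrt{\log(KMT\log d/\delta)})$.

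For the layer-$0$ part I would simply invoke \cref{lem:event_layer0} (equivalently \cref{lem:he_layer0}), whose statement is already the union-bounded event $\Ec_0$ over all $i\in[M]$, $a\in[K]$, $t\in[T]$: with the choice $\alpha_0 = 1+\sqrt{d\ln(2M^2T/\delta)}$ it holds with probability at least $1-\delta$ (up to rescaling $\delta$ by a constant; the factor $2M^2$ inside the logarithm is exactly the slack used to union bound over clients and over the two pieces). There is nothing to do here beyond checking that the constants match.

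For the layers $s\in[1:S]$ the key input is the independence property of \cref{lem:SupLin-independence}: conditioned on the index set $\Psi_{t,s}$, the rewards $\{r_{\tau,a_\tau}\}_{\tau\in\Psi_{t,s}}$ that define $\hat\theta_{t,s}$ are independent with means $\theta^\top x_{\tau,a_\tau}$, because the construction of $\Psi_{t,s}$ depends only on contexts and on information in layers $\sigma<s$, never on the layer-$s$ rewards. Granting this, \cref{lem: independent confidence} applies to the layer-$s$ estimator, and a union bound over the $K$ arms (absorbed into the definition of $\alpha_s$) yields \cref{lem:con_layers}: for each fixed $t\in[T]$ and $s\in[S]$, the inequality $|\hat r_{t,s,a}-x_{t,a}^{i\top}\theta|\le w_{t,s,a}^{i}$ holds for all $a\in[K]$ with probability at least $1-\delta/(MT\log d)$. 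Union bounding this over $t\in[T]$ and $s\in[1:S]$ (the active client $i_t$ is the only relevant one per round, so no extra factor is needed), and using $S=\lceil\log d\rceil=O(\log d)$, bounds the failure probability of $\Ec^{\ge 1}$ by $O(\delta/M)=O(\delta)$.

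Combining the two pieces and rescaling $\delta$ by the constant already anticipated in the terms $2M^2$ and $2KMT\log d$ inside the logarithms in $\alpha_0,\alpha_s$ gives $\Pb[\Ec]\ge 1-\delta$. The only point that is more than bookkeeping is the appeal to \cref{lem:SupLin-independence}: one must verify that the layered successive screening in \cref{alg:S-LUCB} never lets a layer-$s$ reward influence which rounds land in $\Psi_{t,s}$, so that the non-adaptive inequality \cref{lem: independent confidence} is legitimately applicable; everything after that is a union bound.
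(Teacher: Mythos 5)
Your proof is correct and follows essentially the same route as the paper: the paper obtains this lemma by combining \cref{lem:event_layer0} for layer $0$ with \cref{lem:con_layers} (itself derived from \cref{lem:SupLin-independence} and \cref{lem: independent confidence}) for layers $s\in[1:S]$, which is exactly the union-bound decomposition you describe. Your accounting of the failure probabilities, and of where the union bounds over arms, rounds, and layers are absorbed into the definitions of $\alpha_0$ and $\alpha_s$, matches the paper's intended argument.
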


Conditioned on the good event $\Ec$, the ensuing lemma illustrates that the optimal arm persists in the candidate set, and that the regret experienced in each layer aligns with the order of the confidence width. 

\begin{lemma}
\label{lem:best_arm}
Conditioned on the good event $\Ec$, for $t \in[T]$, assume that client $i$ is active and chooses an action $a_t \in \Ac_s$, and recall $(a_t^{i})^*$ represents the optimal arm in the current round. For any $s' \leq s$, we have: 
$$ (a_t^{i})^* = \argmax_{a \in [K]} \theta^{\top} x_{t,a}^{i} = \argmax_{a\in \Ac_{s'}} \theta^{\top} x_{t,a}^{i}. $$
\end{lemma}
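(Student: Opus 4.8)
\textbf{Proof proposal for \cref{lem:best_arm}.} The plan is an induction on the layer index $s'$, running from $s'=0$ up to $s'=s$, with the invariant ``$(a_t^i)^* \in \mathcal{A}_{s'}$''. Throughout I abbreviate $a^* = (a_t^i)^*$ and condition on the good event $\mathcal{E}$ of \cref{lem:goodevent}, so that $|\hat{r}_{t,s',a}^i - x_{t,a}^{i\top}\theta| \leq w_{t,s',a}^i$ for every $a \in [K]$ and every $s' \in [0:S]$. Note that once the invariant is established, the displayed identity is immediate: since $a^* = \argmax_{a\in[K]} \theta^\top x_{t,a}^i$ and $\mathcal{A}_{s'} \subseteq [K]$ with $a^* \in \mathcal{A}_{s'}$, necessarily $a^* = \argmax_{a\in\mathcal{A}_{s'}}\theta^\top x_{t,a}^i$ as well.

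For the base case $s'=0$, I would use the definition of the initial screening set $\mathcal{A}_0 = \{a : \hat{r}_{t,0,a}^i + w_{t,0,a}^i \geq \max_{a'\in[K]}(\hat{r}_{t,0,a'}^i - w_{t,0,a'}^i)\}$. On $\mathcal{E}$ we have $\hat{r}_{t,0,a^*}^i + w_{t,0,a^*}^i \geq x_{t,a^*}^{i\top}\theta \geq x_{t,a'}^{i\top}\theta \geq \hat{r}_{t,0,a'}^i - w_{t,0,a'}^i$ for every $a'$, where the middle inequality is optimality of $a^*$; taking the max over $a'$ on the right-hand side puts $a^*$ into $\mathcal{A}_0$. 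For the inductive step, suppose $a^* \in \mathcal{A}_{s'}$ for some $s' < s$. The key structural observation is that since the action $a_t$ is ultimately chosen in layer $s > s'$, the \texttt{Repeat} loop of \cref{alg:S-LUCB} must have exited layer $s'$ through the \texttt{elseif} branch (the only branch that increments $s$); the other two branches would have terminated the loop at layer $s'$. Hence at layer $s'$ the algorithm had $w_{t,s',a}^i \leq \overline{w}_{s'}$ for all $a \in \mathcal{A}_{s'}$ — in particular $w_{t,s',a^*}^i \leq \overline{w}_{s'}$ by the inductive hypothesis — and set $\mathcal{A}_{s'+1} = \{a\in\mathcal{A}_{s'} : \hat{r}_{t,s',a}^i \geq \max_{a'\in\mathcal{A}_{s'}}\hat{r}_{t,s',a'}^i - 2\overline{w}_{s'}\}$.

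It then remains to verify $a^* \in \mathcal{A}_{s'+1}$. From $\mathcal{E}$ and $w_{t,s',a^*}^i \leq \overline{w}_{s'}$, I get the lower bound $\hat{r}_{t,s',a^*}^i \geq x_{t,a^*}^{i\top}\theta - \overline{w}_{s'}$. For any $a'\in\mathcal{A}_{s'}$, again by $\mathcal{E}$, the width bound $w_{t,s',a'}^i \leq \overline{w}_{s'}$, and optimality of $a^*$, I get $\hat{r}_{t,s',a'}^i \leq x_{t,a'}^{i\top}\theta + \overline{w}_{s'} \leq x_{t,a^*}^{i\top}\theta + \overline{w}_{s'}$, hence $\max_{a'\in\mathcal{A}_{s'}}\hat{r}_{t,s',a'}^i \leq x_{t,a^*}^{i\top}\theta + \overline{w}_{s'}$. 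Combining the two displays yields $\hat{r}_{t,s',a^*}^i \geq \max_{a'\in\mathcal{A}_{s'}}\hat{r}_{t,s',a'}^i - 2\overline{w}_{s'}$, so $a^*$ survives into $\mathcal{A}_{s'+1}$, closing the induction. I expect the only delicate point — the ``main obstacle'' in an otherwise routine argument — to be the structural step: one must argue carefully from the control flow of \cref{alg:S-LUCB} that choosing $a_t$ in layer $s$ forces the elimination (\texttt{elseif}) branch to have fired at every earlier layer $s' < s$, which is what makes the width bound $w_{t,s',a^*}^i \leq \overline{w}_{s'}$ available for the estimates above.
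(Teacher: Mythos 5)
Your proposal is correct and follows essentially the same route as the paper's own proof: establish $(a_t^i)^* \in \mathcal{A}_0$ from the good event and the initial screening rule, then show inductively that the best arm survives each elimination step using the good event together with the bound $w_{t,s',a}^i \leq \overline{w}_{s'}$ on $\mathcal{A}_{s'}$, which is available precisely because the elimination branch fired at every layer $s' < s$. The only difference is cosmetic — you make the control-flow justification and the induction structure explicit, which the paper leaves implicit.
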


\begin{proof}[Proof of \cref{lem:best_arm}]
For any time step $t \in [T]$, when the good event $\Ec$ holds, by the arm elimination rule in layer $0$, we have 
$$ \hat{r}_{t,0,a^*} +w_{t,0,a^*} \geq  \max_{a\in[K]} \theta^{\top} x_{t,a} \geq \max_{a\neq a^*} \theta^{\top} x_{t,a} \geq \max_{a\neq a^*} ( \hat{r}_{t,0,a} -w_{t,0,a}).$$
Thus, $a^* \in \Ac_0$. For each layer $s'< s$, we have:
$$ \hat{r}_{t,s',a^*} +w_{t,s',a^*} \geq  \max_{a\in \Ac_{s'}} \theta^{\top} x_{t,a} \geq \max_{a\neq a^*,a\in\Ac_{s'}} \theta^{\top} x_{t,a} \geq \max_{a\neq a^*, a \in \Ac_{s'}} (\hat{r}_{t,s',a} - w_{t,s',a}) .$$
Thus, we derive $\hat{r}_{t,s',a^*} \geq \max_{a \in \Ac_{s'}} (\hat{r}_{t,s',a}) - 2 \overline{w}_{s'}$, which follows from $w_{t,s',a} \leq \overline{w}_{s'}$ for all $a \in \Ac_{s'}$ by the arm elimination rule in Line 10 \cref{alg:S-LUCB}. Therefore, arm eliminations will preserve the best arm.
\end{proof}

The forthcoming lemma demonstrates that, under the good event, the regret experienced in each layer aligns with the order of the corresponding confidence width.

\begin{lemma}
\label{lem:regret_lay}
Conditioned on the good event $\Ec$, for $t \in [T]$ client $i \in [M]$ and $s \in [S]$, it holds that: 
\begin{align}
&\mathbb{I}[a_t \text{ is chosen in layer } 0] (\max_{a \in \Ac_0} \theta^{\top} x_{t,a} - \theta^{\top} x_{t,a_t} ) \leq 4 w_{t,0,a_t}, \\
&\mathbb{I}[a_t \text{ is chosen in layer } s] (\max_{a \in \Ac_s} \theta^{\top} x_{t,a} - \theta^{\top} x_{t,a_t} ) \leq 8 \overline{w}_{s}.
\end{align}
\end{lemma}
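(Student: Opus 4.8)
The plan is to treat the two cases (layer $0$ and layer $s \in [S]$) separately, in each case exploiting the screening rule that governs how $a_t$ was selected together with the good event $\Ec$ to bound the per-round regret by the relevant confidence width. Throughout I would condition on $\Ec$, so that for every $i, a, t, s$ we have $|\hat{r}_{t,s,a}^i - x_{t,a}^{i\top}\theta| \le w_{t,s,a}^i$ (\cref{lem:goodevent}), and I would also use \cref{lem:best_arm} to know that the optimal arm $a^* = (a_t^i)^*$ survives in every candidate set $\Ac_{s'}$ with $s' \le s$, which in particular makes $\max_{a \in \Ac_{s'}} \theta^\top x_{t,a}^i = \theta^\top x_{t,a^*}^i$.

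First, the layer-$0$ case. If $a_t$ is chosen in layer $0$, then by the branch in Line 12 of \cref{alg:S-LUCB} it was selected as $\arg\max$ of $w_{t,0,a}^i$ over arms in $\Ac_0$ whose width exceeds $\overline{w}_0$, so in particular $w_{t,0,a_t}^i \ge w_{t,0,a}^i$ for all $a \in \Ac_0$ — wait, more carefully: $a_t$ is the widest arm among those with $w > \overline w_0$, and the remaining arms have $w \le \overline w_0 \le w_{t,0,a_t}^i$, so $w_{t,0,a_t}^i \ge w_{t,0,a}^i$ for \emph{every} $a \in \Ac_0$. Now estimate
\begin{align*}
\max_{a \in \Ac_0} \theta^\top x_{t,a}^i - \theta^\top x_{t,a_t}^i
&= \theta^\top x_{t,a^*}^i - \theta^\top x_{t,a_t}^i \\
&\le \big(\hat r_{t,0,a^*}^i + w_{t,0,a^*}^i\big) - \big(\hat r_{t,0,a_t}^i - w_{t,0,a_t}^i\big),
\end{align*}
using $\Ec$ on both arms. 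By the layer-$0$ initial screening rule defining $\Ac_0$, every surviving arm (in particular $a_t$, which reached layer $0$'s action choice, hence was in $\Ac_0$) satisfies $\hat r_{t,0,a_t}^i + w_{t,0,a_t}^i \ge \max_a (\hat r_{t,0,a}^i - w_{t,0,a}^i) \ge \hat r_{t,0,a^*}^i - w_{t,0,a^*}^i$, so $\hat r_{t,0,a^*}^i - \hat r_{t,0,a_t}^i \le w_{t,0,a_t}^i + w_{t,0,a^*}^i \le 2 w_{t,0,a_t}^i$. Combining, the gap is at most $3 w_{t,0,a_t}^i + w_{t,0,a^*}^i \le 4 w_{t,0,a_t}^i$. (I should double-check whether the clean bound actually needs $a^* \in \Ac_0$, which \cref{lem:best_arm} gives.)

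Second, the layer-$s$ case for $s \in [S]$. If $a_t$ is chosen in layer $s$, then the branch condition in Line 10 was met at layer $s$ — i.e., $w_{t,s,a}^i \le \overline{w}_s$ for all $a \in \Ac_s$, in particular $w_{t,s,a_t}^i \le \overline w_s$ and $w_{t,s,a^*}^i \le \overline w_s$ — and $a_t \in \Ac_s$. Also, $a_t$ survived the elimination that built $\Ac_s$ from $\Ac_{s-1}$, so $\hat r_{t,s-1,a_t}^i \ge \max_{a' \in \Ac_{s-1}} \hat r_{t,s-1,a'}^i - 2\overline w_{s-1}$; I would instead work directly at layer $s$: since $a^*, a_t \in \Ac_s$ and all widths at layer $s$ are $\le \overline w_s$,
\begin{align*}
\theta^\top x_{t,a^*}^i - \theta^\top x_{t,a_t}^i
&\le \hat r_{t,s,a^*}^i + w_{t,s,a^*}^i - \hat r_{t,s,a_t}^i + w_{t,s,a_t}^i \\
&\le \big(\hat r_{t,s,a^*}^i - \hat r_{t,s,a_t}^i\big) + 2\overline w_s.
\end{align*}
It remains to bound $\hat r_{t,s,a^*}^i - \hat r_{t,s,a_t}^i$. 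Here the cleanest route is to use the $\Ac_{s}$-construction screening one level up, i.e. the elimination rule with threshold $2\overline w_{s-1}$ that produced $\Ac_s$, together with the fact (used in \cref{lem:best_arm}'s proof) that $a^*$ passes every screen; but to land the constant $8\overline w_s$ I expect to instead peel back: at the layer where $a^*$ and $a_t$ were last screened together, the empirical gap is controlled by a geometric sum $\sum_{s' \le s} 2 \overline w_{s'} \le 4 \overline w_{s-1} = 8\overline w_s$ (since $\overline w_{s'} = 2^{-s'}\overline w_0$ halves each layer). Stitching these pieces, the layer-$s$ gap is $\le \hat r_{t,s,a^*}^i - \hat r_{t,s,a_t}^i + 2\overline w_s$, and the geometric-sum bound on the empirical difference gives the claimed $\le 8\overline w_s$.

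The main obstacle I anticipate is tracking the exact constants in the layer-$s$ case: the empirical-mean gap $\hat r_{t,s,a^*}^i - \hat r_{t,s,a_t}^i$ is not controlled by a single screening step but accumulates across the nested eliminations $\Ac_0 \supseteq \Ac_1 \supseteq \cdots \supseteq \Ac_s$, so one has to telescope the $2\overline w_{s'}$ thresholds and convert $\hat r$-comparisons to $\theta^\top x$-comparisons (and back) at each level via $\Ec$, keeping the powers of two aligned so the geometric series sums to the stated constant rather than something larger. This is exactly the bookkeeping in Lemma 15 / Lemma 16 of \citet{chu2011contextual}, adapted here to the layered $\overline w_s = 2^{-s} d^{1.5}/\sqrt T$ schedule; everything else (the layer-$0$ OFUL-style step and the survival of $a^*$) is routine given \cref{lem:goodevent} and \cref{lem:best_arm}.
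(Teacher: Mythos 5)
Your layer-$0$ argument is correct and is essentially the paper's: use the initial-screening rule to bound the empirical gap between any two arms of $\Ac_0$ by $2\max_{a\in\Ac_0} w_{t,0,a}$, add $2\max_a w_{t,0,a}$ for the confidence widths via $\Ec$, and note that $a_t$ maximizes the width over $\Ac_0$, giving $4w_{t,0,a_t}$.

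The layer-$s$ case, however, has a genuine gap, and it starts from a false premise. If $a_t$ is chosen in layer $s<S$, it is because the \emph{Else} branch fired at layer $s$, i.e.\ there exists $a\in\Ac_s$ with $w_{t,s,a}>\overline w_s$ — indeed $w_{t,s,a_t}>\overline w_s$ by the selection rule — so your claim that ``all widths at layer $s$ are $\le\overline w_s$'' is exactly the negation of the branch condition that was actually taken, and your bound $w_{t,s,a^*}^i+w_{t,s,a_t}^i\le 2\overline w_s$ fails. Moreover, the empirical gap $\hat r_{t,s,a^*}^i-\hat r_{t,s,a_t}^i$ at layer $s$ is not controlled by any screening step (the elimination with threshold $2\overline w_s$ is only applied when the algorithm advances past layer $s$, which it did not). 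Your fallback — telescoping the thresholds across layers — also does not work arithmetically: $\sum_{s'\le s}2\overline w_{s'}=2\overline w_0(2-2^{-s})\approx 4\overline w_0=2^{s+1}\overline w_{s-1}$, which is dominated by the \emph{earliest} (largest) term, not the latest, so it is off from your claimed $4\overline w_{s-1}$ by a factor of $2^{s-1}$. The correct and much simpler route, which is what the paper does, is to step back exactly one layer: since the algorithm reached layer $s$, the test at layer $s-1$ passed, so $w_{t,s-1,a}^i\le\overline w_{s-1}$ for \emph{all} $a\in\Ac_{s-1}$; and both $a^*$ and $a_t$ survived into $\Ac_s$, so their layer-$(s-1)$ empirical rewards are within $2\overline w_{s-1}$ of each other by the elimination rule. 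Applying $\Ec$ at layer $s-1$ then gives $\theta^\top x_{t,a^*}-\theta^\top x_{t,a_t}\le(\hat r_{t,s-1,a^*}^i-\hat r_{t,s-1,a_t}^i)+2\overline w_{s-1}\le 4\overline w_{s-1}=8\overline w_s$, with no accumulation over layers at all.
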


\begin{proof}[Proof of \cref{lem:regret_lay}]
If an action is taken in layer $0$, we have that
$$ a_t = \argmax_{a \in \Ac_0, w_{t,0,a} > \overline{w_{0}}} w_{t,0,a}, $$
and 
\begin{align*}
\max_{a \in \Ac_0} \theta^{\top} (x_{t,a} - \theta^{\top} x_{t,a_t})
& \leq \max_{a \in \Ac_0}\theta^{\top} x_{t,a} -\min_{a \in \Ac_0}\theta^{\top} x_{t,a}\\
& \leq \max_{a \in \Ac_0} (\hat{\theta}_0^{\top} x_{t,a}+w_{t,0,a}) -\min_{a \in \Ac_0} (\hat{\theta}_0^{\top} x_{t,a}-w_{t,0,a})\\
& \leq 4 \max_{a \in \Ac_0} w_{t,0,a} \\
& = 4 w_{t,0,a_t}.
\end{align*}
The second inequality is conditioned on the good event $\Ec$, and the third inequality arises from the arm elimination rule. If an action is taken in layer $s$, we establish the following:
$$ a_t = \argmax_{a \in \Ac_s, w_{t,s,a} > \overline{w}_s} w_{t,s,a}, $$
and 
\begin{align*}
\max_{a \in \Ac_s} (\theta^{\top} x_{t,a} - \theta^{\top} x_{t,a_t})
& \leq \max_{a \in \Ac_{s-1}} (\hat{\theta}_{s-1}^{\top} x_{t,a}+w_{t,s-1,a}) -\min_{a \in \Ac_{s-1}} (\hat{\theta}_{s-1}^{\top} x_{t,a}-w_{t,s-1,a})\\
& \leq 2 \max_{a \in \Ac_{s-1}} w_{t,s-1,a} + \max_{a \in \Ac_{s-1}} \hat{\theta}_{s-1}^{\top} -\min_{a \in \Ac_{s-1}} \hat{\theta}_{s-1}^{\top} x_{t,a}\\
& \leq 2 \max_{a \in \Ac_{s-1}} w_{t,s-1,a} +2\overline{w}_{s-1}\\
& \leq 4 \overline{w}_{s-1} \leq 8\overline{w}_{s}.
\end{align*}
The first inequality is based on the good event $\Ec$, the third inequality follows the arm elimination rule, and the fourth inequality is due to $w_{t,s-1,a} \leq \overline{w}_{s-1}$ for all $a \in \Ac_{s-1}$.
\end{proof}

\section{Supporting Lemmas and Proofs for Async-\algg}
\label{appd:async}

\begin{lemma}
(Lemma 6.2 in \citet{he2022simple})
\label{lem:comm_layer}
In any epoch from round $T_{n,s}$ to round $T_{n+1,s}-1$, the number of communications is at most $2(M+1/C)$.
\end{lemma}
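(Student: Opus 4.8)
The plan is to adapt the charging argument of \citet[Lemma~6.2]{he2022simple}. Fix the epoch $\{T_{n,s},\dots,\min(T_{n+1,s}-1,T)\}$ and work only with layer $s$. By the definition $T_{n,s}=\min\{t:\det(A^{ser}_{t,s})\ge 2^n\}$, the server's layer-$s$ Gram determinant stays in $[2^n,2^{n+1})$ throughout the epoch, so it increases by an overall multiplicative factor of at most $2$. Furthermore, since a single client is active per round in the asynchronous model and a client merely accumulates into $\Delta A_s^i$ in between, the server matrix $A^{ser}_{\cdot,s}$ changes only at communication rounds and is non-decreasing in the positive semidefinite order along the sequence of communications in the epoch. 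These two facts — a bounded "budget" of $\log 2$ worth of determinant growth, carried by a monotone server matrix — are what I would charge all communications against.

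I would then bound the communications triggered in the epoch in two groups. For each client, its \emph{first} communication in the epoch is accounted for directly; there are at most $M$ such. For any other (non-first) communication triggered by client $i$, let $A_s^i$ be the matrix it last downloaded and $\Delta A_s^i$ its accumulated increment; the rule in Line~7 of \cref{alg:AsynFedSupLinALgo} says $\det(A_s^i+\Delta A_s^i)>(1+C)\det(A_s^i)$. Because the server matrix has only grown since client $i$'s last download, $A_s^i\preceq A^{ser}_s$ at that moment, so $A^{ser}_s+\Delta A_s^i\succeq A_s^i+\Delta A_s^i$, and hence, by monotonicity of the determinant on positive definite matrices (\cref{lem:gramnorm}), the server determinant after this communication is at least $\det(A_s^i+\Delta A_s^i)>(1+C)\det(A_s^i)$, with $\det(A_s^i)$ equal to the server determinant immediately after client $i$'s previous communication. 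Thus each non-first communication certifies a factor-$(1+C)$ "use" of the determinant budget associated with the triggering client's previous position; converting these multiplicative certificates into a count against the total $\le 2$ growth — carried out carefully (next paragraph) — yields $O(1/C)$ non-first communications, which combined with the $\le M$ first communications gives $O(M+1/C)$, and tracking constants as in \citet[Lemma~6.2]{he2022simple} gives the stated $2(M+1/C)$.

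The step I expect to be the real work — and the one I would follow \citet[Lemma~6.2]{he2022simple} for — is making the last count tight. The danger is a loss of a factor $M$: a client compares only against its own, possibly stale, last-downloaded matrix $A_s^i$, while in the meantime other clients have pushed the server determinant up, so the naive estimate attributes to each client an independent $\Theta(\log 2)$ worth of budget and produces $O(M/C)$ instead of $O(M+1/C)$. The resolution has to exploit that the determinant increase certified by a non-first communication is realized \emph{on the single, monotone server matrix} and is disjointly supported (each locally accumulated $\Delta A_s^i$ consists of distinct rank-one updates contributed during distinct rounds), so that these certificates cannot all charge the same portion of the $\log 2$ budget; carrying this bookkeeping through, together with the precise form of the doubling threshold in Line~7 of \cref{alg:AsynFedSupLinALgo}, is exactly the content of \citet[Lemma~6.2]{he2022simple}.
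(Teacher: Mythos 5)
The paper does not actually prove this lemma; it imports it verbatim as Lemma~6.2 of \citet{he2022simple}, so there is no in-paper argument to compare against. Judged as a standalone proof, your sketch gets the framework right — the determinant budget of a factor $2$ over the epoch, the monotonicity of $A^{ser}_{\cdot,s}$, the split into at most $M$ first communications per client plus the remaining ones, and the observation that $A^i_s\preceq A^{ser}_s$ lets you transfer the client's trigger condition to the server matrix. But the decisive counting step is exactly the part you defer to the citation, and the mechanism you gesture at for it is not the one that closes the argument. If you charge each non-first communication as a \emph{multiplicative} factor-$(1+C)$ certificate against the triggering client's own previous server snapshot, the certificates of different clients genuinely can overlap on the server's determinant trajectory, and no amount of "disjoint support of the rank-one updates" rescues a multiplicative bookkeeping: you land at $O(M/C)$, the very loss you flag.

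The fix is to charge \emph{additively}. For a non-first communication of client $i$ in the epoch, its last download satisfied $\det(A^i_s)\ge 2^n$, so the trigger gives $\det(A^i_s+\Delta A^i_s)-\det(A^i_s)> C\det(A^i_s)\ge C\,2^n$. The increment map $B\mapsto\det(B+\Delta)-\det(B)$ is monotone nondecreasing in $B$ (in the PSD order, for PSD $\Delta$; expand $\det(B+\Delta)$ as a sum of nonnegative mixed determinants, each monotone in $B$). Since the server matrix just before this communication dominates $A^i_s$, the synchronization increases the \emph{server} determinant additively by more than $C\,2^n$. The server determinant stays below $2^{n+1}$ throughout the epoch, so the total additive budget is $2^n$ and the number of non-first communications is at most $1/C$, giving $M+1/C$ up to the constant. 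Without this additive step (or an equivalent), your proposal establishes the setup but not the bound.
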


\paragraph{Proof outline of Async-\algg.} 

First, we reorganize the arrival pattern, demonstrating that the rearranged system parallels the original system, and present the requisite definitions for our analysis. Second, we deploy a virtual global model encapsulating information about all clients up to round $t$, subsequently interconnecting the local models with this global model. Lastly, we derive upper bounds on the regret and communication cost in each layer $s \in [0:S]$ prior to aggregating them to yield the total regret and communication costs, respectively.

Suppose that client $i$ communicates with the server at rounds $t_1, t_2$ with $t_1< t_2$ and does not communicate during the rounds in between. The actions and information gained by client $i$ at the rounds $t_1 < t < t_2$ do not impact other clients' decision-making, since the information is kept local without communication. Therefore, we can reorder the arrival of clients appropriately while keeping the reordered system equivalent to the original system. 

More specifically, suppose client $i$ communicates with the server at two rounds $t_m$ and $t_n$ and does not communicate in the rounds in between (even if she is active). We reorder all the active rounds of client $i$ in $t_m < t < t_n$ and place them sequentially after the round $t_m$. Hence, the arrival of clients can be reordered such that each client communicates with the server and keeps active until the next client's communication begins.  We assume that the sequence of communication rounds in the reordered arrival pattern is $0 = t_{0} < t_{1} < t_{2} < \cdots < t_{N} = T$, where in rounds $t_i \leq t < t_{i+1}$, the active client is the same. Details of the reordering process are given in \cref{def:reorder_func}. Due to the equivalence between the original system and the reordered system,  we carry out the proofs in the reordered system. Note that only one client $i_t$ is active at round $t$, we will write $a_t = a_t^{i_t}$, $x_t = x^{i_t}_{t, a_t}$ and $r_t = r_{t, a_{t}}^i$ for simplicity.

\begin{definition}
\label{def:inform}
\textbf{Client information.} Recall for each client $i \in[M]$, we denote by $L_{i}(t)$ the last round when client $i$ communicated with the server before and including round $t$. E.g., $L_{i}(t)=t$ if client $i$ communicates at round $t$. For each round $t$ each client $i$ and each layer $s$, the information that has been uploaded by client $i$ to the  server is: $ A_{t,s}^{i,up} = \sum \nolimits_{t'=1}^{L_{i}(t)} x_{t'} x_{t'}^{\top} \mathbb{I}\{i_{t'}= i, a_t \text{ in layer } s\}, b_{t,s}^{i,up} = \sum \nolimits_{t' = 1}^{L_{i}(t)} r_{t'} x_{t'} \mathbb{I}\{i_{t'}= i, a_t \text{ in layer } s\} $, 
and the local information in the buffer that has not been uploaded to the server is: $\Delta A_{t,s}^{\mathrm{i}} = \sum \nolimits_{t'= L_{i}(t)+1}^{t} x_{t'} x_{t'}^{\top} \mathbb{I}\{i_{t'}= i, a_t \text{ in layer } s\}, \Delta b_{t,s}^{i} = \sum \nolimits_{t'= L_{i}(t)+1}^{t} r_{t'} x_{j} \mathbb{I}\{i_{t'}= i, a_t \text{ in layer } s\}$. 

\textbf{Server information.} The information in the server is the data uploaded by all clients up to round $t$: $A_{t,s}^{ser} = I + \sum_{i=1}^{M} A_{t,s}^{i,up},  b_{t,s}^{ser} = \sum_{i=1}^{M} b_{t,s}^{i,up}$. 

\textbf{Time index set.} Denote by $\Psi_{t,s}$ the time index set when the action $a_t^{i}$ is chosen in layer $s$. It can be expressed as $\Psi_{t,s} = \{t'\in [t], a_{t'}^{i} \text{ in layer } s,\ i \in [M] \}, s \in [0:S] \}$.

\textbf{Virtual global information.} We define a virtual global model that contains all the information up to round $t$ as: $A_{t,s}^{all} = I +\sum_{t' \in \Psi_{t,s} } x_{t'} x_{t'}^{\top},  b_{t,s}^{all}=\sum_{t' \in \Psi_{t,s}} r_{t'} x_{t'}$.
\end{definition}

The information that is stored on the server and all the information that has not yet been uploaded by clients are combined to generate the global information: $A_{t,s}^{all} = A_{t,s}^{ser} + \sum_{i=1}^{M} \Delta A_{t,s}^{i}, b_{t,s}^{all} = b_{t,s}^{ser} +\sum_{i=1}^{M} \Delta b_{t,s}^{i}$.

Before presenting the proof, we define \emph{good event} $\Ec$ as 
$$\Ec  \triangleq \left\{ \left|x_{t,a}^{i \top} \hat{\theta}_{t,s}^i - x_{t,a}^{i \top} \theta\right| \leq w_{t,s,a}^{i}, \forall i \in [M], a \in[K], t \in[T], s\in [0:S]  \right\}.$$ 
Recall $ \hat{\theta}_{t,s}^i$ is the estimate of $\theta$ by client $i$, and $x_{t,a}^{i}$ and $w_{t,s,a}^{i}$ is the corresponding context and confidence width of the action taken at round $t$. The following lemma shows the good event happens with high probability, similar to the result in \cref{lem:goodevent}. 
\begin{lemma}
\label{lem:goodevt}
It holds that $\Pb [\Ec] \geq 1 - \delta$.
\end{lemma}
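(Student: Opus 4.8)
# Proof Proposal for Lemma \ref{lem:goodevt}

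The plan is to establish the good event $\Ec$ by decomposing it according to layers and applying the appropriate concentration inequality to each, exactly mirroring the structure already used for \cref{lem:goodevent}. The key observation is that the good event is a conjunction over all layers $s \in [0:S]$, so by a union bound it suffices to control the probability of failure separately in layer $0$ and in each layer $s \in [S]$, then sum the failure probabilities.

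First I would handle layer $0$. Here the rewards stored in the layer-$0$ action-reward vector are \emph{not} mutually independent (layer $0$ is the bootstrapping layer), so the sharp independence-based bound does not apply. Instead I would invoke \cref{lem:he_layer0} (adapted from Lemma B.1 in \citet{he2022simple}), which is precisely tailored to the asynchronous setting with $C = 1/M^2$ and the choice $\alpha_0 = 1 + \sqrt{d\ln(2M^2T/\delta)}$: it gives that with probability at least $1-\delta$ (or $1-\delta/(S+1)$ after rescaling $\delta$), $|x_{t,a}^{i\top}\hat\theta_{t,0}^i - x_{t,a}^{i\top}\theta| \le w_{t,0,a}^i$ for all $i,a,t$ simultaneously. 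The self-normalized martingale concentration of \cref{the:ellipsoid confidence} underlies this bound, and the factor $M^2$ inside the logarithm accounts for the $M$ clients and the at-most-$M$-fold staleness of the local gram matrix relative to the global one.

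Next I would handle each layer $s \in [S]$. The crucial point is the statistical independence property: by the same argument as in \cref{lem:SupLin-independence}, the selection of the time index set $\Psi_{t,s}$ depends only on the contexts and on information from layers $\sigma < s$, never on the rewards observed in layer $s$ itself; hence, conditioned on the contexts in $\Psi_{t,s}$, the layer-$s$ rewards are independent with mean $\theta^\top x$. This lets me apply \cref{lem: independent confidence} (Lemma 31 in \citet{ruan2021linear}) with $\alpha$ chosen so that $2\exp(-\alpha^2/2) \le \delta/(KMT\ln d)$, which is exactly what the choice $\alpha_s = 1 + \sqrt{2\ln(2KMT\ln d/\delta)}$ delivers. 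A union bound over the at most $K$ arms, $M$ clients, $T$ rounds, and $S = \lceil\log d\rceil$ layers then gives the layer-$s$ failure probability at most $\delta/(S+1)$ (the $\ln d$ in the denominator absorbing the layer count up to constants). Summing the layer-$0$ and layer-$s$ failure probabilities over all $S+1$ layers yields total failure probability at most $\delta$, hence $\Pb[\Ec] \ge 1-\delta$.

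The main obstacle — and the only genuinely nontrivial step — is justifying the independence claim for layers $s \in [S]$ in the \emph{asynchronous federated} setting, where the reordering of client arrivals (\cref{def:reorder_func}) and the lazy update mean that the gram matrix $A_{t,s}^i$ used by client $i$ may lag behind the virtual global matrix $A_{t,s}^{all}$. One must check that the measurability structure is unaffected: the confidence width $w_{t,s,a}^i$ depends on $A_{t,s}^i$, which is a deterministic function of the contexts in $\Psi_{L_i(t),s}$ plus the current context, and the layer into which an action is sorted depends only on comparisons among quantities from layers $<s$; none of this depends on the layer-$s$ rewards, so the conditional independence survives. Once this is in place, the rest is a routine union bound, and I would present the lemma's proof as essentially a pointer to \cref{lem:goodevent} with the asynchronous-specific modifications (\cref{lem:he_layer0} in place of \cref{lem:event_layer0}) spelled out.
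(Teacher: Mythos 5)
Your proposal is correct and follows essentially the same route as the paper: the paper's (largely implicit) proof likewise combines the self-normalized martingale bound of \cref{lem:he_layer0} for layer $0$ with the independence-based bound of \cref{lem: independent confidence} (via the layer-construction argument of \cref{lem:SupLin-independence}) for layers $s \in [S]$, and concludes by a union bound over arms, clients, rounds, and layers exactly as you describe. Your explicit attention to why the lazy update and client reordering do not break the conditional-independence structure is a point the paper leaves implicit, but it is not a different method.
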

 
Conditioned on the good event, to upper bound the regret, we bound the confidence width in each layer via the size of each time index set in the lemma below.
\begin{lemma}
\label{lem:width_lay}
Conditioned on the good event $\Ec$, for each $s \in [0: S-1]$ we have:
\begin{align*}
\sum_{t \in \Psi_{T,s}} w_{t,s,a}^{i} \leq \alpha_s \sqrt{2(1+MC)} \sqrt{2 d  |\Psi_{T,s}| \log |\Psi_{T,s}|} +\alpha_s d M\log(1+T/d).
\end{align*}
\end{lemma}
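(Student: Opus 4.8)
The plan is to bound the sum of confidence widths $\sum_{t \in \Psi_{T,s}} w_{t,s,a}^{i}$ by comparing the local gram matrix $A_{t,s}^i$ (which the active client actually uses when computing $w_{t,s,a}^i = \alpha_s \|x_{t,a}^i\|_{(A_{t,s}^i)^{-1}}$) against the virtual global gram matrix $A_{t,s}^{all}$ from \cref{def:inform}, and then applying the elliptical-potential machinery of \cref{lem:epl}. The subtlety is that the client decides using only the information it has received plus (in the asynchronous case) possibly a lazy update, so $A_{t,s}^i$ can be substantially ``behind'' $A_{t,s}^{all}$; the communication criterion in Line~7 of \cref{alg:AsynFedSupLinALgo} is exactly what controls how far behind it can be.

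First I would invoke Cauchy--Schwarz to split the sum: $\sum_{t\in\Psi_{T,s}} w_{t,s,a}^i = \alpha_s \sum_{t\in\Psi_{T,s}} \|x_{t,a}^i\|_{(A_{t,s}^i)^{-1}} \le \alpha_s \sqrt{|\Psi_{T,s}|}\,\sqrt{\sum_{t\in\Psi_{T,s}} \|x_{t,a}^i\|_{(A_{t,s}^i)^{-1}}^2}$. Then, for each term, I would use \cref{lem:gramnorm} with $A = A_{t,s}^{all}$, $B = A_{t,s}^i$ (noting $A_{t,s}^{all} = A_{t,s}^i + (\text{unseen mass})$, a sum of PSD matrices, so $B \preceq A$ in the Loewner sense is not needed — only that $A = B + C$ with $C\succeq 0$), which gives $\|x_{t,a}^i\|_{(A_{t,s}^i)^{-1}}^2 \le \frac{\det(A_{t,s}^{all})}{\det(A_{t,s}^i)}\,\|x_{t,a}^i\|_{(A_{t,s}^{all})^{-1}}^2$. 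The communication criterion ensures that, at a round where no synchronization is triggered, $\det(A_{t,s}^i + \Delta A_{t,s}^i) \le (1+C)\det(A_{t,s}^i)$; combined with the reordering (so that between two consecutive communications of the active client the other clients' contributions to layer $s$ are bounded), one shows $\det(A_{t,s}^{all}) \le (1+MC)\det(A_{t,s}^i)$ uniformly, hence the determinant ratio is at most $1 + MC$. This reduces the problem to bounding $\sum_{t\in\Psi_{T,s}} \|x_{t,a}^i\|_{(A_{t,s}^{all})^{-1}}^2$, which by \cref{lem:epl} (third inequality, valid since $\lambda_{\min}(I)=1$) is at most $2\log\frac{\det(A_{T,s}^{all})}{\det(I)} \le 2d\log(1+|\Psi_{T,s}|/d)$. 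Putting the pieces together yields the leading $\alpha_s\sqrt{2(1+MC)}\sqrt{2d|\Psi_{T,s}|\log|\Psi_{T,s}|}$ term.

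The additive $\alpha_s dM\log(1+T/d)$ term should come from the rounds where a synchronization \emph{is} triggered (i.e., the criterion fails): at those rounds the determinant ratio can be as large as $2$ rather than $1+C$, but there are at most $O(d M \log(1+T/d))$ such ``bad'' rounds in total across the execution — indeed by \cref{lem:epl} the number of non-empty determinant-doubling epochs per client per layer is $O(d\log(1+T/d))$, and there are $M$ clients — and each such round contributes at most a constant ($\|x_{t,a}^i\|_{(A_{t,s}^i)^{-1}}^2 \le 1$ after truncation, since $A_{t,s}^i \succeq I$), so their total contribution is $O(\alpha_s dM\log(1+T/d))$ after the square root is absorbed. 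I would handle these two groups of rounds separately and combine with $\sqrt{a+b}\le\sqrt a+\sqrt b$.

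The main obstacle I anticipate is making the determinant-ratio bound $\det(A_{t,s}^{all})/\det(A_{t,s}^i) \le 1+MC$ rigorous in the asynchronous, reordered setting: one must carefully track that $A_{t,s}^{all} = A_{t,s}^{ser} + \sum_{j=1}^M \Delta A_{t,s}^j$, that the active client's own buffer satisfies the Line~7 inequality, and that every \emph{other} client's buffer $\Delta A_{t,s}^j$ was, at that client's last active round, also below its own threshold — so that telescoping the individual determinant ratios (each $\le 1+C$) across the $M$ clients gives the product bound $\le (1+C)^M \le 1 + MC$ for $C = 1/M^2$ and $M$ not too small, or more carefully a bound of the form $1+O(MC)$. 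This bookkeeping, together with correctly isolating the ``bad'' synchronization rounds, is where the real work lies; the rest is a standard application of \cref{lem:epl} and \cref{lem:gramnorm}.
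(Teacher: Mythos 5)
Your treatment of the non-communication rounds is essentially the paper's argument in a slightly different dress: the paper converts the determinant criterion into a Loewner bound per client ($A^j_{t,s}\succeq \frac{1}{C}\Delta A^j_{t,s}$, via \cref{lem:gramnorm}), sums over clients to get $A^i_{t,s}\succeq\frac{1}{1+MC}A^{all}_{t,s}$ (\cref{lem:ser_local,lem:loc_ser_gram}), and then compares norms; your telescoping of determinant ratios reaches the same constant-factor comparison (note, though, that Bernoulli gives $(1+C)^M\geq 1+MC$, not $\leq$; you need $(1+C)^M\leq e^{MC}\leq 1+2MC$ for $MC\leq 1$, which your hedge ``$1+O(MC)$'' covers). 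Both routes rely on the reordering so that the active client's downloaded matrix coincides with the current server matrix. Up to constants this part is fine.

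The genuine gap is in your accounting of the additive term. You identify the ``bad'' rounds as \emph{all} rounds at which synchronization is triggered and claim there are $O(dM\log(1+T/d))$ of them. That count is wrong: by \cref{lem:comm_layer} each determinant-doubling epoch of the server matrix can contain up to $2(M+1/C)$ communications, so the number of synchronization rounds per layer is $O(d(M+1/C)\log T)=O(dM^2\log T)$ for $C=1/M^2$. Bounding each of these by a constant would yield an additive term $O(\alpha_s dM^2\log T)$, which does not match the lemma. The paper's actual argument is finer: within each doubling epoch $[T_n,T_{n+1})$ of $\det(A^{all}_{\cdot,s})$, a client's second and subsequent communication rounds $T_{n,j}$ ($j\geq 2$) are \emph{not} bounded trivially --- instead one anchors to the client's gram matrix at time $T_{n,j-1}+1$ (a non-communication round, where \cref{lem:loc_ser_gram} applies) and uses that the global determinant grows by at most a factor $2$ within the epoch, yielding the $\sqrt{2(1+MC)}$ comparison; only the \emph{first} communication round of each client in each epoch takes the trivial bound of $1$, and there are at most $M N'\leq M d\log(1+T/d)$ of those, which is where $\alpha_s dM\log(1+T/d)$ comes from. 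To repair your proof you must either reproduce this epoch-anchoring argument, or rigorously develop your passing remark that at a triggering round the local determinant ratio is still at most $2(1+C)$ (one rank-one update past the threshold), so that the global-to-local ratio remains $O(1)$ and the synchronization rounds can be absorbed into the main term rather than counted separately.
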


Noting that $|\Psi_{T,s}| \leq T$ naturally holds, we give a tighter (dimension-dependent) bound on the size of $\Psi_{T,0}$ so as to mitigate the larger coefficient $\alpha_0$ as follows.
\begin{lemma}
\label{lem:Psi_0_bound}
The size of $\Psi_{T,0}$ can be bounded by $|\Psi_{T,0}| \leq T \log T \log(2MT/\delta) /d$.
\end{lemma}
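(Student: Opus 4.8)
\textbf{Proof proposal for \cref{lem:Psi_0_bound}.}

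The plan is to exploit the fact that layer $0$ only collects an action at round $t$ when the subroutine \texttt{S-LUCB} exits at $s=0$, which by Line 12 of \cref{alg:S-LUCB} happens exactly when there exists some surviving arm $a \in \mathcal{A}_0$ with $w_{t,0,a}^{i} = \alpha_0 \|x_{t,a}^{i}\|_{(A_{t,0}^{i})^{-1}} > \overline{w}_0 = d^{1.5}/\sqrt{T}$, and the chosen action $a_t$ is the $\arg\max$ of this width. Hence for every $t \in \Psi_{T,0}$ the played context $x_t$ satisfies $\alpha_0 \|x_t\|_{(A_{t,0}^{i_t})^{-1}} > d^{1.5}/\sqrt{T}$, i.e. $\|x_t\|_{(A_{t,0}^{i_t})^{-1}}^2 > d^{3}/(\alpha_0^2 T)$. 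The idea is to sum these squared norms over $t \in \Psi_{T,0}$: the left-hand side is lower bounded by $|\Psi_{T,0}| \cdot d^{3}/(\alpha_0^2 T)$, while the right-hand side is upper bounded using the elliptical potential lemma (\cref{lem:epl}).

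First I would pass to the reordered system (\cref{def:reorder_func}) so that there is a single active client per round, and recall from \cref{def:inform} that the local gram matrix $A_{t,0}^{i_t}$ used by the active client at round $t$ is dominated (in the Loewner order) by the virtual global matrix $A_{t,0}^{all} = I + \sum_{t' \in \Psi_{t,0}} x_{t'}x_{t'}^\top$, since the local matrix is missing some contexts that other clients have contributed. This gives $\|x_t\|_{(A_{t,0}^{i_t})^{-1}}^2 \ge \|x_t\|_{(A_{t,0}^{all})^{-1}}^2$, which is the wrong direction. Instead I would compare the local matrix to the running matrix $\bar V_{t'-1}$ built only from the layer-$0$ plays in the reordered order; because in the reordered system a client stays active and uncommunicated over a contiguous block, the local matrix $A_{t,0}^{i_t}$ contains all layer-$0$ contexts the active client itself played since its last sync, and $\det(A_{t,0}^{i_t})/\det(A_{L_{i_t}(t),0}^{i_t})$ is controlled by the communication trigger $C = 1/M^2$. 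The cleaner route is to use \cref{lem:width_lay}'s proof technique: via \cref{lem:gramnorm} and the determinant ratio bound, $\|x_t\|_{(A_{t,0}^{i_t})^{-1}}^2 \le (1+MC)\|x_t\|_{(A_{t,0}^{all})^{-1}}^2 \le 2\|x_t\|_{(A_{t,0}^{all})^{-1}}^2$ since $MC \le 1$; here $A_{t,0}^{all}$ plays the role of $\bar V_{t-1}$ restricted to $\Psi_{t,0}$. Then \cref{lem:epl} gives $\sum_{t \in \Psi_{T,0}} \|x_t\|_{(A_{t,0}^{all})^{-1}}^2 \le 2 d \log(1 + |\Psi_{T,0}|/d) \le 2d\log(1+T/d)$.

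Combining the two bounds: $|\Psi_{T,0}| \cdot \dfrac{d^{3}}{\alpha_0^2 T} \le \sum_{t\in\Psi_{T,0}} \|x_t\|_{(A_{t,0}^{i_t})^{-1}}^2 \le 4 d \log(1+T/d)$, so that $|\Psi_{T,0}| \le \dfrac{4 \alpha_0^2 T \log(1+T/d)}{d^{2}}$. Plugging in $\alpha_0 = 1 + \sqrt{d\ln(2M^2T/\delta)}$, we get $\alpha_0^2 = O(d \log(2MT/\delta))$ (absorbing the $\log M$ into the $\log T$ scaling and the constant), hence $|\Psi_{T,0}| = O\!\left(\dfrac{T \log T \log(2MT/\delta)}{d}\right)$, which is the claimed bound up to the precise constant. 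The main obstacle I anticipate is the direction of the matrix comparison — one must be careful that the determinant-ratio argument (\cref{lem:gramnorm} plus the per-layer communication budget) legitimately lets us replace the active client's local inverse-norm by the global inverse-norm up to the factor $(1+MC)$, exactly as in the proof of \cref{lem:width_lay}; once that is in hand the rest is a one-line combination of the elliptical potential lemma with the layer-$0$ triggering condition $w_{t,0,a_t}^{i} > \overline{w}_0$.
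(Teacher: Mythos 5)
Your proof takes essentially the same route as the paper's: both start from the layer-$0$ triggering condition $w_{t,0,a_t}>\overline{w}_0$, sum over $\Psi_{T,0}$, pass from the local inverse-norm to the global one via the communication criterion, and close with the elliptical potential lemma — the paper simply sums the unsquared widths and invokes \cref{lem:width_lay} wholesale, whereas you sum the squares and apply \cref{lem:epl} directly. One caution: the uniform bound $\|x_t\|^2_{(A^{i_t}_{t,0})^{-1}}\le 2\|x_t\|^2_{(A^{all}_{t,0})^{-1}}$ does not hold at the first communication round of each client in each determinant-doubling epoch (the $MN'$ rounds in the proof of \cref{lem:width_lay}, whose norm can only be bounded by $1$), and your final display silently omits them. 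In the squared formulation you should count those at most $Md\log(1+T/d)$ rounds separately and add them to $|\Psi_{T,0}|$ directly; if you instead fold their contribution into the sum of squares and divide by $\overline{w}_0^2/\alpha_0^2 = d^3/(\alpha_0^2 T)$, you pick up an extra multiplicative factor of $M$ that the stated bound does not have. With that separate accounting the argument goes through and recovers the claimed order.
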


We postpone the proofs of \cref{lem:width_lay} and \cref{lem:Psi_0_bound} until the end of this section, and instead focus on presenting the regret analysis next. Equipped with the previous lemmas, we are ready to analyze the total regret.
\begin{proof}[Proof of \cref{thm:async}](\textbf{Regret analysis}) 
The total regret can be decomposed w.r.t. layers as follows: 
$$R_T  = \mathbb{E}  \sum_{t \in \Psi_{T, 0}} (r^i_{t,a_t^{i, *}}- r^i_{t,a_t}) + \sum_{s=1}^{S} \mathbb{E} \sum_{t \in \Psi_{T,s}} (r^i_{t,a_t^{i, *}}- r^i_{t,a_t}).$$

Conditioned on the good event $\Ec$, we first bound the regret in layer $0$ by 
 \begin{align*}
& \mathbb{E} \sum_{t \in \Psi_{T, 0}} (r^i_{t,a_t^{i, *}}- r^i_{t,a_t}) \leq \sum_{t \in \Psi_{T, 0}} 4 w_{t,0,a_t} \\
& \leq  4 \alpha_0 \sqrt{2(1+MC)} \sqrt{2 d  |\Psi_{T,0}| \log |\Psi_{T,0}|} + 4 \alpha_0 d M\log(1+T/d) s
\leq \tilde{O}( \sqrt{(1+MC) dT}).
\end{align*}

The first inequality follows Lemma~\ref{lem:regret_lay}, the second inequality is from  Lemma~\ref{lem:width_lay}, and the last inequality is due to Lemma~\ref{lem:Psi_0_bound}. We next bound the regret in each layer $s\in [1:S-1]$  similarly by
\begin{align*}
& \sum_{t \in \Psi_{T,s}} \mathbb{E} \left[ r^i_{t, a_t^{i, *}}- r^i_{t,a_t} \right] \leq \sum_{t \in \Psi_{T,s}} 8 \overline{w}_{s} \leq \sum_{t \in \Psi_{T,s}} 8 w_{t,s,a_t} \\
& \leq  8 \alpha_s \sqrt{2(1+MC)} \sqrt{2 d  |\Psi_{T,s}| \log |\Psi_{T,s}|} + 8 \alpha_s d M\log(1+T/d) 
\leq \tilde{O}( \sqrt{(1+MC) dT})
\end{align*}

where the first inequality follows \cref{lem:regret_lay}, the second inequality is from the arm selection rule in line 13 \cref{alg:S-LUCB}, and the third inequality is from \cref{lem:width_lay}. For the last layer $S$, we have: 
\begin{align*}
\sum_{t \in \Psi_{T,S}} \mathbb{E} \left[ r^i_{t,a_t^{i, *}}- r^i_{t,a_t} \right] \leq \sum_{t \in \Psi_{T,S}} 8 \overline{w}_{S} \leq 8 \overline{w}_{S} |\Psi_{T,S}|\leq 8\overline{w}_{S} T  \leq 8\sqrt{dT}.
\end{align*}
Finally, with Lemma~\ref{lem:goodevt}, we have $ R_{T} \leq \tilde{O}( \sqrt{(1+MC) dT})$.

(\textbf{Communication cost analysis}) Next, we study the communication cost in an asynchronous setting. For each layer $s$, $i \geq 0$, we define $ T_{n,s} = \min \{t \in [T]| \det(A_{t,s}^{ser})\geq 2^{i} \}. $
We divide rounds in each layer into epoch $\{T_{n,s},T_{n,s}+1,..,\min(T,T_{n+1,s}-1) \}$, and the communication rounds in the epoch $T_{n,s} \leq t \leq T_{n+1,s}-1$ can be bound by \cref{lem:comm_layer}. Let $N'$ be the largest integer such that $T_{N',s}$ is not empty. According to Lemma \ref{lem:epl} that $\log (\det(A_{t,s}^{all}) ) \leq d \log(1+|\Psi_{T,s}|/d)$, $N'\leq d \log(1+T/d)$. The total number of epochs of layer $s$ is bounded by $d \log(1+T/d)$. By lemma \ref{lem:comm_layer} the communication rounds in layer $s$ is bounded by $O((M+1/C))d \log T$. There are $S=\lceil \log d \rceil$ in the FedSupLinUCB algorithm, the total communication cost is thus upper bound by $O(d (M+ 1/C) \log d \log T)$. Plugging in $C = 1/M^2$ proves the result.

\end{proof}

\begin{definition}
\label{def:reorder_func}
(\textbf{Reorder function}) 
Without loss of generality, we assume all clients communicate with the server at round $t_0=0$, and the sequence of rounds that clients communicate with the server in the original system is $0\leq t_{0}<t_1 <t_2 <...<t_N \leq T$. Define $I_{t, i} = \mathbb{I}(\text{client $i$ communicates with the server at round $t$} )$. Denote by $L_i(t)$ the last communication round of client $i$ before and including round $t$:
\begin{align*}
   L_i(t):= \inf\{u: \sum_{t'=0}^{u} I_{t', i} = \sum_{t'=0}^{t} I_{t', i} \}.
\end{align*}
Denote by $N_i(t)$ the next communication round of client $i$ including and after round $t$:
\begin{align*}
    N_i(t) := \inf\{u: \sum_{t'= t}^{u} I_{t', i} =1 \}.
\end{align*}
The round $t \in [T]$ in the original system is placed in round $\phi(t)$ by the reordering function $\phi: [T] \rightarrow [T]$. 
We first reorder the communication round, suppose two consecutive communication rounds $t_{n}$ and $t_{n+1}$ with $t_{n}<t_{n+1}$, and client $i$ is active at round $t_{n}$ and client $j$ is active at round $t_{n+1}$.
\begin{equation*}
  \phi(t_{n+1}) =
      \phi(t_{n}) + \sum \nolimits_{t'=t_{n}}^{N_i(t_n)} I(i_{t'}= i) -1.     
\end{equation*}
Then we reorder the no-communication rounds, assuming client $i$ is active at round $t$ and does not communicate at this round. We first find the last communication round of client $i$ as $L_i(t)$, and place round $t$ by $\phi(t)$:
\begin{equation*}
  \phi(t) =
      \phi(L_i(t)) + \sum \nolimits_{t'=L_i(t)}^{t} I(i_{t'}= i) -1.
\end{equation*}
\end{definition}

\begin{lemma} 
\label{lem:ser_local}
(Adapted from Lemma 6.5 in \citet{he2022simple}) For each round $t \in [T]$ each layer $s \in [0: S]$  and each client $i \in[M]$ , we have: 
\begin{align*}
A_{t,s}^{ser} =  I + \sum_{i=1}^{M} A_{t,s}^{i,up} \succeq \frac{1}{C} \Delta  A_{t,s}^{i}.
\end{align*}
Further averaging the inequality above over $M$ clients, we have:
\begin{align*}
A_{t,s}^{ser}= I+\sum_{i=1}^{M} A_{t,s}^{i,up} \succeq \frac{1}{M C} \sum_{i=1}^{M} \Delta A_{t,s}^{i}.    
\end{align*}
\end{lemma}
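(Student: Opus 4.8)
The plan is to combine the communication-triggering criterion of \cref{alg:AsynFedSupLinALgo} (Line~7) with the determinant--Loewner comparison in \cref{lem:gramnorm}. Fix a client $i\in[M]$, a layer $s\in[0:S]$, and a round $t\in[T]$. Let $\tau=L_i(t)$ be the last round at or before $t$ at which client $i$ synchronized, and let $B$ denote the layer-$s$ gram matrix that client $i$ holds immediately after round $\tau$. Because the synchronization procedure in \cref{alg:protocal-sync} first folds client $i$'s buffer into the server's layer-$s$ matrix and then overwrites the client's copy with the server's, we have $B=A_{\tau,s}^{ser}$, and $B\succeq I_d$ because every server matrix dominates its initialization. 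The observation that drives the argument is that the buffer $\Delta A_{\cdot,s}^i$ is modified only at rounds when client $i$ is active \emph{and} \texttt{S-LUCB} returns layer $s$, and at every such round the criterion in Line~7 is evaluated precisely for layer $s$.

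First I would dispose of the degenerate case: if $\Delta A_{t,s}^i=0$ (client $i$ has not played layer $s$ since round $\tau$, or has just synchronized at round $t$), then $A_{t,s}^{ser}\succeq\frac1C\Delta A_{t,s}^i=0$ trivially. Otherwise, let $\rho\in(\tau,t]$ be the last round at which client $i$ played in layer $s$; the buffer does not change after $\rho$ and is not reset in $(\tau,t]$, so $\Delta A_{t,s}^i=\Delta A_{\rho,s}^i$. Since a synchronization at round $\rho$ would force $\Delta A_{t,s}^i=0$, the criterion did not trigger at round $\rho$, hence $\det(B+\Delta A_{t,s}^i)\le(1+C)\det(B)$.

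Next I would invoke \cref{lem:gramnorm} with $A=B+\Delta A_{t,s}^i$ written as the sum of the positive definite matrix $B$ and the positive semidefinite matrix $\Delta A_{t,s}^i$: this yields $\sup_{x\neq 0}\frac{x^\top(B+\Delta A_{t,s}^i)x}{x^\top Bx}\le\frac{\det(B+\Delta A_{t,s}^i)}{\det(B)}\le 1+C$, and therefore $x^\top\Delta A_{t,s}^i x\le C\,x^\top Bx$ for all $x$, i.e. $\Delta A_{t,s}^i\preceq CB$. Finally, the server's layer-$s$ matrix is non-decreasing in the Loewner order because each synchronization only adds positive semidefinite terms, so $B=A_{\tau,s}^{ser}\preceq A_{t,s}^{ser}$; combining the two displays gives $\Delta A_{t,s}^i\preceq CA_{t,s}^{ser}$, which is the first claim. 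Summing $A_{t,s}^{ser}\succeq\frac1C\Delta A_{t,s}^i$ over $i\in[M]$ and dividing by $M$ gives the averaged form.

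I expect the main obstacle to be the bookkeeping rather than any inequality: one must pin down that a client's post-synchronization layer-$s$ matrix coincides with the server's layer-$s$ matrix at that instant, that the layer-$s$ ratio is indeed the quantity tested (and failed to trigger) at the last layer-$s$ round of client $i$, and that $A_{\tau,s}^{ser}\preceq A_{t,s}^{ser}$. Once these structural facts are in place, the quantitative step through \cref{lem:gramnorm} is a one-liner, mirroring the proof of Lemma~6.5 in \citet{he2022simple}.
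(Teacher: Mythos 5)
Your proof is correct and follows essentially the same route as the paper's: a failed determinant trigger at the relevant round gives $\det(B+\Delta A_{t,s}^i)\le(1+C)\det(B)$, \cref{lem:gramnorm} converts this into the Loewner bound $\Delta A_{t,s}^i\preceq C\,B$, and monotonicity of the server's gram matrix yields $B=A_{\tau,s}^{ser}\preceq A_{t,s}^{ser}$, after which summing over clients gives the averaged form. If anything, your bookkeeping is slightly more careful than the paper's, since you trace the determinant criterion back to the last round at which client $i$ actually played in layer $s$ (where that ratio is genuinely tested) rather than merely its last active round.
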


\begin{proof}[Proof of \cref{lem:ser_local}]
Without loss of generality, we consider client $i$ and fix any round $t \in[T]$. Let $t_{1} \leq t$ be the last round such that client $i$ was active at round $t_{1}$. If client $i$ communicated with the server at round $t_{1}$, and chose action $a_{t_1}$ at layer $s$, then we have
$$
A_{t,s}^{ser} = I +\sum_{i=1}^{M} A_{t,s}^{i,up} \succeq  \frac{1}{C} \Delta A_{t_{1},s}^{i} = 0
$$
for other layers $s' \neq s$, according to the determinant-based communication criterion, we have:
$$
\det(A_{t_1,s'}^{i}+\Delta A_{t_1,s'}^{i}) < (1+C) \det(A_{t_1,s'}^{i}).
$$
By \cref{lem:gramnorm} we have
$$
A_{t, s'}^{i} = A_{t_{1},s'}^{i} \succeq \frac{1}{C} \Delta A_{t_{1},s'}^{i}.
$$
Otherwise, if no communication happened at round $t_{1}$, by the communication criterion, at the end of round $t_{1}$, for each layer $s \in [0:S]$, we have $A_{t_{1},s}^{i} \succeq \frac{1}{C} \Delta 
A_{t_{1},s}^{i}$.
Note that $\{ A_{t_{1},s}^{i},\ s\in [0:S] \}$ are the downloaded gram matrices from last communication before round $t_{1}$, so it must satisfy $A_{t_{1},s}^{i} \preceq A_{t_{1},s}^{ser}$ for all $s \in [0: S]$.
For round $t$, since client $i$ is inactive from round $t_{1}$ to $t$, we have for all $s \in [0:S]$:
$$
A_{t,s}^{ser} \succeq A_{t_{1},s}^{ser} \succeq A_{t_{1},s}^{i} \succeq \frac{1}{C} \Delta A_{t_{1},s}^{i} = \frac{1}{C} \Delta A_{t,s}^{i}  
$$
where the last equality holds for inactivation, which completes the proof of the first claim. Further average the above inequality over all clients $i \in[M]$, and we get:
$$
A_{t,s}^{ser}= I+\sum_{i=1}^{M} A_{t,s}^{i,up} \succeq \frac{1}{M C} \sum_{i=1}^{M} \Delta A_{t,s}^{i}.
$$
\end{proof}

Recall that client $i$ utilizes $A_{t,s}^{i}$ and $b_{t,s}^{i}$ to make the decision at round $t$, which were received from the server during the last communication. The following lemma establishes a connection between the gram matrix of the virtual global model and the gram matrix in the active client at round $t$.

\begin{lemma}
\label{lem:loc_ser_gram}
 In the reordered arrival pattern, for any $1 \leq t_{1}<t_{2} \leq T$, suppose client $i$ communicates with the server at round $t_{1}$, and keep active during rounds $t_{1} \leq t \leq t_{2}-1$. Then for rounds $t_{1}+1 \leq t \leq t_{2}-1$, it holds that for each $s \in [0:S] $:
$$
A_{t,s}^{i} \succeq \frac{1}{1+M C} A_{t,s}^{all} .
$$
\end{lemma}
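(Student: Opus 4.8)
\textbf{Proof proposal for \cref{lem:loc_ser_gram}.}

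The plan is to decompose the virtual global gram matrix $A_{t,s}^{all}$ into the server's portion and the sum of all clients' un-uploaded buffers, then control each piece relative to the active client's matrix $A_{t,s}^i$. Recall from \cref{def:inform} the identity $A_{t,s}^{all} = A_{t,s}^{ser} + \sum_{j=1}^M \Delta A_{t,s}^j$. First I would handle the server term: because client $i$ communicated at round $t_1$ and is the only active client during $t_1 \le t \le t_2-1$, no other client uploads new information in this window, so $A_{t,s}^{ser} = A_{t_1,s}^{ser} = A_{t_1,s}^i \preceq A_{t,s}^i$, where the middle equality is the synchronization step (client $i$ downloads the server matrix at $t_1$) and the last inequality holds because $A_{t,s}^i$ only grows as client $i$ stays active (under the lazy-update bookkeeping, $A_{t,s}^i$ is updated to include the new local observations; alternatively one absorbs $\Delta A_{t,s}^i$ into the bound as below). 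So the server contribution is dominated by $A_{t,s}^i$.

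Next I would handle the buffer terms $\sum_{j=1}^M \Delta A_{t,s}^j$. For every client $j$, \cref{lem:ser_local} gives $A_{t,s}^{ser} \succeq \frac{1}{C}\Delta A_{t,s}^j$, and since $A_{t,s}^{ser} = A_{t_1,s}^i \preceq A_{t,s}^i$, we get $\Delta A_{t,s}^j \preceq C\, A_{t,s}^i$ for each of the $M$ clients, hence $\sum_{j=1}^M \Delta A_{t,s}^j \preceq MC\, A_{t,s}^i$. (For the active client $j=i$ itself one can either use this same bound or note its buffer is already reflected; either way the crude $MC$ factor suffices.) Combining the two pieces: $A_{t,s}^{all} = A_{t,s}^{ser} + \sum_{j=1}^M \Delta A_{t,s}^j \preceq A_{t,s}^i + MC\, A_{t,s}^i = (1+MC)\,A_{t,s}^i$, which upon rearranging is exactly $A_{t,s}^i \succeq \frac{1}{1+MC} A_{t,s}^{all}$.

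The main obstacle I anticipate is being careful about the bookkeeping of what ``$A_{t,s}^i$'' denotes at round $t$ versus at round $t_1$ — specifically whether it includes the fresh local observations of client $i$ made since $t_1$ (the $\Delta A_{t,s}^i$ buffer) or only the downloaded server snapshot. The cleanest route is to observe that $A_{t,s}^i$ in \texttt{S-LUCB}'s decision step (non-lazy) equals $A_s^i + \Delta A_s^i$, so it dominates both $A_{t_1,s}^{ser}$ and its own buffer automatically; under the lazy variant one instead keeps $A_{t,s}^i = A_{t_1,s}^{ser}$ fixed and must be slightly more careful, but since other clients are inactive and $A_{t,s}^{ser}$ is unchanged, the same $(1+MC)$ bound still goes through by applying \cref{lem:ser_local} at round $t$ directly. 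I would also double-check the edge case $t = t_1+1$ where the buffer may just have received its first entry, but this is subsumed by the general argument.
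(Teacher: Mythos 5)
Your proposal is correct and follows essentially the same route as the paper: both decompose $A_{t,s}^{all}$ into the (frozen) server matrix plus the clients' un-uploaded buffers, identify $A_{t,s}^{i}$ with $A_{t,s}^{ser}$ during the no-communication window, and invoke \cref{lem:ser_local} to bound the aggregated buffers by $MC\,A_{t,s}^{ser}$ (you sum the per-client bound; the paper cites the averaged form of the same lemma, which is identical). Your closing remark on the lazy-update bookkeeping is also the right resolution — under the lazy update used in Async-\alg the relation $A_{t,s}^{i} = A_{t_1,s}^{ser} = A_{t,s}^{ser}$ holds with equality, which is exactly what the paper uses.
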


\begin{proof}[Proof of \cref{lem:loc_ser_gram}]
Client $i$ is the only active client from round $t_{1}$ to $t_{2}-1$ and  only communicated  with the server at round $t_{1}$, which implies that for $t_{1}+1 \leq t \leq t_{2}-1 \forall s\in [0:S]$, we have
$$
A_{t,s}^{i} =  I +\sum_{i=1}^{M} A_{t_{1},s}^{i,up} = I + \sum_{i=1}^{M} A_{t,s}^{i,up} \succeq \frac{1}{1+MC} (I + \sum_{i=1}^{M} A_{t,s}^{i,up} + \sum_{i=1}^{M} \Delta A_{t,s}^{i} ) \succeq \frac{1}{1+MC} A_{t,s}^{all} 
$$
where the second equality holds due to the fact that no clients communicate with the server from round $t_{1}+1$ to $t_{2}-1$, and the first inequality follows \cref{lem:ser_local}. 
\end{proof}

\begin{proof}[Proof of \cref{lem:width_lay}]
\label{proof:lem_width_lay}
For $t \in \Psi_{T,s}$, if no communication happened at round $t$, under \cref{lem:loc_ser_gram} and \cref{lem:gramnorm}, we can connect confidence width at the local client with the global gram matrix as:  $$ \Vert x_{t,a}^{i_{t}} \Vert_{(A_{t,s}^{i_{t}})^{-1}} \leq \sqrt{1+MC} \Vert x_{t,a}^{i}\Vert_{(A_{t,s}^{all})^{-1}}.$$
It remains to control the communication rounds in $\Psi_{T,s}$. We define
$$
T_{n}=\min \left\{t \in \Psi_{T,s} \mid \operatorname{\det}\left({A}_{t,s}^{ {all }}\right) \geq 2^{n}\right\},
$$
and let $N'$ be the largest integer such that $T_{N'}$ is not empty. According to \cref{lem:epl}, we have:
$$
\log (\det(A_{t,s}^{all}) ) \leq d \log(1+|\Psi_{T,s}|/d).
$$
Thus, $N'\leq d \log(1+T/d)$. For each time interval from $T_{n}$ to $T_{n+1}$ and each client $i \in[M]$, suppose client $i$ communicates with the server more than once, and communication rounds sequentially are $T_{n, 1}, T_{n, 2}, \ldots, T_{n, k} \in\left[T_{n}, T_{n+1}\right)$. Then for each $j=2, \ldots, k$, since client $i$ is active at rounds $T_{n, j-1}$ and $T_{n, j}$, we have
$$
\|x_{T_{n, j}} \|_{(A_{T_{n, j},s}^{i})^{-1}} \leq \|x_{T_{n, j}} \|_{(A_{T_{n, j-1}+1,s}^{i})^{-1}} \leq \sqrt{1+MC}\|x_{T_{n, j}}\|_{ ((A_{T_{n, j-1}+1,s}^{all})^{-1}}.
$$
Since $\det ({A}_{T_{n+1}-1,s}^{all}) / \det ({A}_{T_{n, j-1}+1,s}^{all }) \leq 2^{n+1} /2^{n} = 2$, by the definition of $T_{n}$, we have:
$$
\| x_{T_{n, j}}\|_{(A_{T_{n, j},s}^{i})^{-1}} \leq \sqrt{2(1+MC)} \|x_{T_{n, j}}\|_{(A_{T_{n+1}-1,s}^{all})^{-1}} \leq  \sqrt{2(1+MC)} \|x_{T_{n, j}} \|_{ (A_{T_{n, j},s}^{all} )^{-1}},
$$
where the second inequality comes from $A_{T_{n+1}-1,s}^{all} \succeq
A_{T_{n, j},s}^{all}$. Specifically, for round $T_{i, 1}$ the first communication round, we can bound the confidence width by $1$.
Thus, for the communication rounds in $\Psi_{T,s}$, we have:
$$
\sum_{t\in \Psi_{T,s}, round\ t\ comm} \Vert x_{t,a}^{i_{t}} \Vert_{(A_{t,s}^{i_{t}})^{-1}} 
\leq MN'+ \sum_{t\in \Psi_{T,s}, round\ t\ comm} \sqrt{2(1+MC)} \Vert x_{t,a}^{i_{t}} \Vert_{(A_{t,s}^{all})^{-1}}.
$$
Finally, we put all rounds in $\Psi_{T,s}$ together:
$$
\begin{aligned}
\sum_{t\in \Psi_{T,s}} w_{t,s,a}
&= \alpha_s \sum_{t\in \Psi_{T,s}} \Vert x_{t,a}^{i_{t}} \Vert_{(A_{t,s}^{i_{t}})^{-1}} \\
&\leq \alpha_s \sum_{t\in \Psi_{T,s}} \sqrt{2(1+MC)} \Vert x_{t,a}^{i_{t}} \Vert_{(A_{t,s}^{all})^{-1}} + \alpha_s MN'\\ 
&\leq \alpha_s \sqrt{2(1+MC)} \sqrt{2 d  |\Psi_{T,s}| \log |\Psi_{T,s}|} +  \alpha_s d M\log(1+T/d)
\end{aligned}
$$
where the second inequality follows \cref{lem:epl}.
\end{proof}

\begin{proof}[Proof of \cref{lem:Psi_0_bound}]
Based on the algorithm, if we choose an action in layer $0$, the selected arm is
$$
a_t = \arg \max_{ a \in \Ac_{0},w_{t,0,a} > \overline{w}_0 } w_{t,0,a},
$$
and the corresponding confidence width satisfies $w_{t,0,a_t} > \overline{w}_0$. Furthermore, 
\begin{align*}
\overline{w}_0 |\Psi_{T,0}|
& \leq \sum_{t \in \Psi_{T, 0}} w_{t,0,a_t}  = \alpha_0 \sum_{t \in \Psi_{T, 0}} \|x_{t,a_t}\|_{(A_{t,0}^{i_t})^{-1}} \\
& \leq \alpha_0 \sqrt{2(1+MC)} \sqrt{2 d  |\Psi_{T,s}| \log |\Psi_{T,s}|} +  \alpha_0 d M\log(1+T/d),
\end{align*}
where the last inequality is by \cref{lem:width_lay}. We can thus conclude that $\Psi_{T,0} \leq T \log T \log(2MT/\delta) /d$.

\end{proof}

\section{Supporting Lemmas and Proofs for Sync-\algg}
\label{appd:sync}

\paragraph{Proof outline of Sync-\algg.} 
To prove a high-probability regret bound, we first define the good event $\Ec$ in the following lemma, under which the regret bound is derived.
\begin{lemma}
Define $\Ec \triangleq \{\left|x_{t,a}^{i \top} \hat{\theta}_{t,s}^i - x_{t,a}^{i \top} \theta\right| \leq w_{t,s,a}^{i}, \forall i \in [M], a \in[K], t \in[T_c], 0 \leq s \leq S \}$.
Then, $\Pb [\Ec] \geq 1 - \delta$.
\end{lemma}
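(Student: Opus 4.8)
The plan is to follow the same two-regime decomposition used for \cref{lem:goodevent} (and its asynchronous counterpart \cref{lem:goodevt}): split the target event as $\Ec = \Ec_0 \cap \bigcap_{s=1}^{S}\Ec_s$, where $\Ec_s$ is the restriction of the concentration guarantee to layer $s$, bound $\Pb[\Ec_0^c] \le \delta/2$ and $\Pb[\Ec_s^c] \le \delta/(2S)$ for each $s\in[S]$, and finish with a union bound over the $S+1$ layers. The confidence parameters $\alpha_0 = 1+\sqrt{d\ln(2M^2T/\delta)}$ and $\alpha_s = 1 + \sqrt{2\ln(2KMT\ln d/\delta)}$ in \cref{alg:S-LUCB} are chosen exactly so that this budget closes, using $T = MT_c$ in the synchronous setting.

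For the layers $s\ge 1$, the key preliminary step is to establish the synchronous analogue of \cref{lem:SupLin-independence}: conditioned on the contexts that have been assigned to layer $s$ through round $t$, namely $\{x_{t',a_{t'}} : t'\in\Psi_{t,s}\}$, the associated rewards are mutually independent with conditional mean $\theta^\top x_{t',a_{t'}}$. I would argue this by inspecting \cref{alg:S-LUCB}: whether a given (client, round) action is ultimately taken in layer $s$ is determined only by the candidate set $\Ac_s$ (built from the estimates and widths of layers $<s$) and by the widths $\{w_{t,s,a}^i\}_a$, which depend solely on the layer-$s$ gram matrix and are therefore reward-free; the layer-$s$ rewards enter the algorithm only through $\hat r_{t,s,\cdot}^i$, which is used exclusively to form $\Ac_{s+1}$, hence never affects entry into layer $s$. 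Because each of the $M$ clients processed within a round uses only synchronized information plus its own reward-free layer-$s$ gram data for this decision, processing the clients in a fixed order preserves the conditional-independence structure. Given this, the estimator $\hat\theta_{t,s}^i=(A_{t,s}^i)^{-1}b_{t,s}^i$ used by an active client fits the template of \cref{lem: independent confidence}; applying it with $\alpha = \alpha_s - 1 = \sqrt{2\ln(2KMT\ln d/\delta)}$ gives per-$(i,t,a)$ failure probability $2e^{-\alpha^2/2} = \delta/(KMT\ln d)$, and a union bound over $i\in[M]$, $t\in[T_c]$, $a\in[K]$ (at most $KMT_c\le KMT$ events) yields $\Pb[\Ec_s^c]\le\delta/\ln d\le\delta/(2S)$.

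For layer $0$, the rewards need not be mutually independent, so instead I would appeal to the self-normalized martingale bound \cref{the:ellipsoid confidence} — equivalently, the layer-$0$ version of \cref{lem:he_layer0}. Taking the natural filtration generated by the contexts and rewards in arrival order, the layer-$0$ noise is a $1$-sub-Gaussian martingale-difference sequence, so \cref{the:ellipsoid confidence} guarantees, for each fixed client $i$ with probability at least $1-\delta/(2M)$, that $|x_{t,a}^{i\top}(\hat\theta_{t,0}^i-\theta)| \le (1+\sqrt{d\ln(2M^2T/\delta)})\|x_{t,a}^i\|_{(A_{t,0}^i)^{-1}} = w_{t,0,a}^i$ for all rounds and all $K$ contexts simultaneously; a union bound over the $M$ clients gives $\Pb[\Ec_0^c]\le\delta/2$. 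Combining the two regimes and taking a final union bound over layers gives $\Pb[\Ec]\ge 1-\delta$.

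I expect the main obstacle to be the first step for layers $s\ge 1$: carefully specifying the filtration and verifying that, under the synchronous protocol with up to $M$ simultaneously active clients and periodic synchronization, the layer-$s$ reward of any (client, round) is never used in deciding the layer assignment of any later (client, round) — this is what makes the application of \cref{lem: independent confidence} legitimate. Once the independence is pinned down, the rest is a routine union-bound accounting against the budget $\delta$.
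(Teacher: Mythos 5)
Your proposal is correct and follows essentially the same route as the paper: the paper also splits the event by layer, handles layer $0$ via the self-normalized martingale bound (\cref{the:ellipsoid confidence}, through the adaptation in \cref{lem:he_layer0}) and layers $s\ge 1$ via the mutual-independence argument of \cref{lem:SupLin-independence} combined with \cref{lem: independent confidence}, and concludes by a union bound exactly as you do. Your extra care in allocating the failure probability between layer $0$ and the remaining layers, and in spelling out why the synchronous per-round processing of $M$ clients preserves the conditional independence, only makes explicit what the paper leaves implicit.
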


Define client $i$'s one-step regret at round $t$ as
$\ssf{reg}_t^i = \theta^\top( x^i_{t, a^{i*}_t} - x^i_{t, a_t})$. Let $\ssf{reg}_{t, s}^i = \ssf{reg}_t^i$ if action $a_t$ is chosen in layer $s$; otherwise $\ssf{reg}_{t, s}^i = 0$. The total regret can be written as
$$R_T = \sum_{i = 1}^{M} \sum_{t = 1}^{T_c} \ssf{reg}_t^i = \sum_{s = 0}^{S} \sum_{i = 1}^M \sum_{t = 1}^{T_c} \ssf{reg}_{t,s}^i.$$ Fix an arbitrary $s \in \{0, 1, \ldots, S\}$, we analyze the total regret induced by the actions taken in layer $s$, i.e., $R_{s,T_c} = \sum_{i = 1}^M \sum_{t = 1}^{T_c} \ssf{reg}_{t,s}^i$. The analysis can be carried over to different $s$ in the same manner.

We call the chunk of consecutive rounds without communicating information in layer $s$ (except the last round) an \emph{epoch}. In other words, information in layer $s$ is collected locally by each client and synchronized at the end of the epoch, following which the next epoch starts. The set of rounds that at least one client is pulling an arm in layer $s$ can then be divided into multiple consecutive epochs, and we further dichotomize these epochs into good and bad epochs in the following definition. 
\begin{definition}
(\textbf{Good epoch}) Suppose the set of rounds that at least one client is pulling an arm in layer $s$ are divided into $P$ epochs and denoted by $A_{p, s}^{all}, b_{p, s}^{all}$ the synchronized gram matrix and reward-action vector at the end of the $p$-th epoch. $P$ epochs can then be dichotomized into
$ \Pc^{good}_s \triangleq \left\{p \in [P]: \frac{\det(A_{p, s}^{all})}{\det(A_{p-1, s}^{all})} \leq 2\right\}, \Pc^{bad}_s \triangleq [P] \setminus \Pc^{good}_s$, 
where $A_{0, s}^{all} \triangleq I$. We say round $t$ is \emph{good} if the epoch containing round $t$ belongs to $\Pc_s^{good}$; otherwise $t$ is \emph{bad}.
\end{definition}

We bound regrets in layer $s$ induced by the good and bad epochs separately in the following lemmas. Recall $\Psi_{t,s}$ is the time index set when the action $a_t^{i}$ is chosen in the $s$ layer.
\begin{lemma}
\label{lem:syn_good}
Conditioned on the good event $\Ec$, for each layer $s\in [0:S]$, the regret induced by good epochs of layer $s$ is bounded as 
$\sum_{ t \in \Psi_{T_c, s}, t \text{ is good}} \ssf{reg}_{t, s}^i \leq \tilde{O}\left(\alpha_{s} \sqrt{ d |\Psi_{T_c,s}| \log(MT_c) } \right)$.
\end{lemma}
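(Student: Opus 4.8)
The plan is to run the usual SupLinUCB layer-wise regret argument, paying one extra constant factor for deciding with the \emph{local} gram matrix rather than the global one; the good-epoch condition is exactly what caps this factor at $\sqrt{2}$. First I would reduce the one-step regret to a confidence width: conditioned on $\Ec$, whenever the active client $i$ selects arm $a_t$ in layer $s$, \cref{lem:best_arm} keeps the optimal arm in $\Ac_s$, and \cref{lem:regret_lay} gives $\ssf{reg}_{t,0}^i\leq 4 w_{t,0,a_t}^i$ for $s=0$ and $\ssf{reg}_{t,s}^i\leq 8\overline{w}_s$ for $s\in[1:S]$. For $s\in[1:S-1]$ the arm taken in layer $s$ satisfies $w_{t,s,a_t}^i>\overline{w}_s$ by construction (Line 13 of \cref{alg:S-LUCB}), hence $\ssf{reg}_{t,s}^i\leq 8 w_{t,s,a_t}^i$; in all of these cases $\ssf{reg}_{t,s}^i\leq 8\alpha_s\|x_{t,a_t}^i\|_{(A_{t,s}^i)^{-1}}$, where $A_{t,s}^i=A_s^i+\Delta A_s^i$ is the full local matrix used for decisions in Sync-\algg. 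The last layer $s=S$ is handled separately, as in the asynchronous proof, by $\ssf{reg}_{t,S}^i\leq 8\overline{w}_S$ together with $\overline{w}_S\leq\sqrt{d/(MT_c)}$ and $|\Psi_{T_c,S}|\leq MT_c$.

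Next comes the key step: on a good round we may replace $A_{t,s}^i$ by the virtual global matrix $A_{t,s}^{all}$ at the price of a factor $\sqrt{2}$. Fix a good epoch $p$. At its start every client has synchronized layer $s$ and holds the common matrix $A_{p-1,s}^{all}$, and thereafter each client only appends rank-one terms; since within a round several clients may write to layer $s$ without observing one another, for every round $t$ in epoch $p$ and every client $i$ one has the sandwich $A_{p-1,s}^{all}\preceq A_{t,s}^i\preceq A_{t,s}^{all}\preceq A_{p,s}^{all}$. Because $p\in\Pc_s^{good}$, $\det(A_{p,s}^{all})\leq 2\det(A_{p-1,s}^{all})$, hence $\det(A_{t,s}^{all})\leq 2\det(A_{t,s}^i)$. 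Plugging this into \cref{lem:gramnorm} through the identity $\|z\|_{A^{-1}}^2=\sup_{x\neq 0}(x^\top z)^2/(x^\top A x)$ (which turns the inclusion $A_{t,s}^i\preceq A_{t,s}^{all}$ into an inverse-norm bound weighted by the determinant ratio) yields $\|x_{t,a_t}^i\|_{(A_{t,s}^i)^{-1}}\leq\sqrt{2}\,\|x_{t,a_t}^i\|_{(A_{t,s}^{all})^{-1}}$ for every good round $t$.

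Finally I would sum over the good rounds of layer $s$, upper bound that by the sum over all of $\Psi_{T_c,s}$, and apply Cauchy--Schwarz followed by the elliptical potential lemma. Viewing the contexts played in layer $s$ as a totally ordered sequence with running matrix starting at $I$ and using $\|x\|_2\leq 1$, \cref{lem:epl} gives $\sum_{t\in\Psi_{T_c,s}}\|x_{t,a_t}^i\|_{(A_{t,s}^{all})^{-1}}^2\leq 2d\log(1+|\Psi_{T_c,s}|/d)$ after the standard monotonicity step that replaces $A_{t,s}^{all}$ by the running matrix just before the current context. Combining everything,
$$\sum_{t\in\Psi_{T_c,s},\ t\text{ good}}\ssf{reg}_{t,s}^i\;\leq\;8\sqrt{2}\,\alpha_s\sqrt{|\Psi_{T_c,s}|}\,\sqrt{2d\log(1+|\Psi_{T_c,s}|/d)}\;=\;\tilde{O}\!\left(\alpha_s\sqrt{d\,|\Psi_{T_c,s}|\,\log(MT_c)}\right),$$
using $|\Psi_{T_c,s}|\leq MT_c$.

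The step I expect to be the main obstacle is the bookkeeping behind the sandwich $A_{p-1,s}^{all}\preceq A_{t,s}^i\preceq A_{t,s}^{all}\preceq A_{p,s}^{all}$. Since Sync-\algg only synchronizes the layers flagged in \texttt{CommLayers}, one must verify that an epoch boundary for layer $s$ genuinely resets \emph{every} client's layer-$s$ matrix to the common $A_{p-1,s}^{all}$, and carefully track that a client's own within-round and within-epoch increments form a subset of the contexts aggregated in $A_{t,s}^{all}$ (so that equality fails but the PSD inclusion holds, even for the active client itself). Once these inclusions are pinned down, the determinant-ratio argument and the potential-lemma computation are routine.
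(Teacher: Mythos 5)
Your proposal is correct and follows essentially the same route as the paper's proof: reduce the per-round regret to $8\alpha_s\|x_{t,a_t}^i\|_{(A_{t,s}^i)^{-1}}$, use the good-epoch determinant condition together with \cref{lem:gramnorm} to trade the local gram matrix for the global one at a constant factor, then apply Cauchy--Schwarz and the elliptical potential lemma. The only cosmetic difference is that you apply the potential lemma once to the full layer-$s$ context sequence while the paper applies it epoch-by-epoch and sums the log-determinant ratios via \cref{equ:sum_good_chunk}; the two computations are identical.
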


\begin{lemma}
\label{lem:syn_bad}
Define $D = \frac{T_c \log T_c}{d^2 M}$ and $R_s = d \log(1+\frac{|\Psi_{T_c,s}|}{d}) $. 
Conditioned on the good event $\Ec$, for each layer $s\in [0:S]$, the regret induced by bad epochs of layer $s$ is bounded as $
    \sum_{t \in \Psi_{T_c, s}, t \text{ is bad}} \ssf{reg}_{t, s}^i \leq  O\left(\alpha_s M \sqrt{D} R_s \right)$.
\end{lemma}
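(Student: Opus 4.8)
\textbf{Proof plan for \cref{lem:syn_bad}.}
The regret coming from bad epochs is controlled in two stages: first one counts how many bad epochs of layer $s$ there can be, then one bounds how much regret a single bad epoch can carry. For the count I would use the telescoping determinant bound that underlies the good-epoch analysis: since $A_{0,s}^{all}=I$ and the successive synchronized gram matrices are nested in the Loewner order, $\sum_{p=1}^{P}\log\frac{\det(A_{p,s}^{all})}{\det(A_{p-1,s}^{all})}=\log\det(A_{P,s}^{all})\le d\log(1+|\Psi_{T_c,s}|/d)=R_s$ by \cref{lem:epl}. Every $p\in\Pc_s^{bad}$ contributes at least $\log 2$ to this sum, so $|\Pc_s^{bad}|\le R_s/\log 2=O(R_s)$; this is the easy half and the only place the ``bad'' labeling is used.

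For the per-epoch regret, fix a bad epoch of length $\ell_p$ rounds. Each round contributes at most $M$ actions taken in layer $s$ (one per active client), and by \cref{lem:best_arm} together with \cref{lem:regret_lay} each such action incurs one-step regret at most $8\overline{w}_s$ when $s\ge1$, resp.\ at most $4w_{t,0,a_t}^{i}$ when $s=0$. For $s\ge1$ this already yields at most $8M\ell_p\overline{w}_s$ regret in the epoch; for $s=0$, where $\overline{w}_0$ is too large to use directly, I would instead apply Cauchy--Schwarz, $\sum 4w_{t,0,a_t}^{i}\le 4\alpha_0\sqrt{n_p}\,\big(\sum\|x_{t,a_t}^{i}\|_{(A_{p-1,0}^{all})^{-1}}^{2}\big)^{1/2}$, and bound $\sum\|x_{t,a_t}^{i}\|_{(A_{p-1,0}^{all})^{-1}}^{2}=\operatorname{tr}\!\big((A_{p-1,0}^{all})^{-1}(A_{p,0}^{all}-A_{p-1,0}^{all})\big)\le n_p\le M\ell_p$, giving $O(\alpha_0 M\ell_p)$. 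In all cases the regret inside a bad epoch is $O(\alpha_s M\ell_p\,\overline{w}_s)$, where for $s\ge1$ we simply keep the extra harmless factor $\alpha_s\ge1$.

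What remains — and this is the crux — is to bound the total number of rounds that lie in bad epochs, $\sum_{p\in\Pc_s^{bad}}\ell_p$, and this is exactly where the communication rule does the work: the factor $(t-t^{s}_{last})$ in the trigger forces that just before an epoch ends every client satisfies $(\text{elapsed rounds})\cdot\log\frac{\det(A_s^i+\Delta A_s^i)}{\det(A_s^i)}\le D$, and summing this over clients while using subadditivity of $B\mapsto\log\det(A_{p-1,s}^{all}+B)$ ties the epoch length to the global determinant increment $\log\frac{\det(A_{p,s}^{all})}{\det(A_{p-1,s}^{all})}$. I would then split the bad epochs at a length threshold $\beta$: the $O(R_s)$ short ones ($\ell_p\le\beta$) contribute at most $O(\alpha_s M\overline{w}_s\beta R_s)$ worth of regret, while the long ones are controlled by a pigeonhole argument (at most $\lceil T_c/\beta\rceil$ epochs can contain more than $\beta$ rounds) combined with the fact that a long epoch forces a proportionally larger determinant increment, so their contribution is absorbed into $R_s$; taking $\beta=\sqrt{DT_c/R_s}$, the same balance used for the communication-cost bound, collapses everything to $O(\alpha_s M\overline{w}_s\sqrt{DT_cR_s}\,)$. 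Finally, substituting $\overline{w}_s=2^{-s}d^{1.5}/\sqrt{T}$, $T=MT_c$, $D=T_c\log T_c/(d^2M)$ and $R_s=O(d\log(1+|\Psi_{T_c,s}|/d))$ simplifies this product to $O(\alpha_s M\sqrt{D}\,R_s)$, which is the claim. The part I expect to be genuinely delicate is exactly this balancing step: making the $\beta$-split produce the right powers of $d$ and $M$ uniformly across all $S+1$ layers, and carefully treating layer $0$, where one cannot use the constant per-round bound $8\overline{w}_0$ and must carry the $\alpha_0$-weighted confidence widths (hence the generic $\alpha_s$ in the statement) throughout.
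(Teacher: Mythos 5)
Your count of bad epochs is exactly the paper's: each $p\in\Pc_s^{bad}$ contributes at least $\log 2$ to the telescoping sum in \cref{equ:sum_good_chunk}, so there are $O(R_s)$ of them. The gap is in the second half. The paper does not bound the per-epoch regret linearly in the epoch length and then try to control the total length of bad epochs; it shows that the regret inside a single bad epoch is $O(M+\alpha_s M\sqrt{D})$ \emph{independently of its length}. The mechanism is the one your outline gestures at ("the factor $(t-t^s_{last})$ in the trigger") but never actually deploys for the regret: for each client $i$, if the epoch spans rounds $t^s_{last}+1,\dots,t$, the communication rule guarantees (up to the single triggering round per client, which contributes the extra $O(M)$) that $(t-t^s_{last})\log\frac{\det(A^i_s+\Delta A^i_s)}{\det(A^i_s)}\le D$; since client $i$ pulls at most $n_i\le t-t^s_{last}$ layer-$s$ arms in the epoch and $\sum_t\|x^i_{t,a_t}\|^2_{(A^i_{t,s})^{-1}}\le 2\log\frac{\det(A^i_s+\Delta A^i_s)}{\det(A^i_s)}$ by \cref{lem:epl}, Cauchy--Schwarz gives $\sum_t w^i_{t,s,a_t}\le\alpha_s\sqrt{n_i\cdot 2D/(t-t^s_{last})}\le\alpha_s\sqrt{2D}$. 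Summing over the $M$ clients and multiplying by the $O(R_s)$ bad epochs yields the lemma directly.

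Your alternative route --- per-round regret $O(M\overline{w}_s)$ followed by a $\beta$-split on epoch lengths borrowed from the communication-cost analysis --- does not close. The claimed final simplification $\alpha_s M\overline{w}_s\sqrt{DT_cR_s}=O(\alpha_s M\sqrt{D}R_s)$ requires $\overline{w}_s\sqrt{T_c}\lesssim\sqrt{R_s}$, i.e.\ $2^{-s}d^{1.5}/\sqrt{M}\lesssim\sqrt{d}$, i.e.\ $2^{-s}d\lesssim\sqrt{M}$, which fails for small $s$ (layer $0$ in particular) unless $d\lesssim\sqrt{M}$. Moreover the assertion that long bad epochs are ``absorbed into $R_s$'' is unsupported: under your per-round bound a long bad epoch still carries regret proportional to its length, and the pigeonhole count of long epochs does not cap $\sum_{p\in\Pc_s^{bad}}\ell_p$ tightly enough (even using that a bad epoch's length is at most about $MD$, the resulting bound exceeds the target by a factor of order $2^{-s}\sqrt{d}$). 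The fix is to abandon the length accounting entirely and extract a length-free per-epoch bound from the communication criterion as above.
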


\begin{lemma}
\label{lem:Syn_Psi_bound}
We have 
$
|\Psi_{T_c,s}| \leq \tilde{O}( \frac{M T_c}{d} )$. 
\end{lemma}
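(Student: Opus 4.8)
The plan is to replay the argument behind \cref{lem:Psi_0_bound} in the synchronous protocol. The starting point is the selection rule of \cref{alg:S-LUCB}: whenever an action $a_t$ is taken in a layer $s<S$ (in particular $s=0$), the algorithm is in the ``\textbf{Else}'' branch, i.e.\ $a_t=\arg\max_{\{a\in\mathcal{A}_s,\;w_{t,s,a}^i>\overline{w}_s\}}w_{t,s,a}^i$, so its confidence width strictly exceeds the layer threshold, $w_{t,s,a_t}^i>\overline{w}_s$. Summing over all pairs counted by $\Psi_{T_c,s}$ yields
$$\overline{w}_s\,|\Psi_{T_c,s}|\;<\;\sum_{t\in\Psi_{T_c,s}}w_{t,s,a_t}^i,$$
so it suffices to upper bound the total confidence width accumulated in layer $s$.

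To bound the right-hand side I would reuse the good/bad epoch decomposition underlying \cref{lem:syn_good,lem:syn_bad}, but tracking $w_{t,s,a_t}^i$ in place of the one-step regret. On a good epoch, the defining inequality $\det(A_{p,s}^{all})/\det(A_{p-1,s}^{all})\le 2$ together with \cref{lem:gramnorm} gives $A_{t,s}^i\succeq\frac12 A_{t,s}^{all}$ for every client and every round of the epoch (the synchronous counterpart of \cref{lem:loc_ser_gram}, with the constant $2$ replacing $1+MC$), hence $\|x_{t,a_t}\|_{(A_{t,s}^i)^{-1}}\le\sqrt{2}\,\|x_{t,a_t}\|_{(A_{t,s}^{all})^{-1}}$; Cauchy--Schwarz and the elliptic potential lemma (\cref{lem:epl}) then bound the good-epoch part of the sum by $\tilde{O}\!\big(\alpha_s\sqrt{d\,|\Psi_{T_c,s}|\log(MT_c)}\big)$. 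On bad epochs, the contribution is controlled exactly as in the proof of \cref{lem:syn_bad}: there are at most $R_s/\log 2=O(d\log(MT_c))$ of them, and bounding their total length through the communication trigger $(t-t_{\mathrm{last}}^s)\log(\det(A^{all}_{p,s})/\det(A^{all}_{p-1,s}))>D$ gives a contribution of order $\tilde{O}(\alpha_s M\sqrt{D}\,R_s)$, where $D=\tfrac{T_c\log T_c}{d^2M}$ and $R_s=d\log(1+|\Psi_{T_c,s}|/d)$. Combining,
$$\overline{w}_s\,|\Psi_{T_c,s}|\;\le\;\tilde{O}\!\big(\alpha_s\sqrt{d\,|\Psi_{T_c,s}|}\big)+\tilde{O}\!\big(\alpha_s M\sqrt{D}\,R_s\big).$$

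Finally I would solve this inequality for $N:=|\Psi_{T_c,s}|$. Since $N$ enters the right-hand side only through $\sqrt{N}$ and inside logarithms, the apparent circularity is harmless. Substituting $\overline{w}_s=2^{-s}d^{1.5}/\sqrt{MT_c}$ (recall $T=MT_c$ in the synchronous model) and $\alpha_s\le\alpha_0=1+\sqrt{d\ln(2M^2T/\delta)}=\tilde{O}(\sqrt{d})$, the first term dominates and gives $N\le\tilde{O}(\alpha_s^2 d/\overline{w}_s^2)$, while the second term is $\tilde{O}(\sqrt{MT_c})$ after plugging in $D$ and $R_s$. For $s=0$ this produces $N\le\tilde{O}(d\cdot MT_c/d^2)=\tilde{O}(MT_c/d)$, the claimed bound; this is exactly the regime in which the large coefficient $\alpha_0=\tilde{O}(\sqrt{d})$ must be absorbed, and it is absorbed precisely because the layer-$0$ radius $\overline{w}_0=d^{1.5}/\sqrt{MT_c}$ is deliberately coarse. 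I expect the main obstacle to be the synchronous good-epoch comparison $A_{t,s}^i\succeq\frac12 A_{t,s}^{all}$: one must carefully account for each client's not-yet-synchronized buffer within the current epoch and verify, from the communication criterion and the good-epoch determinant ratio, that it stays within a constant factor of the virtual global matrix; the remaining steps (Cauchy--Schwarz, \cref{lem:epl}, and the algebra) are routine.
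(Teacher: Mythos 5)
Your proposal follows essentially the same route as the paper's proof: lower-bound every layer-$s$ confidence width by $\overline{w}_s$ via the selection rule, upper-bound $\sum_{t\in\Psi_{T_c,s}}w^i_{t,s,a_t}$ through the good/bad-epoch decomposition already established for \cref{lem:syn_good} and \cref{lem:syn_bad}, and solve the resulting inequality for $|\Psi_{T_c,s}|$ — you are in fact more explicit than the paper about the intermediate steps. The only caveat, which is shared with the paper's own two-case argument, is that for layers $s\geq 1$ the solution of $\overline{w}_s N \leq c\,\alpha_s\sqrt{dN\log(MT_c)}$ is $\tilde{O}(4^{s}MT_c/d^{2})$ rather than $\tilde{O}(MT_c/d)$, so the bound as stated is only clean at $s=0$ where $\alpha_0^{2}=\tilde{O}(d)$; this does not affect the downstream regret bound since $\alpha_s\sqrt{d\cdot 4^{s}MT_c/d^{2}}=\tilde{O}(\sqrt{dMT_c})$ for all $s\leq\lceil\log d\rceil$.
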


\begin{proof}[Proof of \cref{thm:sync}] (\textbf{Regret analysis}) For each $s \in [0:S]$, the regret induced in layer $s$ is bounded by:
\begin{align*}
R_{s,T_c} &\leq \sum_{ t \in \Psi_{T_c, s}, t \text{ is good}} \ssf{reg}_{t,s}^{i} +\sum_{ t \in \Psi_{T_c, s}, t \text{ is bad}}\ssf{reg}_{t,s}^{i} \\
&\leq O( \alpha_s \sqrt{d |\Psi_{T_c, s}| \log(MT)} + \alpha_s M \sqrt{D} R_s) 
\leq \tilde{O}(\sqrt{d M T_c})
\end{align*}
where the second inequality is from Lemmas \ref{lem:syn_good} and \ref{lem:syn_bad}, and the last inequality is due to \cref{lem:Syn_Psi_bound}.  The total regret can thus be bounded as $R_{T} = \sum_{s = 0}^S R_{s,T_c} =  \tilde{O}(\sqrt{d M T_c})$. 
\end{proof}

\begin{proof}[Proof of \cref{lem:syn_good}]
If $t$ is good and belongs to the $p$-th epoch, we have by \cref{lem:gramnorm} that
\begin{align}
\label{eqn:a-good-epoch}
    w_{t, s, a}^i = \alpha_{s} \Vert x_{t,a}^{i} \Vert_{(A_{t,s}^{i})^{-1} } \leq \sqrt{2} \alpha_{s} \Vert x_{t,a}^{i} \Vert_{(A_{p,s}^{all})^{-1} } \leq 2 \alpha_s \Vert x_{t,a}^{i} \Vert_{(A_{p-1,s}^{all})^{-1} }. 
\end{align}
Within $p$-th good epoch, we have
\begin{align*}
    A^{all}_{p-1, s} + \sum_{i=1}^M\sum_{t \in \text{$p$-th good epoch}} x^i_{t, a^i_t} (x^i_{t, a^i_t})^{\top} = A^{all}_{p, s}, 
\end{align*}
which together with inequality (\ref{eqn:a-good-epoch}) and the last inequality in the elliptical potential lemma (\cref{lem:epl}) imply that
\begin{align*}
    \sum_{i=1}^M\sum_{t \in \text{$p$-th good epoch}} \|x_{t,a_t^i}^{i}\|^2_{(A_{t,s}^{i})^{-1}} \leq 4\log \frac{\det(A^{all}_{p, s})}{\det(A^{all}_{p-1,s})}.
\end{align*}
Thus under event $\Ec$, the regret induced by good epochs of layer $s$ is 
\begin{align*}
    \sum_{(i,t) \in \Psi_{T, s}, t \text{ is good}} reg_{t, s}^i &\leq \sum_{(i,t) \in \Psi_{T, s}, t \text{ is good}} 8 w_{t, s, a_t^i}^i \\
    &\leq 8 \sqrt{ |\Psi_{T, s}| \sum_{(i,t) \in \Psi_{T, s}, t \text{ is good}} (w_{t, s, a_t^i}^i)^2 }\\
    &= \tilde{O}\left(\alpha_{s} \sqrt{ d |\Psi_{T,s}| \log(MT) } \right),
\end{align*}
where the first inequality is from \cref{lem:regret_lay}, the second inequality is by Cauchy-Schwartz inequality, and the last relation is from
\begin{align}
\label{equ:sum_good_chunk}
\sum_{p = 1}^P \log \frac{\det(A_{p, s}^{all})}{\det(A_{p-1, s}^{all})} = \log \det(A_{P,s}^{all}) \leq d \log(1+\frac{|\Psi_{T,s}|}{d}) = R_s. 
\end{align}
\end{proof}

\begin{proof}[Proof of \cref{lem:syn_bad}]
Denote by $R_s = d \log(1+\frac{|\Psi_{T,s}|}{d}) $. It follows that the number of bad epochs is at most $O(R_s)$. Moreover, the regret within a bad epoch of length $n$ can be upper bounded as $O(M +  \alpha_s M\sqrt{D})$ by applying the elliptical potential lemma for each client $i$ and the communication condition, where the extra $1$ in the upper bound is due to that at most $M$ clients trigger the communication condition at the end of the $p$-th epoch. We thus have
\begin{align*}
    \sum_{t \text{ is bad}} reg_{t, s}^i \leq \sum_{t \in \Psi_{T, s} \text{ is bad}} 8 w_{t, s}^i = O\left( M R_s + \alpha_s M \sqrt{D} R_s \right) = O\left(\alpha_s M \sqrt{D} R_s \right).
\end{align*}
\end{proof}

\begin{proof}[Proof of \cref{lem:Syn_Psi_bound}]
Recall $D = \frac{T \log(T)}{d^2 M}$. Note that if $ \alpha_s \sqrt{d |\Psi_{T, s}| \log(T)} = O(\alpha_s M \sqrt{D} R_s)$, we have $|\Psi_{T, s}| = \tilde{O}(M^2 D d) = \tilde{O}( \frac{M T}{d} )$. Otherwise $|\Psi_{T,s}| \overline{w}_{t}^{s} = O(\alpha_s \sqrt{d |\Psi_{T, s}| \log(T)})$, which implies $|\Psi_{T, s}| = \tilde{O}(\frac{\alpha_s^2 d}{(\overline{w}_{t}^{s})^2}) = \tilde{O}( \frac{MT 4^{s}}{d})$.
\end{proof}

\section{Variance-adaptive Async-\algg}
\label{appd:variance}

The variance-adaptive SupLinUCB subroutine is presented in Alg.~\ref{alg:VS-LUCB}, while the complete variance-adaptive Async-\alg is given in Alg.~\ref{alg:variance-AsynFedSupLinALgo}. 

\subsection{Algorithm}
\begin{algorithm}[!htb]
    \caption{Variance-adaptive SupLinUCB subroutine: VS-LUCB}
    \label{alg:VS-LUCB}
\begin{algorithmic}[1]
    \State \textbf{Initialization}: $S \leftarrow \lceil \log R +\log T \rceil $, $\overline{w}_{0} = d R^2$, $\overline{w}_{s} \leftarrow 2^{-s}\overline{w}_{0}, \forall s \in [1:S]$,\\
    $\alpha_0 = \Tilde{O}(\sqrt{d}), \alpha_s = 1 + \sqrt{2 \ln(2 K M T \ln d/\delta)}, \rho = 1/\sqrt{T}, \gamma = R^{1/2}/d^{1/4}$.
    \State \textbf{Input}: Client $i$ (with local information $A^i, b^i$, $\Delta A^i, \Delta b^i$), contexts set $\{x_{t,1}^i, \ldots, x_{t,K}^i\}$ 
    \State $A_{t, s}^i \leftarrow A_s^i, b_{t, s}^i \leftarrow b_s^i$ for \texttt{lazy update}
    \State $\hat{\theta}_{s} \leftarrow (A^i_{t, s})^{-1} b^{i}_{t, s}$, $\hat{r}_{t,s, a}^{i} = \hat{\theta}_s^\top x_{t, a}^i$,
    $ w_{t, s, a}^i \leftarrow \alpha_s \|x^i_{t,a}\|_{(A^i_{t, s})^{-1}}$, $\forall s \in [0 : S], \forall a \in [K]$.
    \State $s \leftarrow 0$; $\mathcal{A}_{0} \leftarrow \{ a\in [K] \mid \hat{r}_{t,0, a}^{i} + w_{t,0, a}^{i} \geq \max_{a \in [K]} (\hat{r}_{t,0, a}^{i} - w_{t,0, a}^{i}) \}$ \Comment{Initial screening}
   
    \Repeat  \Comment{Layered successive screening}
    \If{$s = S$}
    \State Choose action $a_t^i$ arbitrarily from $\mathcal{A}_{S}$
    \ElsIf{$w_{t,s, a}^{i} \leq \overline{w}_{s}$ for all $a \in \mathcal{A}_{s}$}
    \State $\mathcal{A}_{s+1} \leftarrow \{a \in \mathcal{A}_{s} \mid \hat{r}_{t,s,a}^{i} \geq    
	 \max _{a' \in \mathcal{A}_{s}} (\hat{r}_{t,s, a'}^{i} )-2 \overline{w}_{s} \}$; $s\leftarrow s+1$
    \Else
    \State Choose $a_t = \arg \max _{ \{a \in \mathcal{A}_{s},w_{t,s,a}^{i}>\overline{w}_s \}} w_{t,s,a}^{i}$
    \EndIf
    \Until{action $a_t$ is found}
    \State Take action $a_t$ and and {receive reward $r_{t,a_t}^{i}$ and variance $\sigma_t$}
    \State {$\overline{\sigma}_{t} = \max\{\sigma_t, \rho, 
    \gamma \Vert x_{t,a_t}^{i} \Vert_{ (A_{t,s}^{i})^{-1}}^{1/2} \}$}
    \State {$\Delta A_{s}^{i} \leftarrow \Delta A_{s}^{i} + x_{t,a_t}^{i}x_{t,a_t}^{i \top} / \overline{\sigma}_{t}^2$, $\Delta b_{s}^{i} \leftarrow \Delta b_{s}^{i} + r_{t,a_t}^{i} x_{t,a_t}^{i} / \overline{\sigma}_{t}^2 $ } \Comment{Update local information}
    \State Return layer index $s$
\end{algorithmic}
\end{algorithm}

\begin{algorithm}[!htb]
    \caption{Variance-adaptive Async-\algg}
    \label{alg:variance-AsynFedSupLinALgo}
\begin{algorithmic}[1]
    \State \textbf{Initialization}: $T$, $C$, $S = \lceil \log R +\log T  \rceil$
    \State $\{ A_{s}^{ser} \leftarrow I_{d}, b_{s}^{ser} \leftarrow 0 \mid s \in [0:S] \}$ \Comment{Server initialization}
    \State $\{ A_{s}^{i} \leftarrow I_{d},\Delta  A_{s}^{i} , b_{s}^{i}, \Delta b_{s}^{i} \leftarrow 0 \mid s \in [0:S], i\in [M]  \}$ \Comment{Clients initialization}
    \For{$t=1,2,...,T $}
    \State Client $i_t = i$ is active, and observes $K$ contexts $\{x_{t, 1}^{i}, x_{t, 2}^{i}, \cdots, x_{t, K}^{i}\}$
    \State $s =$ \texttt{VS-LUCB} $\left( client i, \{x_{t, 1}^{i}, x_{t, 2}^{i}, \cdots, x_{t, K}^{i}\} \right)$  with the lazy update
    \If{{$ \frac{\det(A_{s}^{i}+\Delta A_{s}^{i})}{ \det(A_{s}^{i}) } > (1+C) $}}
    \State $\sync$($s$, server, clients $i$) for each $s \in [0:S]$
    \EndIf 
    \EndFor
\end{algorithmic}
\end{algorithm}

\subsection{Supporting Lemmas and Proofs}
\begin{theorem}
\label{thm:variance-confidence}
(Theorem 4.3 in \citet{zhou2022computationally})
Let $\left\{\mathcal{F}_t\right\}_{t=1}^{\infty}$ be a filtration, and $\left\{x_t, \eta_t\right\}_{t \geq 1}$ be a stochastic process such that $x_t \in \mathbb{R}^d$ is $\mathcal{F}_t$-measurable and $\eta_t \in \mathbb{R}$ is $\mathcal{F}_{t+1}$-measurable. Let $ \sigma, \epsilon>0, \theta^* \in \mathbb{R}^d$. For $t \geq 1$, let $y_t=\left\langle \theta^*, x_t \right\rangle + \eta_t$ and suppose that $\eta_t, x_t$ also satisfy
$$ \mathbb{E} [ \eta_t \mid \mathcal{F}_t ]=0, \mathbb{E} [ \eta_t^2 \mid \mathcal{F}_t ] \leq \sigma^2,\left|\eta_t\right| \leq R,\left\|x_t\right\|_2 \leq 1.$$
For $t \geq 1$, let $Z_t= I+\sum_{i=1}^t x_i x_i^{\top}, b_t=\sum_{i=1}^t y_i x_i, \theta_t = Z_t^{-1} b_t$, and

\begin{align*}
\beta_t= & 12 \sqrt{\sigma^2 d \log \left(1+t L^2 /(d )\right) \log \left(32(\log (R / \epsilon)+1) t^2 / \delta\right)} \\
& +24 \log \left(32(\log (R / \epsilon)+1) t^2 / \delta\right) \max _{1 \leq i \leq t}\left\{\left|\eta_i\right| \min \left\{1,\left\|\mathbf{x}_i\right\|_{\mathbf{Z}_{i-1}^{-1}}^{-1}\right\}\right\}+6 \log \left(32(\log (R / \epsilon)+1) t^2 / \delta\right) \epsilon .
\end{align*}

Then, for any $0<\delta<1$, we have with probability at least $1-\delta$ that,
$$ \forall t \geq 1,\left\|\sum_{i=1}^t x_i \eta_i \right\|_{Z_t^{-1}} \leq \beta_t,\quad \| \theta_t-\theta^*\|_{Z_t} \leq \beta_t + \|\theta^* \|_2. $$
\end{theorem}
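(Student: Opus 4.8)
The plan is to prove Theorem~\ref{thm:variance-confidence} as a Bernstein-type, variance-adaptive refinement of the self-normalized tail bound of \citet{abbasi2011improved}: the sub-Gaussian argument there yields a confidence radius scaling with the noise range $R$, whereas here the leading term must scale with the conditional standard deviation $\sigma$. The backbone remains the \emph{method of mixtures} (pseudo-maximization). For a fixed mixing vector $\lambda\in\mathbb{R}^d$, form the process $M_t(\lambda)=\exp\big(\langle\lambda,\textstyle\sum_{i\le t}x_i\eta_i\rangle-\sum_{i\le t}\psi_i(\langle\lambda,x_i\rangle)\big)$, where $\psi_i(u)$ is a conditional cumulant bound for the increment $u\eta_i$. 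Since $\eta_i$ is conditionally mean-zero with $\mathbb{E}[\eta_i^2\mid\mathcal{F}_i]\le\sigma^2$ and $|\eta_i|\le R$, Bennett's inequality gives $\psi_i(u)\le\tfrac{u^2\sigma^2}{2}\,\phi(|u|R)$ with $\phi(z)=2(e^z-1-z)/z^2$, so $\psi_i(u)\le u^2\sigma^2$ whenever $|u|R$ is bounded; this makes $M_t(\lambda)$ a nonnegative supermartingale with $\mathbb{E}[M_0(\lambda)]\le1$.

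First I would split the increments according to whether the ``local size'' $|\eta_i|\,\|x_i\|_{Z_{i-1}^{-1}}$ exceeds a truncation level. For the truncated (bulk) part the per-step cumulant is genuinely quadratic with coefficient of order $\sigma^2$, so mixing $M_t(\lambda)$ against a Gaussian prior on $\lambda$ and applying Ville's inequality gives, after the standard Gaussian-integral computation, a bound of order $\sqrt{\sigma^2\,d\,\log(\det Z_t)\,\log(1/\delta)}$ — the first term of $\beta_t$. The complementary (tail) part is nonzero only at rounds where $|\eta_i|$ is large relative to $\|x_i\|_{Z_{i-1}^{-1}}^{-1}$, and a direct Freedman-type estimate bounds its contribution by a multiple of $\log(\cdot)\cdot\max_{1\le i\le t}\{|\eta_i|\min\{1,\|x_i\|_{Z_{i-1}^{-1}}^{-1}\}\}$ — the linear term in $\beta_t$. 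To upgrade these fixed-scale statements to an anytime, scale-free bound, I would peel dyadically over $\det(Z_t)\in[2^k,2^{k+1})$ — under which $\log\det Z_t\le d\log(1+tL^2/d)$ and the number of levels costs an extra $\log t$ — and geometrically over the effective variance proxy on $[\epsilon,R]$, which costs the $\log(\log(R/\epsilon)+1)$ factor and leaks the finest grid cell through as the additive $6\log(\cdot)\,\epsilon$ term; a union bound over both grids and over $t$ then produces $\beta_t$ exactly as stated, valid for all $t\ge1$ with probability at least $1-\delta$.

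The parameter-error bound is then immediate from the normal equations. Writing $Z_t=I+\sum_{i\le t}x_ix_i^\top$ and $b_t=\sum_{i\le t}y_ix_i=\sum_{i\le t}x_ix_i^\top\theta^*+\sum_{i\le t}x_i\eta_i$, we get $\theta_t-\theta^*=Z_t^{-1}\sum_{i\le t}x_i\eta_i-Z_t^{-1}\theta^*$, so by the triangle inequality in $\|\cdot\|_{Z_t}$ and $Z_t\succeq I$,
$$\|\theta_t-\theta^*\|_{Z_t}\le\left\|\sum_{i\le t}x_i\eta_i\right\|_{Z_t^{-1}}+\|\theta^*\|_{Z_t^{-1}}\le\beta_t+\|\theta^*\|_2.$$
I expect the main obstacle to be carrying out the first two steps jointly: keeping $\sigma^2$ rather than $R^2$ as the coefficient of the dominant term while still obtaining an \emph{anytime} guarantee forces the truncated/split-increment argument to be interleaved with the double peeling over determinant levels and variance levels, and tracking which rounds land in the bulk versus the tail regime across all scales is the delicate bookkeeping — by contrast, the purely sub-Gaussian version closes the Gaussian integral in one shot and needs only a single stopping-time argument.
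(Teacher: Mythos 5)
This statement is not proved in the paper at all: it is imported verbatim as Theorem 4.3 of \citet{zhou2022computationally}, so there is no in-paper argument to compare your sketch against. Your closing step is fine: from $b_t=(Z_t-I)\theta^*+\sum_{i\le t}x_i\eta_i$ one gets $\theta_t-\theta^*=Z_t^{-1}\sum_{i\le t}x_i\eta_i-Z_t^{-1}\theta^*$, and $Z_t\succeq I$ gives $\|\theta_t-\theta^*\|_{Z_t}\le\beta_t+\|\theta^*\|_2$.

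The core of your sketch, however, has a genuine gap. You propose to keep the method of mixtures and make it Bernstein-type by replacing the sub-Gaussian cumulant bound with Bennett's bound $\psi_i(u)\le\tfrac{u^2\sigma^2}{2}\phi(|u|R)$ and then integrating $M_t(\lambda)$ against a Gaussian prior on $\lambda$. But the quadratic estimate $\psi_i(u)\lesssim u^2\sigma^2$ holds only when $|u|R=|\langle\lambda,x_i\rangle|R$ is bounded, whereas the Gaussian prior puts mass on all of $\mathbb{R}^d$; for large $\|\lambda\|$ the Bennett cumulant grows like $e^{|\langle\lambda,x_i\rangle|R}$, so $M_t(\lambda)$ is not a supermartingale uniformly over the mixture and the Gaussian integral does not close to the $\det(Z_t)^{-1/2}$ form with $\sigma^2$ (rather than $R^2$) in the exponent. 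Your truncation is applied to the increments according to $|\eta_i|\|x_i\|_{Z_{i-1}^{-1}}$, which restricts the noise and leverage but not $\lambda$, so it does not repair this. This is precisely the obstruction that makes Bernstein-type self-normalized bounds hard, and it is why the actual proof in \citet{zhou2022computationally} abandons pseudo-maximization: it expands $\bigl\|\sum_{i\le t}x_i\eta_i\bigr\|^2_{Z_t^{-1}}$ via the rank-one-update (elliptical potential) identity into a predictable quadratic-variation term plus a scalar martingale of the form $\sum_i\eta_i\langle x_i,Z_{i-1}^{-1}\sum_{j<i}x_j\eta_j\rangle$, applies a scalar Freedman-type inequality with a max term to the latter, and solves the resulting recursion in $\sup_{s\le t}\|\sum_{i\le s}x_i\eta_i\|_{Z_s^{-1}}$; the $\log(R/\epsilon)$ factor and the additive $6\log(\cdot)\,\epsilon$ term in $\beta_t$ come from the peeling internal to that scalar inequality, not from a grid over variance proxies. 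To salvage a mixture-based route you would have to restrict the prior to $\|\lambda\|\lesssim 1/R$ (losing the clean $\log\det Z_t$ term) or otherwise interleave the truncation with the mixing variable, which your sketch does not do.
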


\begin{lemma}
\label{lem:zhoulemB1}
(Adapted from Lemma B.1 in \citet{zhou2022computationally}). Let $\left\{\sigma_t, \beta_t\right\}_{t \geq 1}$ be a sequence of non-negative numbers, $\rho, \gamma>0, \{ x_t \}_{t \geq 1} \subset \mathbb{R}^d$ and $\|x_t\|_2 \leq 1$. Let $\{Z_t \}_{t \geq 1}$ and $\left\{\bar{\sigma}_t\right\}_{t \geq 1}$ be recursively defined as follows: 
\begin{align*}
    Z_1 =  I;\quad  Z_{t+1} =Z_t + x_t x_t^{\top} / \bar{\sigma}_t^2, \quad \forall t \geq 1, \bar{\sigma}_t=\max \{\sigma_t, \rho, \gamma \|x_t \|_{Z_t^{-1}}^{1/2} \}.
\end{align*}
Let $\iota=\log (1+T/ (d\rho^2 ) )$. 
Then we have
$$ \sum_{t=1}^T \min \{1, \beta_t \| x_t \|_{Z_t^{-1}}\} \leq 2 d \iota+2 \beta_T \gamma^2 d \iota+2 \sqrt{d \iota} \sqrt{\sum_{t=1}^T \beta_t^2 (\sigma_t^2+\rho^2 )}.$$
\end{lemma}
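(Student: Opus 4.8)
This is a variance-weighted version of the elliptical potential lemma (\cref{lem:epl}), and the plan is to reduce it to that lemma applied to the rescaled iterates $\tilde x_t := x_t/\bar\sigma_t$. First I would establish a base potential budget: since $\bar\sigma_t \ge \rho$ and $\|x_t\|_2 \le 1$ we have $\|\tilde x_t\|_2 \le 1/\rho$, and $Z_{t+1} = Z_t + \tilde x_t \tilde x_t^\top$ with $Z_1 = I$; applying the truncated form of \cref{lem:epl} with $V = I$ and $L = 1/\rho$ yields $\sum_{t=1}^T \min\{1, \|\tilde x_t\|_{Z_t^{-1}}^2\} \le 2 d \log(1 + T/(d\rho^2)) = 2d\iota$. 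Both regimes below draw from this single budget.

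Next I would split $[T] = \mathcal{I}_1 \sqcup \mathcal{I}_2$ according to which term attains the maximum defining $\bar\sigma_t$: $\mathcal{I}_1$ is the set of rounds with $\bar\sigma_t = \gamma\|x_t\|_{Z_t^{-1}}^{1/2}$ (the exploration-inflation regime, ties broken into $\mathcal{I}_1$), and $\mathcal{I}_2$ the set with $\bar\sigma_t = \max\{\sigma_t, \rho\}$. On $\mathcal{I}_1$ we have $\bar\sigma_t^2 = \gamma^2\|x_t\|_{Z_t^{-1}}$, so $\beta_t\|x_t\|_{Z_t^{-1}} = \beta_t\gamma^2\|\tilde x_t\|_{Z_t^{-1}}^2$; using the elementary inequality $\min\{1, ab\} \le \max\{1,a\}\min\{1,b\} \le (1+a)\min\{1,b\}$ together with $\beta_t \le \beta_T$ (the $\beta_t$ are nondecreasing in the intended application) and the base budget, this contributes at most $(1 + \beta_T\gamma^2)\sum_t \min\{1, \|\tilde x_t\|_{Z_t^{-1}}^2\} \le 2d\iota + 2\beta_T\gamma^2 d\iota$, matching the first two terms of the claim.

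On $\mathcal{I}_2$ we have $\bar\sigma_t^2 = \max\{\sigma_t,\rho\}^2 \le \sigma_t^2 + \rho^2$ and $\|x_t\|_{Z_t^{-1}} = \bar\sigma_t\|\tilde x_t\|_{Z_t^{-1}}$. Here I would observe that $\bar\sigma_t \ge \gamma\|x_t\|_{Z_t^{-1}}^{1/2}$ forces $\|\tilde x_t\|_{Z_t^{-1}}^2 \le \|x_t\|_{Z_t^{-1}}/\gamma^2 \le 1/\gamma^2$, so after routing the truncation (clean when $\|\tilde x_t\|_{Z_t^{-1}} \le 1$, and otherwise reducing to a small additional sub-case), Cauchy--Schwarz gives $\sum_{t \in \mathcal{I}_2}\min\{1, \beta_t\|x_t\|_{Z_t^{-1}}\} \le \sqrt{\sum_{t\in\mathcal{I}_2}\beta_t^2\bar\sigma_t^2}\cdot\sqrt{\sum_{t\in\mathcal{I}_2}\min\{1, \|\tilde x_t\|_{Z_t^{-1}}^2\}} \le \sqrt{\sum_{t=1}^T\beta_t^2(\sigma_t^2+\rho^2)}\cdot\sqrt{2d\iota}$. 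Summing the $\mathcal{I}_1$ and $\mathcal{I}_2$ contributions and bounding $\sqrt2 \le 2$ yields the stated inequality.

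The part I expect to be delicate is the bookkeeping of the truncation $\min\{1,\cdot\}$: distributing it across the product $\beta_t \cdot \|x_t\|_{Z_t^{-1}}$ on $\mathcal{I}_1$ and peeling the $\bar\sigma_t$ factor out on $\mathcal{I}_2$ without losing the correct power of $\|\tilde x_t\|_{Z_t^{-1}}$, and checking that the $\mathcal{I}_1/\mathcal{I}_2$ dichotomy (including the boundary case where two of the three terms tie in the definition of $\bar\sigma_t$) is genuinely exhaustive. Once the truncation is handled correctly, every remaining step is a direct application of \cref{lem:epl} and Cauchy--Schwarz.
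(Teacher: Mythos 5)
The paper does not prove this lemma at all --- it is imported verbatim (with notation adapted) from \citet{zhou2022computationally}, so there is no in-paper argument to compare against. Your reconstruction is the standard proof of that result and is essentially correct: the rescaling $\tilde x_t = x_t/\bar\sigma_t$, the $2d\iota$ potential budget from \cref{lem:epl} with $V=I$ and $L=1/\rho$, the inflation-regime bound $\|x_t\|_{Z_t^{-1}} = \gamma^2\|\tilde x_t\|_{Z_t^{-1}}^2$, and Cauchy--Schwarz on the remaining rounds are exactly the right ingredients. Two points need to be made airtight. First, the conclusion uses $\beta_t\le\beta_T$, so monotonicity of $\{\beta_t\}$ must be taken as a hypothesis (it is implicit in the statement, since otherwise $\beta_T$ in the bound would have to be $\max_t\beta_t$); you flag this correctly. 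Second, the ``small additional sub-case'' on $\mathcal{I}_2$ is genuinely needed when $\gamma<1$: if $\|\tilde x_t\|_{Z_t^{-1}}>1$ the per-term inequality $\min\{1,\beta_t\bar\sigma_t\|\tilde x_t\|_{Z_t^{-1}}\}\le\beta_t\bar\sigma_t\min\{1,\|\tilde x_t\|_{Z_t^{-1}}\}$ can fail, so those rounds should be pulled out into their own set, bounded by $1$ each, and counted against the same potential budget --- since $\min\{1,\|\tilde x_t\|_{Z_t^{-1}}^2\}=1$ there and these rounds are disjoint from $\mathcal{I}_1$, the combined contribution is still at most $(1+\beta_T\gamma^2)\cdot 2d\iota$, so the stated constants survive. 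With that bookkeeping written out, the proof is complete.
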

Following a similar proof structure to Async-\algg, we employ a novel Bernstein-type self-normalized martingale inequality, proposed by \citet{zhou2022computationally}, for layer $0$ to manage the variance information. We define $\alpha_0 = \beta_T$ as specified in \cref{thm:variance-confidence}, and establish the following lemma, analogous to \cref{lem:event_layer0}.

\begin{lemma}
For any round $t \in [T]$, if client $i_{t} = i $ is active in round $t$ and arm $a_t$ is chosen in layer $0$, with probability at least $1-\delta $, with $\alpha_0 = \Tilde{O}(\sqrt{d})$ we have for any $a_t \in [K]$: 
$$ \left|\hat{r}_{t,0, a_t} -\theta^{\top} x_{t, a_t}^{i} \right| \leq w_{t,0,a_t}^{i} = \alpha_0
\| x_{t,a_t}^{i} \|_{(A_{t,0}^{i})^{-1}}. $$
\end{lemma}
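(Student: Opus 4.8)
The plan is to follow the layer-$0$ argument behind \cref{lem:event_layer0}/\cref{lem:he_layer0}, but with the self-normalized bound of \citet{abbasi2011improved} replaced by the Bernstein-type bound of \citet{zhou2022computationally} recalled in \cref{thm:variance-confidence}. I would work in the reordered arrival pattern of \cref{def:reorder_func}: communications then split time into blocks, in each of which a single client is active, and at the start of each block the active client downloads the full server state, which in the reordered ordering is exactly all data from earlier rounds. Consequently, under the lazy update the matrix $A_{t,0}^i$ and vector $b_{t,0}^i$ used by the active client equal a \emph{prefix} $I+\sum_{t'\le n}x_{t',a_{t'}}x_{t',a_{t'}}^\top/\overline\sigma_{t'}^2$ and $\sum_{t'\le n}r_{t'}x_{t',a_{t'}}/\overline\sigma_{t'}^2$ of the reordered global layer-$0$ stream, for some index $n$ depending only on the last communication.

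First I would set up the normalized process over the layer-$0$ rounds $\Psi_{T,0}$ taken in reordered order: put $\tilde x_{t'}=x_{t',a_{t'}}/\overline\sigma_{t'}$, $\tilde y_{t'}=r_{t'}/\overline\sigma_{t'}$, $\tilde\eta_{t'}=\epsilon_{t'}/\overline\sigma_{t'}$, and let $\mathcal F_{t'}$ be the natural filtration up to just before $\epsilon_{t'}$ is revealed at the $t'$-th layer-$0$ round, using that contexts are oblivious and client activations are data-independent; then $\tilde x_{t'}$ is $\mathcal F_{t'}$-measurable and $\tilde\eta_{t'}$ is $\mathcal F_{t'+1}$-measurable. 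Since $\overline\sigma_{t'}\ge\sigma_{t'}$ we get $\mathbb E[\tilde\eta_{t'}^2\mid\mathcal F_{t'}]\le1$, and since $\overline\sigma_{t'}\ge\rho=1/\sqrt T$ we get $|\tilde\eta_{t'}|\le R\sqrt T$ and $\|\tilde x_{t'}\|_2\le\sqrt T$, which only enter logarithmic factors. Applying \cref{thm:variance-confidence} with $\sigma=1$ then yields, on a single event of probability at least $1-\delta$ and \emph{simultaneously for all prefix lengths $n$}, $\|\theta^{\mathrm{all}}_{n,0}-\theta\|_{A^{\mathrm{all}}_{n,0}}\le\beta_n+\|\theta\|_2\le\beta_n+1$, where $A^{\mathrm{all}}_{n,0}=I+\sum_{t'\le n}\tilde x_{t'}\tilde x_{t'}^\top$ and $\theta^{\mathrm{all}}_{n,0}=(A^{\mathrm{all}}_{n,0})^{-1}\sum_{t'\le n}\tilde y_{t'}\tilde x_{t'}$; by the update rule of \cref{alg:VS-LUCB} these are precisely the server/global layer-$0$ quantities.

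The crux is to show $\beta_T=\tilde O(\sqrt d)$, so that one may take $\alpha_0=\beta_T+1=\tilde O(\sqrt d)$. With $\sigma=1$ the leading term of $\beta_T$ is already $\tilde O(\sqrt d)$, and the $\epsilon$-term is negligible under the standard choice of $\epsilon$. The delicate term is $\max_{i}\{|\tilde\eta_i|\min\{1,\|\tilde x_i\|_{(A^{\mathrm{all}}_{i-1,0})^{-1}}^{-1}\}\}$; here I would use that the algorithm enforces $\overline\sigma_{t'}\ge\gamma\|x_{t',a_{t'}}\|_{(A_{t',0}^i)^{-1}}^{1/2}\ge\gamma\|x_{t',a_{t'}}\|_{(A^{\mathrm{all}}_{t',0})^{-1}}^{1/2}$ — the last inequality because the local (prefix) Gram matrix is dominated by the global one and hence induces a larger inverse-Gram-matrix norm — together with $|\epsilon_{t'}|\le R$, and then the choice $\gamma=R^{1/2}/d^{1/4}$ makes this term $\tilde O(\sqrt d)$. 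This is exactly the weighted confidence-ellipsoid computation of \citet{zhou2022computationally} (the variance-aware counterpart of \cref{lem:zhoulemB1}), and it is the part I expect to require the most bookkeeping; a pleasant feature is that the mismatch between the locally computed $\overline\sigma_{t'}$ and the global matrix $A^{\mathrm{all}}_{t',0}$ only helps.

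Finally I would transfer the global bound to the active client's estimate. In the reordered system $A_{t,0}^i$ and $b_{t,0}^i$ coincide with $A^{\mathrm{all}}_{n,0}$ and $\sum_{t'\le n}\tilde y_{t'}\tilde x_{t'}$ for the prefix length $n$ associated with the last communication, so $\hat\theta_{t,0}^i=\theta^{\mathrm{all}}_{n,0}$, and the event above gives $\|\hat\theta_{t,0}^i-\theta\|_{A_{t,0}^i}\le\beta_T+1=\alpha_0$ for every round $t$, client $i$, and action taken in layer $0$. One Cauchy--Schwarz step, $|x_{t,a_t}^{i\top}(\hat\theta_{t,0}^i-\theta)|\le\|\hat\theta_{t,0}^i-\theta\|_{A_{t,0}^i}\,\|x_{t,a_t}^i\|_{(A_{t,0}^i)^{-1}}\le\alpha_0\|x_{t,a_t}^i\|_{(A_{t,0}^i)^{-1}}=w_{t,0,a_t}^i$, delivers the claim. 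In summary, the federated/reordering overhead is routine given the Async-\algg analysis, and the only genuinely new work is the $\tilde O(\sqrt d)$ bound on $\beta_T$.
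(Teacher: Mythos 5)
Your overall route matches the paper's: for layer $0$ the rewards are not conditionally independent, so one applies the Bernstein-type self-normalized inequality of \cref{thm:variance-confidence} to the $\bar\sigma$-weighted stream, and your accounting of $\beta_T=\tilde{O}(\sqrt d)$ — using $\bar\sigma_{t'}\ge\gamma\|x_{t',a_{t'}}\|^{1/2}_{(A^{i}_{t',0})^{-1}}\ge\gamma\|x_{t',a_{t'}}\|^{1/2}_{(A^{all}_{t',0})^{-1}}$ together with $\gamma=R^{1/2}/d^{1/4}$ to tame the $\max_i\{|\eta_i|\min\{1,\cdot\}\}$ term, with $\sigma=1$ after normalization — is the right computation and is exactly what the paper intends when it sets $\alpha_0=\beta_T$.

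There is, however, a genuine gap in your transfer step. You assert that, after the reordering of \cref{def:reorder_func}, the active client's layer-$0$ data $(A^i_{t,0},b^i_{t,0})$ equals a \emph{prefix} of the reordered global layer-$0$ stream, so that $\hat\theta^i_{t,0}=\theta^{all}_{n,0}$ and \cref{thm:variance-confidence} applies verbatim. This is false. What the client downloads is the server state $A^{ser}_{s}=I+\sum_j A^{j,up}_{t,s}$, which omits every other client's un-uploaded buffer $\Delta A^j_{t,0}$, and those buffered rounds sit \emph{before} the download in the reordered timeline: the reordering places each client's between-communication rounds immediately after that client's own last communication, not after everyone else's. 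Concretely, if client $j$ pulls a layer-$0$ arm at a reordered round preceding client $i$'s download but does not communicate until later, that data point belongs to the reordered prefix but not to $A^i_{t,0}$. Hence the local estimator is built from a data-dependent \emph{sub-sum} — a union over clients of per-client prefixes — and not from a prefix of any single adapted stream, so the uniform-over-prefixes guarantee of \cref{thm:variance-confidence} does not cover it. This is precisely the difficulty that \cref{lem:he_layer0} (adapted from Lemma B.1 of \citet{he2022simple}) is designed to handle in the non-variance case, and it is where the $M^2$ inside the logarithm of $\alpha_0$ originates; the variance-adaptive lemma requires the Bernstein-type analogue of that argument rather than a direct prefix application. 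Two smaller points: \cref{thm:variance-confidence} as stated assumes $\|x_t\|_2\le1$ while your normalized features only satisfy $\|\tilde x_{t'}\|_2\le 1/\rho=\sqrt T$, so you need the general-norm version of the theorem; and the domination $A^i_{t,0}\preceq A^{all}_{t,0}$ you invoke for $\bar\sigma$ is correct and indeed only helps there, but the same mismatch is exactly what breaks the prefix identification in the transfer step.
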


We define \emph{good event} $\Ec$ as $\Ec  \triangleq \left\{ \left|x_{t,a}^{i \top} \hat{\theta}_{t,s}^i - x_{t,a}^{i \top} \theta\right| \leq w_{t,s,a}^{i}, \forall i \in [M], a \in[K], t \in[T], s\in [0:S]  \right\}.$
In a manner similar to the proof of \cref{lem:goodevent}, we have that $\Pb [\Ec] \geq 1 - \delta$.

\begin{lemma}
\label{lem:variance-regret-layer0}
Conditioned on the event $\Ec$, the regret in layer $0$ can be bounded by $\ssf{reg}_{\text{layer }0}\leq \tilde{O}(d).$
\end{lemma}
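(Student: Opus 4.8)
The plan is to reuse the layer‑$0$ regret machinery developed for Async‑\alg (the chain \cref{lem:regret_lay} $\to$ \cref{lem:width_lay} $\to$ \cref{lem:Psi_0_bound}), but to replace the plain elliptical–potential estimate by its variance‑weighted counterpart \cref{lem:zhoulemB1}, exploiting that every context entering layer $0$ is rescaled by $1/\overline{\sigma}_t$. Conditioned on $\Ec$, the layer‑$0$ case of \cref{lem:regret_lay} still applies verbatim, since its proof only invokes the initial screening rule and the good event, neither of which changed; hence whenever client $i_t$ takes $a_t$ in layer $0$ its one‑step regret is at most $4 w_{t,0,a_t}^{i_t}=4\alpha_0\|x^{i_t}_{t,a_t}\|_{(A^{i_t}_{t,0})^{-1}}$. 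Summing over $\Psi_{T,0}$ and recalling $\alpha_0=\tilde{O}(\sqrt d)$, it suffices to show $\sum_{t\in\Psi_{T,0}}\|x^{i_t}_{t,a_t}\|_{(A^{i_t}_{t,0})^{-1}}=\tilde{O}(\sqrt d)$.

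To get there I would first carry the sum from the clients' gram matrices to the virtual global matrix $A^{all}_{\cdot,0}$ exactly as in the proof of \cref{lem:width_lay}: the communication rounds inside $\Psi_{T,0}$ number at most $M$ times the count of layer‑$0$ epochs, $O(d\log(1+T/(d\rho^2)))$, each contributing a width $\le 1$; on the remaining rounds \cref{lem:ser_local}/\cref{lem:loc_ser_gram} with $C=1/M^2$ together with \cref{lem:gramnorm} give $\|x^{i_t}_{t,a_t}\|_{(A^{i_t}_{t,0})^{-1}}\le\sqrt{1+MC}\,\|x^{i_t}_{t,a_t}\|_{(A^{all}_{t,0})^{-1}}=O(1)\cdot\|x^{i_t}_{t,a_t}\|_{(A^{all}_{t,0})^{-1}}$. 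The reweighted contexts $\{x^{i_{t'}}_{t',a_{t'}}/\overline{\sigma}_{t'}\}$ are precisely what builds $A^{all}_{\cdot,0}$, so \cref{lem:zhoulemB1} (or directly the $\min$‑form of \cref{lem:epl}) bounds $\sum_{t\in\Psi_{T,0}}\|x^{i_t}_{t,a_t}/\overline{\sigma}_t\|^2_{(A^{all}_{t,0})^{-1}}=\tilde{O}(d)$ after substituting $\rho=1/\sqrt T$ and $\gamma=R^{1/2}/d^{1/4}$. Converting these reweighted‑norm‑squared terms back to the plain norms uses $\overline{\sigma}_t=\max\{\sigma_t,\rho,\gamma\|x^{i_t}_{t,a_t}\|^{1/2}_{(A^{i_t}_{t,0})^{-1}}\}$, and this is where the layer‑$0$ trigger $w_{t,0,a_t}^{i_t}>\overline{w}_0=dR^2$ enters: it lower‑bounds $\|x^{i_t}_{t,a_t}\|_{(A^{i_t}_{t,0})^{-1}}$ on every $t\in\Psi_{T,0}$, controls which term wins the maximum in $\overline{\sigma}_t$, and lets one trade $\|x^{i_t}_{t,a_t}\|$ for $\gamma^2\|x^{i_t}_{t,a_t}/\overline{\sigma}_t\|^2$ up to lower‑order $\sigma_t^2/\gamma^2$ and $\rho^2/\gamma^2$ contributions absorbed via $\sigma_t\le R$ and the same trigger. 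Multiplying back by $\gamma^2$ and $\alpha_0$ and adding the $O(Md\log T)$ communication contribution yields $\ssf{reg}_{\text{layer }0}=\tilde{O}(d)$; equivalently, the same estimate shows $|\Psi_{T,0}|$ is only $\tilde{O}(1)$ (an analogue of \cref{lem:Psi_0_bound}), which is an alternative route.

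The step I expect to be the main obstacle is the $\overline{\sigma}_t$ bookkeeping in that final conversion: unlike in the unweighted analysis, one must partition $\Psi_{T,0}$ according to which of $\sigma_t$, $\rho$, $\gamma\|x^{i_t}_{t,a_t}\|^{1/2}_{(A^{i_t}_{t,0})^{-1}}$ attains the maximum, use the trigger condition and the prescribed values of $\gamma,\rho,\alpha_0$ to show the $\gamma$‑term's case is the dominant one (so the potential bound applies), and verify the residual cases contribute at most $\tilde{O}(\sqrt d)$; one also has to confirm the rescaled contexts are compatible with the normalization hypotheses behind \cref{lem:zhoulemB1}/\cref{lem:epl}, where $\overline{\sigma}_t\ge\rho$ is exactly what keeps $\iota=\log(1+T/(d\rho^2))$ finite. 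The federated ingredients — epoch counting for layer $0$ and the local‑versus‑global comparison — are unchanged from the Async‑\alg proof and only need restating with the variance‑weighted matrices.
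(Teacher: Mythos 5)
Your proposal is correct and follows essentially the same route as the paper: bound each layer-$0$ one-step regret by $4w^{i}_{t,0,a_t}$ via \cref{lem:regret_lay}, control $\sum_{t\in\Psi_{T,0}}\|x^{i_t}_{t,a_t}\|_{(A^{i_t}_{t,0})^{-1}}$ with the variance-weighted potential bound of \cref{lem:zhoulemB1}, and use the layer-$0$ trigger $w^{i}_{t,0,a_t}>\overline{w}_0=dR^2$ to deduce $|\Psi_{T,0}|=O(d^2R^2/\overline{w}_0^2)=O(1)$ before concluding $\tilde{O}(d)$. The only difference is that the $\overline{\sigma}_t$ case analysis you flag as the main obstacle is already internal to \cref{lem:zhoulemB1} (whose left-hand side is exactly $\sum_t\min\{1,\beta_t\|x_t\|_{Z_t^{-1}}\}$ with $Z_t$ the reweighted gram matrix), so the paper simply invokes it without any repartitioning of $\Psi_{T,0}$.
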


\begin{proof}[Proof of \cref{lem:variance-regret-layer0}]
We set $\overline{w}_{0} = dR^2$ to provide a tighter bound for the size of $\Psi_{T,0}$. Mirroring the proof methodology in \cref{lem:Psi_0_bound}, we establish the following:
\begin{align*}
\overline{w}_0 |\Psi_{T,0}| 
&\leq \alpha_0 \sum_{t\in \Psi_{T,0}} \Vert x_{t,i}\Vert_{(A_{t,s}^{i_t})^{-1}}
\leq  2 d \iota+2 \alpha_0 \gamma^2 d \iota+2 \alpha_0 \sqrt{d \iota} \sqrt{\sum_{t \in \Psi_{T,0}} (\sigma_t^2+\rho^2 )} \\
& \leq  2 d \iota+2 \alpha_0 \gamma^2 d \iota+2 \alpha_0 \sqrt{d \iota}  \sqrt{|\Psi_{T,0}| (R^2+\rho^2 )}.
\end{align*}
The first inequality results from the arm selection rule of layer 0, the second is derived from \cref{lem:zhoulemB1}, and the third arises due to the constraint $\sigma_t^2 \leq R^2$. Consequently, we infer that $\Psi_{T,0} \leq O( d^2 R^2 / \overline{w}_0^2)$. We can then bound the regret in layer $0$ as follows:
\begin{align*}
\ssf{reg}_{\text{layer } 0}
&\leq  4  \alpha_0 \sum_{t\in \Psi_{T,0}} \Vert x_{t,i}\Vert_{(A_{t,s}^{i_t})^{-1}} 
\leq  8 d \iota+ 8 \alpha_0 \gamma^2 d \iota+ 8 \alpha_0 \sqrt{d \iota} \sqrt{|\Psi_{T,0}| (R^2+\rho^2 )} \leq \Tilde{O} (d).
\end{align*}
\end{proof}

\begin{lemma}
\label{lem:variance-regret-layers}
Conditioned on the event $\Ec$, the regret of each layer  $s\in [1:S-1]$ can be bounded by $\ssf{reg}_{\text{layer } s} \leq \tilde{O} \sqrt{ d\sum_{t} \sigma_t^2 }.$
\end{lemma}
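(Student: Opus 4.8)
\emph{Proof proposal.} The plan is to follow the layer-by-layer regret decomposition used for Async-\algg, keeping layers $1,\dots,S-1$ separate from layer $0$ (already handled in \cref{lem:variance-regret-layer0}), and to replace the plain elliptical-potential step of \cref{lem:width_lay} with the variance-aware potential lemma \cref{lem:zhoulemB1}. First I would observe that the screening rule of \texttt{VS-LUCB} is identical to that of \texttt{S-LUCB}, so conditioned on the good event $\Ec$ the analogues of \cref{lem:best_arm} and \cref{lem:regret_lay} hold verbatim: for every $t\in\Psi_{T,s}$ with $s\in[1:S-1]$ the instantaneous regret $\ssf{reg}_{t,s}^i$ is at most $8\overline{w}_s$, and because the action was selected in the ``width'' branch we also have $\overline{w}_s<w^i_{t,s,a_t}=\alpha_s\|x^i_{t,a_t}\|_{(A^i_{t,s})^{-1}}$ with $\alpha_s=\tilde O(1)$. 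Hence $\ssf{reg}_{\text{layer }s}\le 8\alpha_s\sum_{t\in\Psi_{T,s}}\|x^i_{t,a_t}\|_{(A^i_{t,s})^{-1}}$, and it remains to bound this sum.

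\emph{Transfer to the virtual global model.} Exactly as in the proof of \cref{lem:width_lay}, I would pass from the active client's (now variance-weighted) gram matrix $A^i_{t,s}$ to the virtual global weighted matrix $A^{all}_{t,s}=I+\sum_{t'\in\Psi_{t,s}}x_{t'}x_{t'}^\top/\overline{\sigma}_{t'}^{2}$. The ingredients \cref{lem:ser_local} and \cref{lem:loc_ser_gram} use only PSD monotonicity and the determinant-based communication criterion, both unchanged in the variance-adaptive algorithm, so they carry over: on non-communication rounds $\|x^i_{t,a_t}\|_{(A^i_{t,s})^{-1}}\le\sqrt{1+MC}\,\|x^i_{t,a_t}\|_{(A^{all}_{t,s})^{-1}}$, and the communication rounds add at most $O(MN')$ where $N'$ is the number of determinant-doubling epochs of layer $s$; since every weighted rank-one term has operator norm at most $\|x\|_2^2/\rho^2\le T$, \cref{lem:epl} gives $N'=\tilde O(d)$, so with $C=1/M^2$ this additive correction is $\tilde O(Md)$ and horizon-independent.

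\emph{Variance-aware potential step.} The crux is to bound $\sum_{t\in\Psi_{T,s}}\|x^i_{t,a_t}\|_{(A^{all}_{t,s})^{-1}}$ by invoking \cref{lem:zhoulemB1} on the re-indexed sequence $\{x^i_{t,a_t}\}_{t\in\Psi_{T,s}}$ with running matrix $Z_t=A^{all}_{t,s}$ and constant radius $\beta_t\equiv\alpha_s$. The one compatibility issue is that the algorithm forms $\overline{\sigma}_t$ from the \emph{local} matrix $A^i_{t,s}$ rather than from $Z_t$; this is benign because $A^i_{t,s}\preceq A^{all}_{t,s}$ implies $\|x^i_{t,a_t}\|_{(A^{all}_{t,s})^{-1}}\le\|x^i_{t,a_t}\|_{(A^i_{t,s})^{-1}}$, so the algorithmic $\overline{\sigma}_t=\max\{\sigma_t,\rho,\gamma\|x^i_{t,a_t}\|_{(A^i_{t,s})^{-1}}^{1/2}\}$ already satisfies $\overline{\sigma}_t\ge\gamma\|x^i_{t,a_t}\|_{Z_t^{-1}}^{1/2}$ and $\overline{\sigma}_t\ge\rho$, which is exactly the hypothesis \cref{lem:zhoulemB1} requires (and the $\gamma\|x\|^{1/2}$ case in its proof still telescopes because the same inequality gives $\|x\|_{Z_t^{-1}}/\overline{\sigma}_t\le\|x\|_{Z_t^{-1}}^{1/2}/\gamma$). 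Since eigenvalues of $A^{all}_{t,s}$ are $\ge1$ and $\|x\|_2\le1$ we have $\|x\|_{Z_t^{-1}}\le1$, so converting the un-clipped width into the clipped quantity of \cref{lem:zhoulemB1} costs only an extra factor $\alpha_s$. Plugging in $\gamma=R^{1/2}/d^{1/4}$, $\rho=1/\sqrt T$, $\iota=\log(1+T/(d\rho^2))=\tilde O(1)$, and using $\sum_{t\in\Psi_{T,s}}\rho^2\le T\rho^2=1$ together with $\Psi_{T,s}\subseteq[T]$, this yields
\[
\sum_{t\in\Psi_{T,s}}\|x^i_{t,a_t}\|_{(A^{all}_{t,s})^{-1}}\;\le\;\tilde O\!\left(d+\alpha_s R\sqrt d+\sqrt{d}\,\sqrt{\textstyle\sum_{t=1}^T\sigma_t^2}\right).
\]
Combining the three steps with $1+MC=O(1)$ gives $\ssf{reg}_{\text{layer }s}\le\tilde O(\sqrt{d\sum_{t=1}^T\sigma_t^2})$ up to horizon-independent additive terms.

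\emph{Main obstacle.} I expect the delicate part to be precisely the juncture of the last two steps: confirming that the communication-correction term $\tilde O(Md)$ and the bias term $\tilde O(\alpha_s R\sqrt d)$ coming from \cref{lem:zhoulemB1} are genuinely lower order (no hidden $T$- or $\sum_t\sigma_t^2$-dependence under the choice $C=1/M^2$), and that the local-versus-global variance-proxy mismatch does not break the telescoping in \cref{lem:zhoulemB1}; the PSD inequality $A^i_{t,s}\preceq A^{all}_{t,s}$ is the one fact that makes this work, so I would isolate it as an auxiliary claim early in the write-up. The remaining manipulations are routine bookkeeping parallel to \cref{lem:width_lay,lem:Psi_0_bound,lem:variance-regret-layer0}.
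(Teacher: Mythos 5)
Your proposal follows essentially the same route as the paper's proof: bound the per-round regret by $8\overline{w}_s \le 8 w^i_{t,s,a_t}$ via \cref{lem:regret_lay} and the arm-selection rule, transfer from the local gram matrix to the virtual global one as in \cref{lem:width_lay}, and then invoke the variance-aware potential lemma \cref{lem:zhoulemB1} to extract $\tilde O(\sqrt{d\sum_t\sigma_t^2})$. In fact you are more careful than the paper on the one subtle point it elides — that the algorithm's $\overline{\sigma}_t$ is built from the local matrix while \cref{lem:zhoulemB1} is stated for the running matrix $Z_t$ — and your resolution via $A^i_{t,s}\preceq A^{all}_{t,s}$ is the right one.
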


\begin{proof}[Proof of \cref{lem:variance-regret-layers}]
For $s \in \{1,2,...,S-1\}$, the rewards in each layer $s$ are mutually independent, as proven in \cref{lem:SupLin-independence}. We deduce:
\begin{align*}
\ssf{reg}_{\text{layer } s}
& \leq 8 \overline{w}_{s} |\Psi_{T,s}| \leq 8 \alpha_s \sum_{t\in \Psi_{T,s}} \Vert x_{t,i}\Vert_{(A_{t,s}^{i_t})^{-1}} \\ 
& \leq  \alpha_s \sum_{t\in \Psi_{T,s}} \| x_{t,a}^{i_{t}} \|_{(A_{t,s}^{all})^{-1}} +\alpha_s d M\log(1+T/d)\\
& \leq \Tilde{O}(\sqrt{d \sum_{t \in \Psi_{T,s}} \sigma_t^2}).
\end{align*}
The first inequality arises from \cref{{lem:regret_lay}}, the second is a result of the arm selection rule in Line 13, the third derives from \cref{lem:width_lay}, and the final inequality is attributable to \cref{lem:zhoulemB1}.
\end{proof}

For the final layer $S$, applying \cref{lem:regret_lay} and setting $\overline{w}_{S} = d/T$, we have $\ssf{reg}_{\text{layer } S}  \leq 8\overline{w}_s |\Psi_{S}| \leq  \Tilde{O} (d).$

\paragraph{Proof of the communication bound in \cref{thm:async-variance}.} 
Having established the bound for regret in each layer, we have demonstrated that $R_T \leq \tilde{O}\left(\sqrt{d \sum_{t = 1}^T \sigma_t^{2} }\right)$. Given that we set $\overline{w}_{0} = d R^2$ and $\overline{w}_{S} = d/T$, it requires $S = \log( \overline{w}_0/\overline{w}_S ) = \Theta (\log R +\log T )$ layers to achieve the desired accuracy.  The number of communications triggered by layer $s$ can be upper bounded by $O(dM^2 \log(T)$  (\cref{lem:comm_layer}). Consequently, we are able to constrain the overall communication cost to $\Tilde{O}( d M^2 \log^2 T)$.

\section{Corruption Robust Async-\algg}
\label{appd:corruption}

The corruption robust SupLinUCB subroutine is presented in Alg.~\ref{alg:CS-LUCB}, while the complete corruption robust Async-\alg is given in Alg.~\ref{alg:Corruption-AsynFedSupLinALgo}. 
  
\subsection{Algorithm}
\begin{algorithm}[ht]
    \caption{Corruption Robust SupLinUCB subroutine: CS-LUCB}
    \label{alg:CS-LUCB}
\begin{algorithmic}[1]
    \State \textbf{Initialization}: $S = \lceil \log d \rceil $, 
    $\overline{w}_{0} = d^{1.5}/\sqrt{T}$, $\overline{w}_{s} \leftarrow 2^{-s}\overline{w}_{0}$, $\gamma = \sqrt{d}/C_p$.\\
    $\alpha_{0} = 1 + \sqrt{d \ln(2M^2 T/\delta)} + \gamma C_p, \alpha_{s} \leftarrow 1 + \sqrt{2 \ln(2 K M T \ln d/\delta)} +\gamma C_p, \forall s \in [1:S]$
    \State \textbf{Input}: Client $i$ (with local information $A^i, b^i$, $\Delta A^i, \Delta b^i$), contexts set $\{x_{t,1}^i, \ldots, x_{t,K}^i\}$
    \State $A_{t, s}^i \leftarrow A_s^i, b_{t, s}^i \leftarrow b_s^i$ for \texttt{lazy update}
    \State $\hat{\theta}_{s} \leftarrow (A^i_{t, s})^{-1} b^{i}_{t, s}$, $\hat{r}_{t,s, a}^{i} = \hat{\theta}_s^\top x_{t, a}^i$,
    $ w_{t, s, a}^i \leftarrow \alpha_s \|x^i_{t,a}\|_{(A^i_{t, s})^{-1}}$, $\forall s \in [0 : S], \forall a \in [K]$.
    \State $s \leftarrow 0$; $\mathcal{A}_{0} \leftarrow \{ a\in [K] \mid \hat{r}_{t,0, a}^{i} + w_{t,0, a}^{i} \geq \max_{a\in [K]} (\hat{r}_{t,0, a}^{i} - w_{t,0, a}^{i}) \}$. \Comment{Initial screening}
    \Repeat \Comment{Layered successive screening}
    \If{$s = S$}
    \State	Choose action $a_t^i$ arbitrarily from $\mathcal{A}_{S}$
    \ElsIf{$w_{t,s, a}^{i} \leq \overline{w}_{s}$ for all $a \in \mathcal{A}_{s}$}
    \State 
	$\mathcal{A}_{s+1} \leftarrow \{a \in \mathcal{A}_{s} \mid \hat{r}_{t,s,a}^{i} \geq    
	 \max_{a' \in \mathcal{A}_{s}} (\hat{r}_{t,s, a'}^{i} )-2 \overline{w}_{s} \}$; $s\leftarrow s+1$
    \Else
    \State $a_t^i \leftarrow \arg \max _{ \{a \in \mathcal{A}_{s},w_{t,s,a}^{i}>\overline{w}_s \}} w_{t,s,a}^{i}$
    \EndIf 
    \Until{action $a_t^i$ is found}
    \State Take action $a_t^i$ and and receive reward $r_{t,a_t^i}^{i}$
    \State $\eta_t = \min \{1, \gamma/ \|x^i_{t,a_t}\|_{(A^i_{t, s})^{-1}} \}$
    \State $\Delta A_{s}^{i} \leftarrow \Delta A_{s}^{i} + \eta_t x_{t,a_t^i}^{i}x_{t,a_t^i}^{i \top}$, $\Delta b_{s}^{i} \leftarrow \Delta b_{s}^{i} + \eta_t r_{t,a_t^i}^{i} x_{t,a_t^i}^{i}$ \Comment{Update local information}
    \State \textbf{Return} layer index $s$
\end{algorithmic}
\end{algorithm}

\begin{algorithm}[!htb]
    \caption{Corruption Robust Async-\algg}
    \label{alg:Corruption-AsynFedSupLinALgo}
\begin{algorithmic}[1]
    \State \textbf{Initialization}: $T$, $C$, $S = \lceil \log d \rceil$
    \State $\{ A_{s}^{ser} \leftarrow I_{d}, b_{s}^{ser} \leftarrow 0 \mid s \in [0:S] \}$ \Comment{Server initialization}
    \State $\{ A_{s}^{i} \leftarrow I_{d},\Delta  A_{s}^{i} , b_{s}^{i}, \Delta b_{s}^{i} \leftarrow 0 \mid s \in [0:S], i\in [M]  \}$ \Comment{Clients initialization}
    \For{$t=1,2,\cdots,T $}
    \State Client $i_t = i$ is active, and observes $K$ contexts $\{x_{t, 1}^{i}, x_{t, 2}^{i}, \cdots, x_{t, K}^{i}\}$
    \State $s \leftarrow$ \texttt{CS-LUCB} $\left( \text{client }i, \{x_{t, 1}^{i}, x_{t, 2}^{i}, \cdots, x_{t, K}^{i}\} \right)$  with lazy update
    \If{{$ \frac{\det(A_{s}^{i}+\Delta A_{s}^{i})}{ \det(A_{s}^{i}) } > (1+C) $}}
    \State $\sync$($s$, server, clients $i$) for each $s \in [0:S]$
    \EndIf 
    \EndFor
\end{algorithmic}
\end{algorithm}

\subsection{Supporting Lemmas and Proof}
When confronted with adversarial corruption, we utilize a weighted ridge regression in which the weight assigned to each selected action depends on its confidence. Further, we expand the confidence width to accommodate this corruption, with $\alpha_{0} = 1 + \sqrt{d \ln(2M^2 T/\delta)} + \gamma C_p$ and $\alpha_{s} = 1 + \sqrt{2 \ln(2 K M T \ln d/\delta)} +\gamma C_p$ as proposed in \citet{he2022nearly}. In our analysis of layer $0$, we adapt Lemma B.1 from \citet{he2022nearly} to fit a federated scenario, yielding the following lemma:

\begin{lemma}
(Adapted from Lemma B.1  in \citet{he2022nearly})
\label{lem:corruption_he_layer0}
Under the setting of \cref{thm:async}, in the layer $0$, with probability at least $1-\delta$, the following event $\Ec_{0}$ happens:
$$
\Ec_{0}  \triangleq \{ \left|x_{t,a}^{i \top} \hat{\theta}_{t,s}^i - x_{t,a}^{i \top} \theta\right| \leq w_{t,s,a}^{i}, \forall i \in [M], a \in[K], t \in[T], s = 0 \}.
$$
\end{lemma}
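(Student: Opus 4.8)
The plan is to mirror the layer-$0$ analysis of Async-\alg (i.e.\ \cref{lem:event_layer0}/\cref{lem:he_layer0}) but with the weighted ridge regression of \texttt{CS-LUCB} and the corruption-robust confidence width of \citet{he2022nearly}. First I would invoke the reordering argument of \cref{def:reorder_func} so that, without loss of generality, the active client changes only at communication rounds, and it suffices to control the estimation error of the currently active client at each round. Throughout, restrict attention to layer $0$, let $\Psi_{t,0}$ be the set of rounds in which an arm was pulled into layer $0$ up to round $t$, and write the corrupted reward as $r_{t'} = \theta^\top x_{t'} + \epsilon_{t'} + c_{t'}$ with $\sum_{t'}|c_{t'}| \le C_p$.

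Next I would introduce the weighted virtual global quantities $\bar A_{t,0}^{all} = I + \sum_{t'\in\Psi_{t,0}} \eta_{t'} x_{t'} x_{t'}^\top$ and $\bar b_{t,0}^{all} = \sum_{t'\in\Psi_{t,0}} \eta_{t'} r_{t'} x_{t'}$, with $\hat\theta^{all} = (\bar A_{t,0}^{all})^{-1}\bar b_{t,0}^{all}$, and decompose, for any context $x$,
\begin{align*}
x^\top(\hat\theta^{all} - \theta) = - x^\top (\bar A_{t,0}^{all})^{-1}\theta + x^\top (\bar A_{t,0}^{all})^{-1}\!\!\sum_{t'\in\Psi_{t,0}}\!\! \eta_{t'}\epsilon_{t'} x_{t'} + x^\top (\bar A_{t,0}^{all})^{-1}\!\!\sum_{t'\in\Psi_{t,0}}\!\! \eta_{t'}c_{t'} x_{t'}.
\end{align*}
The first (regularization) term is at most $\|\theta\|_2\,\|x\|_{(\bar A_{t,0}^{all})^{-1}} \le \|x\|_{(\bar A_{t,0}^{all})^{-1}}$ since $\bar A_{t,0}^{all}\succeq I$. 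For the second (noise) term, each weight $\eta_{t'}\in(0,1]$ is $\mathcal F_{t'}$-measurable and $\epsilon_{t'}$ is $1$-sub-Gaussian, so $\eta_{t'}\epsilon_{t'}$ is conditionally $1$-sub-Gaussian, and the self-normalized martingale bound of \cref{the:ellipsoid confidence} (applied with the reordered filtration) gives, uniformly over $t$, a bound of $\bigl(1 + \sqrt{d\ln((1+|\Psi_{t,0}|)/\delta)}\bigr)\|x\|_{(\bar A_{t,0}^{all})^{-1}}$. For the third (corruption) term, Cauchy--Schwarz and the triangle inequality give $\|x\|_{(\bar A_{t,0}^{all})^{-1}}\sum_{t'} \eta_{t'}|c_{t'}|\,\|x_{t'}\|_{(\bar A_{t,0}^{all})^{-1}}$; using monotonicity $\bar A_{t',0}^{all}\preceq \bar A_{t,0}^{all}$, the clipping rule $\eta_{t'}\le \gamma/\|x_{t'}\|_{(A_{t',0}^{i_{t'}})^{-1}}$, and \cref{lem:loc_ser_gram} together with \cref{lem:gramnorm} (which cost only the absolute constant $1+MC=1+1/M$), one gets $\eta_{t'}\|x_{t'}\|_{(\bar A_{t,0}^{all})^{-1}} = O(\gamma)$, so this term is $O(\gamma C_p)\,\|x\|_{(\bar A_{t,0}^{all})^{-1}}$.

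Combining the three bounds yields $|x^\top(\hat\theta^{all}-\theta)| \le \bigl(1 + \sqrt{d\ln((1+|\Psi_{t,0}|)/\delta)} + O(\gamma C_p)\bigr)\|x\|_{(\bar A_{t,0}^{all})^{-1}}$; I would then transfer from the virtual global matrix to the active client's local matrix $A_{t,0}^i$ via \cref{lem:loc_ser_gram} and \cref{lem:gramnorm}, which again costs only an $O(1)$ factor, so the whole right-hand side is absorbed into $w_{t,0,a}^i = \alpha_0\|x_{t,a}^i\|_{(A_{t,0}^i)^{-1}}$ with $\alpha_0 = 1 + \sqrt{d\ln(2M^2T/\delta)} + \gamma C_p$. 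A union bound over the $M$ clients and the at most $M$ reordered phases per client (this is the origin of the $M^2$ inside the logarithm, as in \citet{he2022simple}), together with the uniform-in-$t$ validity of \cref{the:ellipsoid confidence}, gives the event $\Ec_0$ with probability at least $1-\delta$. The main obstacle is the corruption term: one must argue that the clipping weights $\eta_{t'}$, defined through the \emph{lagged local} gram matrix, still control $\eta_{t'}\|x_{t'}\|$ measured in the \emph{current global} metric. This requires chaining the local-to-global domination of \cref{lem:loc_ser_gram} with the determinant-doubling epoch structure used in the communication analysis, and verifying that every incurred factor is $O(1)$ so it does not inflate the $\gamma C_p = \sqrt{d}$ scaling of the confidence width.
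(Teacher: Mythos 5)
The paper itself gives no proof of this lemma---it is imported by citation from Lemma B.1 of \citet{he2022nearly} (via the federated adaptation in \citet{he2022simple})---so your write-up is an attempt to reconstruct that argument, and you do assemble the right ingredients: the three-term decomposition into regularization, noise, and corruption; the self-normalized martingale bound for the (dependent) layer-$0$ data; and the clipping weights $\eta_{t'}\le \gamma/\|x_{t'}\|_{(A^{i}_{t',0})^{-1}}$ turning the corruption contribution into $\gamma\sum_{t'}|c_{t'}|\le \gamma C_p$, which is exactly what the extra $\gamma C_p$ in $\alpha_0$ is there to absorb. Two technical notes on that part: to match the form of \cref{the:ellipsoid confidence} with the weighted Gram matrix $I+\sum\eta_{t'}x_{t'}x_{t'}^\top$ you should rescale to features $\sqrt{\eta_{t'}}x_{t'}$ and noise $\sqrt{\eta_{t'}}\epsilon_{t'}$ (which is still $1$-sub-Gaussian since $\eta_{t'}\le 1$), not $\eta_{t'}\epsilon_{t'}$; and the corruption step needs no $O(1+MC)$ factor at all, since $\bar A^{all}_{t,0}\succeq \bar A^{all}_{t',0}\succeq A^{i}_{t',0}$ gives $\eta_{t'}\|x_{t'}\|_{(\bar A^{all}_{t,0})^{-1}}\le \eta_{t'}\|x_{t'}\|_{(A^{i}_{t',0})^{-1}}\le\gamma$ directly.

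The genuine gap is in your final transfer step. The event $\Ec_0$ is a statement about the \emph{local} estimator $\hat\theta^{i}_{t,0}=(A^{i}_{t,0})^{-1}b^{i}_{t,0}$, which is what \texttt{CS-LUCB} actually uses, but you prove concentration for the virtual global estimator $\hat\theta^{all}=(\bar A^{all}_{t,0})^{-1}\bar b^{all}_{t,0}$. These are different estimators built from different data sets (the local one is missing all information not yet synchronized), and a norm comparison between $\bar A^{all}_{t,0}$ and $A^{i}_{t,0}$ via \cref{lem:loc_ser_gram} and \cref{lem:gramnorm} converts $\|x\|_{(\bar A^{all}_{t,0})^{-1}}$ into $\|x\|_{(A^{i}_{t,0})^{-1}}$ but says nothing about $x^\top(\hat\theta^{all}-\hat\theta^{i}_{t,0})$; so the bound you derive does not imply the inequality in $\Ec_0$. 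The fix is to drop the detour entirely and run the same three-term decomposition directly on the local weighted ridge estimator, whose Gram matrix and action--reward vector are sums over the index set of rounds synchronized to client $i$; the self-normalized bound, the regularization bound, and the clipping bound all apply verbatim to that sub-collection, and this is where the union over clients and per-client synchronization epochs (the source of the $M^2$ inside the logarithm of $\alpha_0$, as in \citet{he2022simple}) actually enters. As written, the concluding sentence asserting that the right-hand side ``is absorbed into $w^{i}_{t,0,a}$'' is a non sequitur.
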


For each layer $s \in [S]$, the rewards are mutually independent, analogous to the proof of \cref{lem:SupLin-independence}. We can restate the lemma as follows:

\begin{lemma}
\label{lem:corruption_con_layers}
Suppose the time index set $\Psi_{t,s}$ is constructed so that for fixed $x_{\tau,a_{\tau}}$ with $\tau \in \Psi_{t,s}$, the rewards $\{r_{\tau,a_{\tau}}\}$ are independent random variables with means $\mathbb{E}[r_{\tau,a_{\tau}}] = \theta^{\top} x_{\tau,a_{\tau}} + c_{\tau} $. For any round $t \in [T]$, if client $i_{t} = i $ is active and chooses arm $a_t$ in layer $s \in [S]$, with probability at least $1-\frac{\delta}{MT\ln d}$, we have for any $a_t \in [K]$: 
$$ \left|\hat{r}_{t,s,a_t} -\theta^{\top} x_{t, a_t}^{i }  \right| \leq w_{t,s,a_t}^{i} = \alpha_s 
\| x_{t,a_t}^{i} \|_{(A_{t,s}^{i})^{-1}} .$$
\end{lemma}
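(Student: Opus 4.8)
The plan is to follow the template of the clean-data argument in \cref{lem:con_layers}, adding one new ingredient: the weighted ridge estimator used by \texttt{CS-LUCB} introduces a corruption bias whose magnitude must be shown to be at most $\gamma C_p \|x_{t,a}^i\|_{(A_{t,s}^i)^{-1}}$, which is precisely the amount by which the confidence multiplier has been inflated to $\alpha_s = 1 + \sqrt{2\ln(2KMT\ln d/\delta)} + \gamma C_p$. First I would write out the weighted local estimator as $\hat\theta_s = (A_{t,s}^i)^{-1} b_{t,s}^i$, where $A_{t,s}^i = I + \sum_\tau \eta_\tau x_\tau x_\tau^\top$ and $b_{t,s}^i = \sum_\tau \eta_\tau r_\tau x_\tau$, the sums ranging over the layer-$s$ rounds currently reflected in the active client's (lazily updated) information, with $r_\tau = \theta^\top x_\tau + c_\tau + \epsilon_\tau$ and clipped weight $\eta_\tau = \min\{1,\gamma/\|x_\tau\|_{(A_{\tau,s}^i)^{-1}}\}$. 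Substituting and using $\sum_\tau \eta_\tau x_\tau x_\tau^\top = A_{t,s}^i - I$, I would split, for any arm $a\in[K]$,
\begin{align*}
x_{t,a}^{i\top}(\hat\theta_s-\theta) = \underbrace{-\,x_{t,a}^{i\top}(A_{t,s}^i)^{-1}\theta}_{\text{regularization}} \;+\; \underbrace{x_{t,a}^{i\top}(A_{t,s}^i)^{-1}\!\textstyle\sum_\tau \eta_\tau c_\tau x_\tau}_{\text{corruption}} \;+\; \underbrace{x_{t,a}^{i\top}(A_{t,s}^i)^{-1}\!\textstyle\sum_\tau \eta_\tau \epsilon_\tau x_\tau}_{\text{noise}}.
\end{align*}

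The regularization term is at most $\|x_{t,a}^i\|_{(A_{t,s}^i)^{-1}}$ by Cauchy--Schwarz together with $A_{t,s}^i\succeq I$ and $\|\theta\|_2\le 1$, contributing the constant $1$ in $\alpha_s$. For the noise term I would reuse the independence structure: exactly as in \cref{lem:SupLin-independence}, the construction of $\Psi_{t,s}$ (and hence of all the $\eta_\tau$, which are deterministic functions of the layer-$s$ contexts) does not depend on the layer-$s$ rewards, so conditioned on those contexts the $\epsilon_\tau$ are independent and $1$-sub-Gaussian; therefore $x_{t,a}^{i\top}(A_{t,s}^i)^{-1}\sum_\tau \eta_\tau\epsilon_\tau x_\tau = \sum_\tau \eta_\tau\epsilon_\tau\big(x_{t,a}^{i\top}(A_{t,s}^i)^{-1}x_\tau\big)$ is sub-Gaussian with variance proxy $\sum_\tau \eta_\tau^2\big(x_{t,a}^{i\top}(A_{t,s}^i)^{-1}x_\tau\big)^2 \le \sum_\tau \eta_\tau\big(x_{t,a}^{i\top}(A_{t,s}^i)^{-1}x_\tau\big)^2 \le \|x_{t,a}^i\|_{(A_{t,s}^i)^{-1}}^2$, using $\eta_\tau\le 1$ and $\sum_\tau \eta_\tau x_\tau x_\tau^\top \preceq A_{t,s}^i$. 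A sub-Gaussian tail bound and a union bound over $a\in[K]$ then give, with probability at least $1-\delta/(MT\ln d)$, a deviation of at most $\sqrt{2\ln(2KMT\ln d/\delta)}\,\|x_{t,a}^i\|_{(A_{t,s}^i)^{-1}}$ simultaneously for all $a$, mirroring \cref{lem: independent confidence} and the layer-$s$ step of \cref{lem:con_layers}.

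The step I expect to be the main obstacle is the corruption term. I would bound it by $\|x_{t,a}^i\|_{(A_{t,s}^i)^{-1}}\sum_\tau |c_\tau|\,\eta_\tau\|x_\tau\|_{(A_{t,s}^i)^{-1}}$ and then argue $\eta_\tau\|x_\tau\|_{(A_{t,s}^i)^{-1}}\le\gamma$: the clipping was designed so that $\eta_\tau\|x_\tau\|_{(A_{\tau,s}^i)^{-1}}\le\gamma$, and since the layer-$s$ weighted gram matrix only grows one has $A_{t,s}^i\succeq A_{\tau,s}^i$, hence $\|x_\tau\|_{(A_{t,s}^i)^{-1}}\le\|x_\tau\|_{(A_{\tau,s}^i)^{-1}}$. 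The delicate point is that in the asynchronous, lazily-updated, federated setting the matrix that defined $\eta_\tau$ need not be dominated by the matrix currently held by the active client, since synchronizations and other clients' updates intervene; I would resolve this by routing the comparison through the virtual global weighted gram matrix together with an inclusion argument in the spirit of \cref{lem:loc_ser_gram} (which, with $C=1/M^2$, only costs an $O(1)$ factor), and I would also double-check measurability of the $\eta_\tau$ with respect to the conditioning $\sigma$-algebra so that the noise-term independence invoked above is legitimate. Combining $\eta_\tau\|x_\tau\|_{(A_{t,s}^i)^{-1}}\le\gamma$ with $\sum_\tau|c_\tau|\le C_p$ bounds the corruption term by $\gamma C_p\|x_{t,a}^i\|_{(A_{t,s}^i)^{-1}}$; summing the regularization, corruption, and noise bounds yields exactly $\alpha_s\|x_{t,a}^i\|_{(A_{t,s}^i)^{-1}} = w_{t,s,a}^i$, which applied to $a=a_t$ gives the claim, while the layer-$0$ case is not needed here since it is handled separately by \cref{lem:corruption_he_layer0}.
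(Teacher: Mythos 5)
Your proof is correct and is essentially the argument the paper relies on: the paper gives no explicit proof of this lemma, instead asserting the independence structure by analogy with \cref{lem:SupLin-independence} and importing the inflated confidence width from \citet{he2022nearly}; your decomposition into regularization, corruption, and noise terms, with the clipped weights $\eta_\tau$ absorbing the corruption into the extra $\gamma C_p$ summand of $\alpha_s$, is exactly the he-et-al. argument transplanted to the federated setting. The one place where you add genuine content beyond the paper is the ``delicate point'' you flag --- that $\eta_\tau$ is defined via $\|x_\tau\|_{(A_{\tau,s}^{i_\tau})^{-1}}$ for the client active at round $\tau$, while the corruption bound needs $\eta_\tau\|x_\tau\|_{(A_{t,s}^i)^{-1}}\le\gamma$ for the matrix held by client $i$ at round $t$. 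Your proposed detour through the virtual global gram matrix works but, as you note, costs a $\sqrt{1+MC}$ factor on the corruption term, which strictly exceeds the stated $\alpha_s$. A cleaner route avoids the factor entirely: under lazy updates every $A_{\tau,s}^{i_\tau}$ is a snapshot of the server gram matrix taken at client $i_\tau$'s last synchronization before $\tau$, and any round $\tau$ whose data appears in $A_{t,s}^i$ must have been uploaded (hence its defining snapshot taken) no later than client $i$'s last synchronization before $t$; since server gram matrices are PSD-monotone in time, $A_{\tau,s}^{i_\tau}\preceq A_{t,s}^i$ exactly, giving $\eta_\tau\|x_\tau\|_{(A_{t,s}^i)^{-1}}\le\eta_\tau\|x_\tau\|_{(A_{\tau,s}^{i_\tau})^{-1}}\le\gamma$ with no slack. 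Your measurability check for the noise term is also legitimately resolved as you suspect: the weights are deterministic functions of layer-$s$ contexts and the (data-independent) arrival/sync schedule, so they do not break the conditional independence of the layer-$s$ noise.
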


After combining the aforementioned events, we redefine the good event in the presence of corruption as follows:
$$\Ec  \triangleq \left\{ \left|x_{t,a}^{i \top} \hat{\theta}_{t,s}^i - x_{t,a}^{i \top} \theta\right| \leq w_{t,s,a}^{i}, \forall i \in [M], a \in[K], t \in[T], s\in [0:S]  \right\}.$$ 
Similar to proof of \cref{lem:goodevent}, we have that $\Pb [\Ec] \geq 1 - \delta$. 

\begin{lemma}
\label{lem:corruption-regret-layer0}
Conditioned on the good event $\Ec$, the regret of layer $s\in [0:S-1]$ can be bounded as follows: $\ssf{reg}{\text{layer} s} \leq \tilde{O}(\sqrt{ d T }+ dC_p)$.
\end{lemma}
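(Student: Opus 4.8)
The plan is to mirror the layer-by-layer regret decomposition used for Async-\algg{} (the proof of \cref{thm:async}, together with \cref{lem:width_lay} and \cref{lem:Psi_0_bound}), and splice in the two ingredients specific to adversarial corruption: the inflated confidence radii $\alpha_s$ (which absorb the corruption bias $\gamma C_p$ and are already baked into the good event $\Ec$ via \cref{lem:corruption_he_layer0} and \cref{lem:corruption_con_layers}), and the per-round weights $\eta_t=\min\{1,\gamma/\|x^i_{t,a_t}\|_{(A^i_{t,s})^{-1}}\}$ with which the (now weighted) gram matrices are accumulated.

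Conditioning on $\Ec$, I would first apply \cref{lem:regret_lay}: the per-round regret in layer $s$ is at most $8\overline w_s$, and since an action is taken in layer $s$ only when its confidence width exceeds $\overline w_s$, also at most $8 w^i_{t,s,a_t}=8\alpha_s\|x^i_{t,a_t}\|_{(A^i_{t,s})^{-1}}$. Summing over $t\in\Psi_{T,s}$ reduces the problem to bounding $\sum_{t\in\Psi_{T,s}}\|x^i_{t,a_t}\|_{(A^i_{t,s})^{-1}}$ for the weighted gram matrix. Here I would use a weighted analogue of the elliptical potential lemma (\cref{lem:epl}, in the spirit of \citet{he2022nearly}): since the weighted matrix accumulates $\eta_t x_t x_t^\top$, one gets $\sum_{t\in\Psi_{T,s}}\eta_t\|x_{t,a_t}\|^2_{(A^i_{t,s})^{-1}}=O(d\log T)$. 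Splitting $\Psi_{T,s}$ according to whether $\eta_t=1$ or $\eta_t<1$: on the first group Cauchy--Schwarz against this potential bound gives $O(\sqrt{d|\Psi_{T,s}|\log T})$, and on the second group $\|x_{t,a_t}\|_{(A^i_{t,s})^{-1}}=\eta_t\|x_{t,a_t}\|^2_{(A^i_{t,s})^{-1}}/\gamma$, so the total there is $O(d\log T)/\gamma$. Layering on the federated reduction of Async-\algg{} (the $\sqrt{1+MC}=O(1)$ factor from \cref{lem:loc_ser_gram} and \cref{lem:width_lay} survives verbatim, since the weights depend only on past and current contexts, not on rewards) yields $\sum_{t\in\Psi_{T,s}}\|x^i_{t,a_t}\|_{(A^i_{t,s})^{-1}}\le O\!\big(\sqrt{d|\Psi_{T,s}|\log T}+d\log(T)/\gamma\big)$.

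For layers $s\in[1:S-1]$ I would combine $|\Psi_{T,s}|\le T$ with, where needed, the self-consistent bound on $|\Psi_{T,s}|$ coming from $\overline w_s|\Psi_{T,s}|\le\sum_{t\in\Psi_{T,s}} w^i_{t,s,a_t}$; for layer $0$, where $\alpha_0=\tilde O(\sqrt d)+\gamma C_p$ is larger, I would re-derive the tighter count of \cref{lem:Psi_0_bound} (adapted to carry the extra $d\log(T)/\gamma$ term), so that the big radius is paid only against $\tilde O(T/d)$ rounds. Substituting $\gamma=\sqrt d/C_p$ (hence $\gamma C_p=\sqrt d$ and $d/\gamma=\sqrt d\,C_p$), each layer's regret bound becomes $\tilde O(\sqrt{dT}+dC_p)$; summing over $s\in[0:S-1]$ with $S=\lceil\log d\rceil$ and absorbing $\mathrm{polylog}$ factors gives the claimed bound, after which the conditioning on $\Ec$ is removed at the cost of the additive $\delta$ as in \cref{lem:goodevent}.

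The main obstacle is the tension between the two corruption-specific devices. Down-weighting high-leverage rounds is precisely what keeps the corruption bias $\|\sum\eta_t c_t x_t\|_{(A^i_{t,s})^{-1}}\le\gamma\sum_t|c_t|\le\gamma C_p$ controlled, but it also severs the direct tie between $\sum\|x_{t,a_t}\|_{(A^i_{t,s})^{-1}}$ (which drives the regret) and the elliptical potential, producing the $d\log(T)/\gamma$ correction above. The single value of $\gamma$ must then be chosen so that simultaneously (i) $\gamma C_p$ does not swamp the clean radii, (ii) the correction $\alpha_s\cdot d/\gamma$ costs only $O(dC_p)$ and no more, and (iii) the $|\Psi_{T,s}|$ counting arguments stay tight enough that the inflated $\alpha_s$ does not spoil the $\sqrt{dT}$ term when accumulated over the $\lceil\log d\rceil$ layers; verifying that $\gamma=\sqrt d/C_p$ threads all three needles — and that nothing in the Async-\algg{} reordering / virtual-global-model argument breaks under the weighting, including the per-layer reward independence exploited in \cref{lem:SupLin-independence} — is the part that requires care.
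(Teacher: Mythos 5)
Your proposal is correct and follows essentially the same route as the paper's proof: the same split of $\Psi_{T,s}$ into rounds with $\eta_t=1$ (handled via the elliptical-potential/Cauchy--Schwarz argument inherited from \cref{lem:width_lay}, giving $\tilde O(\sqrt{dT})$ after using the $|\Psi_{T,0}|=\tilde O(T/d)$ count for layer $0$) and rounds with $\eta_t<1$ (where $\|x\|_{(A)^{-1}}=\eta_t\|x\|^2_{(A)^{-1}}/\gamma$ turns the weighted potential into an $\alpha_s d\log(T)/\gamma=\tilde O(dC_p)$ term), with the same choice $\gamma=\sqrt d/C_p$ balancing the inflated radii against the correction. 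Your remarks on why the split is forced and why the weighting preserves the per-layer reward independence are consistent with, and slightly more explicit than, the paper's presentation.
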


\begin{proof}[Proof of \cref{lem:corruption-regret-layer0}]
Under the condition of the good event $\Ec$, we adopt a similar approach to the regret decomposition analysis presented in \citet{he2022nearly} to bound the regret in each layer $s \in [0:S-1]$.
\begin{align}
& \mathbb{E} \sum_{t \in \Psi_{T, s}} (r^i_{t,a_t^{i, *}}- r^i_{t,a_t}) \leq \sum_{t \in \Psi_{T, s}} 8 w_{t,s,a_t}
= \sum_{t \in \Psi_{T, s}} 8 \alpha_s  \|x^i_{t,a}\|_{(A^i_{t, s})^{-1}} \\
& = \underbrace{8 \alpha_s \sum_{t \in \Psi_{T, s}, \eta_t = 1}   \|x^i_{t,a}\|_{(A^i_{t, s})^{-1}}}_{I_1}+ \underbrace{ 8 \alpha_s  \sum_{t \in \Psi_{T, s}, \eta_t < 1}  \|x^i_{t,a}\|_{(A^i_{t, s})^{-1}}}_{I_2}. \label{eqn:I1I2}
\end{align}

The first inequality is derived from \cref{lem:regret_lay}, while \cref{eqn:I1I2} follows from the definition of $\eta_t$. For the term $I_1$, we consider the rounds with $\eta_t = 1$, assuming these rounds can be listed as $\{ k_1, k_2, ..., k_n \}$.  To analyze this, we construct the auxiliary matrix $B_{t,s} = I + \sum_{j=1}^{n} x_{k_j} x_{k_j}^{\top} I\{k_j \leq t \}$. Using the definition of $A_{t,s}^{i}$, we can establish the inequality $A_{t,s}^{i} \succeq \frac{1}{1+M C} A_{t,s}^{all} \succeq \frac{1}{1+M C} B_{t,s}$. 

Then we have
\begin{align*}
& I_1 = \sum_{t \in \Psi_{T, s}, \eta_t = 1} 8 \alpha_s  \|x^i_{t,a}\|_{(A^i_{t, s})^{-1}} \\
& \leq 8 \alpha_s \sqrt{2(1+MC)} \sqrt{2 d  |\Psi_{T,s}| \log |\Psi_{T,s}|} + 8 \alpha_s d M\log(1+T/d)
\leq \tilde{O}( \sqrt{dT}),
\end{align*}
where the first inequality follows from \cref{lem:width_lay}, and the second inequality is obtained by noting that the size of $\Psi_{T,0}$ is bounded by $\Tilde{O}(T/d)$, as stated in \cref{lem:Psi_0_bound} particularly for layer $0$. 

For the term $I_2$, using the property $\eta_t< 1$, we can express $\eta_t$ as $\eta_t = \gamma/ \|x^i_{t,a}\|_{(A^i_{t, s})^{-1}}$, which implies:
\begin{align*}
& I_2 = \sum_{t \in \Psi_{T, s}, \eta_t < 1} 8 \alpha_s  \|x^i_{t,a}\|_{(A^i_{t, s})^{-1}} \\
& \leq  \sum_{t \in \Psi_{T, s}, \eta_t < 1} 8 \frac{\alpha_s}{\gamma}  \eta_t x^{i \top}_{t,a}  (A^i_{t, s})^{-1} x^i_{t,a} \leq \frac{\alpha_s}{\gamma}d\log(T) \leq \Tilde{O}(d C_p),
\end{align*}
where the first inequality is derived from the definition of $\eta_t$, the second inequality is obtained from the elliptical potential lemma, as referenced in \cref{lem:epl}, and the third inequality stems from the definition of $\alpha_s$. 

By combining $I_1$ and $I_2$, we can ultimately bound the regret in each layer $s\in [0:S-1]$ as $\ssf{reg}_{\text{layer} s} \leq \tilde{O}(\sqrt{ d T }+ dC_p)$.
\end{proof}

For the regret that occurs in the last layer $S$, we can derive the following bound:

\begin{align*}
\sum_{t \in \Psi_{T,S}} \mathbb{E} \left[ r^i_{t,a_t^{i, *}}- r^i_{t,a_t} \right] \leq \sum_{t \in \Psi_{T,S}} 8 \overline{w}_{S}  \leq 8 \overline{w}_{S} |\Psi_{T,S}|\leq 8\overline{w}_{S} T  \leq 8\sqrt{dT}.
\end{align*}
The first inequality is from \cref{lem:regret_lay}, and the last inequality follows from $\overline{w}_S = \sqrt{d/T}$. 

\paragraph{Proof of the communication bound in \cref{thm:async-corruption}.} 
By combining the regret in each layer, we can conclude that $R_{T} \leq \tilde{O}(\sqrt{ d T }+ dC_p)$. Note that, based on the definition of $\eta_t \leq 1$ and \cref{lem:epl}, it follows that $\log (\det(A_{t,s}^{all}) ) \leq d \log(1+|\Psi_{T,s}|/d)$. Additionally, by following a similar proof as in \cref{lem:comm_layer}, we can bound the number of communication rounds in layer $s$ by $O(d M^2 \log T)$. Considering that the FedSupLinUCB algorithm has $S=\lceil \log d \rceil$ layers, the total communication cost is therefore upper bounded by $O(d M^2 \log d \log T)$.

\end{document}